\newcommand{\hadamard}{\circ}%
\newcommand{\wreath}{\wr}%
\newcounter{theorem}
\newtheorem{corollary}[theorem]{Corollary}
\newtheorem{defin}[theorem]{Definition}
\newtheorem{remark}[theorem]{Remark}
\newtheorem{lemma}[theorem]{Lemma}
\newtheorem{thm}[theorem]{Theorem}
\newtheorem{fact}[theorem]{Fact}
\title{The Expressive Capacity of State Space Models: A Formal Language Perspective} 
\author{Yash Sarrof, Yana Veitsman, Michael Hahn \\ Saarland Informatics Campus \\ Saarland University, Germany \\ \texttt{\{ysarrof, yanav, mhahn\}@lst.uni-saarland.de}}
\begin{document}

\maketitle

\begin{abstract}
Recently, recurrent models based on linear state space models  (SSMs) have shown promising performance in language modeling (LM), competititve with transformers. However, there is little understanding of the in-principle abilities of such models, which could provide useful guidance to the search for better LM architectures. We present a comprehensive theoretical study of the capacity of such SSMs as it compares to that of transformers and traditional RNNs. We find that SSMs and transformers have overlapping but distinct strengths. In star-free state tracking, SSMs implement length-generalizing solutions to problems that transformers struggle to represent exactly. They can also model bounded hierarchical structure with optimal memory even without simulating a stack. On the other hand, we identify a design choice in current SSMs that limits their expressive power. We discuss implications for SSM and LM research, and verify results empirically \footnote{Code is available at: \href{https://github.com/lacoco-lab/ssm_expressivity}{https://github.com/lacoco-lab/ssm\_expressivity}} on a recent SSM, Mamba.
\end{abstract}

\section{Introduction}

Transformers \citep{vaswani2017attention} power most large language models (LLMs) today, as they offer the advantage of parallelized training by avoiding recurrence, compared to the previously dominant recurrent achitectures \citep[RNNs][]{elman1990finding, hochreiter1997long}.
% After their introduction \citep{vaswani2017attention}, transformers rapidly became the primary workhorse of NLP, powering most of today's large language models (LLMs).
% Compared to previously-dominant recurrent architectures \citep[RNNs][]{elman1990finding, hochreiter1997long}, transformers offered a key advantage:  parallelized training by avoiding recurrence.
However, building on a long history of continuous dynamical models \citep[e.g.][]{kalman1960general,kalman1963mathematical} and work on faster RNNs \citep{bradbury2016quasi, lei2017simple}, a recent line of work has developed \emph{state space models} (SSMs) rivaling the performance of transformers \citep[e.g.][]{gu2021efficiently, Gu2023Mamba, sun2023retentive, De2024Griffin, yang2023gated, qin2024hgrn2}.
These SSMs are recurrent models, formulated in terms of iterative state updates, while still allowing efficient parallelization.

The impressive empirical performance of such SSMs raises the question of whether they might have capabilities that the transformer architecture might lack in principle.
Simultaneously, to understand whether SSMs may plausibly overtake the dominant role of transformers, it is an important question whether SSMs may lack abilities present in transformers.
A better understanding of these questions may also point the way to future architectures that unite the strengths of both architectures.

One common approach to understanding the capabilities of computational architectures is through their expressive capacity in simulating automata and  modeling language classes; indeed, a sizeable literature has studied transformers \citep[e.g.][]{perezturing, hahn2020theoretical, bhattamishra2020ability, Yao_2021, liu2022transformers, Liu2023FlipFlop, deletang2022neural, strobl2023survey, chiang2023tighter, sanford2024representational, peng2024limitations} and  RNNs \citep[e.g.][]{siegelman1991neural, horne1993bounds, indyk1995optimal, weiss2018practical, hewitt2020rnns} through this lens.
As the difficulty of many computational problems is well-understood in terms of such language classes, results about expressive capacity directly yield results about the ability to model specific computational problems.

While a substantial number of results have been obtained for transformers and traditional RNNs, understanding remains largely open for SSMs.
In an initial step, \citet{merrill2024illusion} showed that all problems computable by SSMs are contained in $\operatorname{TC}^0$, a circuit complexity class that is known to also cover transformers \citep{merrill2023logic, strobl2023averagehard}.
Under standard conjectures, this suggests that certain types of state tracking are hard for both models.
\citet{jelassi2024repeat} and \citet{bhattamishra2024separations} provided evidence of differences between these architectures, showing that transformers outperform SSMs on copying or retrieving from long strings--tasks well within $\operatorname{TC}^0$. %In parallel,  %explored  limitations of single-layer SSMs in efficiently performing function composition over large domains. They also 
\citet{zubic2024limits} showed that multi-layer SSMs are constrained by their logarithmic space computational capacity, limiting their ability at algorithmic tasks such as multi-digit multiplication. % when addressing algorithmic tasks such as multi-digit multiplication, dynamic programming, and thus there is a significant degradation of performance despite advanced prompting techniques. 

However, a more fine-grained understanding of the power of SSMs, and how they compare to RNNs and transformers, remains an open question.
%
% \citet{jelassi2024repeat} provided evidence for differences between the architectures, showing that transformers are better than SSMs at the specific problem of copying strings -- a problem well within $\operatorname{TC}^0$. 
%
% However, beyond these results, broader detailed understanding of the power of SSMs and how they compare to RNNs and transformers remains open.
%
%
Our contribution in this paper is to provide rigorous understanding of SSMs' abilities in different classes of languages.
We show that transformers and SSMs cover overlapping but distinct fragments of $\operatorname{TC}^0$.
For instance, SSMs can model bounded hierarchical structure in ways similar to transformers and traditional RNNs, even without embedding a stack-like structure (Theorem~\ref{thm:bounded-dyck}).
For regular languages involving modular counting, such as the PARITY function (Theorem~\ref{thm:parity}), we identify a design choice that makes extant SSMs struggle in ways similar to transformers. 
In other cases, we show that SSMs resolve a failure case of transformers: they effortlessly model Flip Flop state tracking (Theorem~\ref{thm:flip_flop}).
We discuss take-aways for SSM and LLM research in Section~\ref{sec:discussion}; among others, our results suggest future LM architectures might need to combine both attention and state spaces.

\section{Background}

\subsection{State Space Models}\label{sec:background}

\paragraph{SSM Layers}
We define a single layer of a state space model as a map, at input length $T$,
\begin{align*}
\mathbb{R}^{T\times d} \rightarrow \mathbb{R}^{T \times d} & &
    (x_t)_{t=1,\dots,T} \mapsto (z_t)_{t=1,\dots,T}
\end{align*}
given by the recurrence
\begin{align}\label{eq:recurrence-ssm}
    h_t =& A(x_t) \hadamard h_{t-1} + B(x_t) &
    z_t =& \phi(h_t, x_t) 
\end{align}
where $\hadamard$ denotes elementwise product, and, for each $x_t \in \mathbb{R}^d$,
\begin{align*}
h_0 \in& \mathbb{R}^d  & & B(x_t) \in \mathbb{R}^d\ \text{(increment)}\\
    A(x_t) \in& \mathbb{R}^{d}\  \text{(gate)} & & \phi : \mathbb{R}^{2d} \rightarrow \mathbb{R}^{d}\ \text{(transform)} 
\end{align*}
We allow $A, B$ to be arbitrary smooth maps.
The map $\phi(h_t, x_t)$ includes a cascade of channel-mixing transformations and normalization, which we abstract as follows:
\begin{equation}\label{eq:mix}
    \phi(h_t, x_t) = \operatorname{Mix}_1(\operatorname{Norm}(\operatorname{Mix}_2(h_t, x_t)), x_t)
\end{equation}
where $\operatorname{Mix}_j(\cdot)$ can contain linear or (Swi)GLU components \citep[e.g.][]{qin2024hgrn2,Gu2023Mamba}.
We will take $\operatorname{Norm}$ to implement RMSNorm \cite{zhang2019root}; LayerNorm \cite{ba2016layer} can be covered by absorbing centering into $\operatorname{Mix}_2$.

\paragraph{A Full SSM}
Real-world SSMs typically stack several layers of the form (\ref{eq:recurrence-ssm}--\ref{eq:mix}).
Where needed, we use superscripts to indicate the layers in an SSM: $h_t^{(1)}, \dots, h_t^{(L)}$, where $L$ is the number of layers.
We consider input words ${\bf w} = w_{1\dots |w|}$ over a discrete alphabet $\Sigma$, and assume an encoding in terms of token embeddings $e(\sigma) \in \mathbb{R}^d$, for $\sigma\in\Sigma$. We will also write $e_\sigma$ for $e(\sigma)$. 
These feed into the lowest layer as $x_t^{(1)} := e(w_t)$.
The outputs of each layer feed into the next layer, as $x_t^{(l+1)} = z_t^{(l)}$.
The transformations in (\ref{eq:recurrence-ssm}) are specific to each layer: $A^{(1)}, \dots, A^{(L)}$ and similarly for $B, \phi$.
To keep notation simple, we will only show the superscripts where necessary for disambiguation.
The activations $z_t^{(L)}$ at the highest layer are read out by some neural network $\rho$ into vectors $q_t \in \mathbb{R}^{d_{pred}}$ describing classification or next-token predictions.
We again take $\rho$ to be an arbitrary function; importantly, all our constructions will allow $\rho$ to operate correctly even at finite precision.

\paragraph{Implementation Choices}
In Mamba, (\ref{eq:recurrence-ssm}) directly maps onto Eqs. (2a) and (2b) in \citet{Gu2023Mamba}.
The notation of \citet{Gu2023Mamba} use a matrix multiplication $\overline{A} h_{t-1}$ instead of elementwise  multiplication $A(x_t) \hadamard h_{t-1}$ in (\ref{eq:recurrence-ssm}), but importantly, Mamba's $\overline{A}$ is diagonal, so we can take $A(x_t)_i = \overline{A}_{ii}$.
Some  SSMs assume nondiagonal $A(x_t)$, but typically this matrix is diagonalizable \citep[e.g.][]{gu2021efficiently, sun2023retentive}, so that the SSM is still equivalent to one of the form (\ref{eq:recurrence-ssm}).
We discuss how other SSMs instantiate (\ref{eq:recurrence-ssm}) in Appendix~\ref{appendix:unified-ssm}.
Some models assume complex-valued activations (Appendix~\ref{appendix:unified-ssm}); our results largely do not depend on this distinction, but take it into account where needed (Theorem~\ref{thm:parity-complex}).
Some SSMs \citep[e.g.][]{Gu2023Mamba} use different numbers of channels in $x_t$ and $h_t$ using state expansion; as this does not affect expressive capacity, we will simply assume a constant dimensionality $d$.
Local convolutions \citep[e.g.][]{fu2023hungry} can be simulated with an SSM layer and  do not increase expressive capacity (Remark~\ref{app:local-co v}).

We will find that two design choices have nontrivial impact on expressive capacity:
The first one is time invariance: we call an SSM \textsc{time-invariant} if $A(x_t)$ does not depend on $x_t$.
Some SSMs, such as S4 \citep{gu2021efficiently} and Retnet \citep{sun2023retentive} are time-invariant; 
Mamba \citep{Gu2023Mamba}, Griffin \citep{De2024Griffin}, GLA \citep{yang2023gated}, HGRN \citep{qin2024hierarchically, qin2024hgrn2}, QRNN/SRU \cite{bradbury2016quasi, lei2017simple} are  not (Appendix~\ref{appendix:unified-ssm}).
The second one is the sign of the entries of $A(x_t)$:
Across all non-time-invariant SSMs surveyed, we find that the gate is always nonnegative (Appendix~\ref{appendix:unified-ssm}):
$A(x_t) \geq 0$ (\textsc{nonnegative})
due to exponential or sigmoid parameterizations of the gate -- this choice turns out to limit expressive capacity (Theorem~\ref{thm:parity}).

\paragraph{Role of Parameterization}
While the abstract form (\ref{eq:recurrence-ssm}--\ref{eq:mix}) is common across the SSM literature, differences in parameterization may have substantial effect on efficiency and training stability.
In particular, the parameterization of $A(x_t)$ has been the subject of substantial research \citep[e.g.][]{gu2020hippo,gu2021efficiently,yu2023robustifying,wang2023stablessm}.
%For instance, \citet{gu2020hippo} introduced the HiPPO framework which used polynomial projection operators to efficiently model memory across timescales, achieving faster gradient updates. S4 \citep{gu2021efficiently} built on this by decomposing $A(x_t)$ into a diagonal-plus-low-rank structure to speed up training. \citet{yu2023robustifying} improved flexibility by approximating the diagonal matrix in $A(x_t)$ while preserving HiPPO’s theoretical guarantees. \citet{wang2023stablessm} identified stable reparameterizations, such as exponential and softplus, to improve the approximation of long-term dependencies. 
However, studying expressiveness allows us to abstract away from these differences to a remarkable degree: We will allow $A, B, \rho$ to be \emph{arbitrary} functions with the given input-output properties. Our negative results are based on abstract properties of the setup (\ref{eq:recurrence-ssm}--\ref{eq:mix}), which fundamentally bottlenecks SSMs through \emph{elementwise linear} state updates.
For our positive results, will use empirical learnability experiments to verify that learnable solutions instantiating them (though not necessarily implementing the same constructions as used in the proofs) do exist in a recent SSM \citep[Mamba,][]{Gu2023Mamba}.

%\paragraph{Traditional RNNs}
We contrast SSMs with traditional RNNs such as simple RNNs or LSTMs: for these, the recurrence in Eq. (\ref{eq:recurrence-ssm}) is replaced by
$h_t = \psi(h_{t-1},x_t)$
where $\psi$ could be linear, an MLP \citep{elman1990finding}, or a more complex gated function \citep{hochreiter1997long}.

\paragraph{Finite Precision Assumption}
While Eq.(\ref{eq:recurrence-ssm}) assumes arbitrary real-valued activations, real-world implementations can only represent numbers with bounded precision.
Formally, we adopt the \emph{finite precision} notion used by \citet{weiss2018practical} in a study of the expressive power of traditional RNNs:
We allow an unbounded number of integer bits, but only $p$ fractional bits, independent of the length of the input.
See Appendix~\ref{app:precision} for discussion.

\subsection{Modeling Formal languages}\label{sec:motivation}
We study three foundational types of data structures needed for modeling formal languages \citep{hopcroft2001introduction}: finite state automata (Theorem \ref{thm:flip_flop}, \ref{thm:parity}, \ref{thm:regular}), counters (Theorem \ref{thm:counter-languages}), and  stacks (Theorem \ref{thm:bounded-dyck}). These data structures can be understood in two equivalent forms:
One is to track a state sequence over an input, where each input symbol engenders a specific transformation on the state.
The other one, more commonly considered in research on expressive capacity, considers \emph{formal languages}---sets of finite strings that are defined by the property that an automaton reaches one of a pre-specified set of ``accepting'' states after traversing the word. We focus on the latter, enabling easy comparison with existing results on transformers and RNNs.

A \textbf{finite-state-automaton} (see Definition~\ref{def:automaton}) represents a general state tracking problem over a finite state space, without imposing further structure on the state space:
The automaton keeps track of a single state from a finite state space; when reading a string from left to right, each symbol engenders a specific transformation of the state.
At each position, the current state determines which symbols can come next; membership in a formal language is determined by the state reached after reading the full string.
Finite-state-automata are equivalent in expressivity to regular expressions, and define the \textbf{regular languages} \citep{kleene1951representationof}.
% Seems like repetition, since we are going to mention this anyway, might add again, if space permits. 
% Indeed, we will be able to provide a precise criterion identifying which such finite state tracking problems--or equivalently, regular languages--SSMs such as Mamba are capable of in the finite-precision setting (Theorem~\ref{thm:regular}).

Allowing an automaton to keep track of one or more \textbf{counters} \citep{Fischer1968Counter}---integers that are incremented or decremented at each symbol read---turns the state space infinite, but in a highly structured manner.
SSMs can model this datastructure (Theorem~\ref{thm:counter-languages}), as can RNNs and transformers \citep{weiss2018practical, bhattamishra2020ability}.
\textbf{Stacks}, a first-in-first-out datastructure, enable automata to keep track of hierarchical structure, foundational to natural language \citep{lees1957syntactic}.
We show that SSMs can implement shortcut solutions to \emph{bounded} hierarchical structure even without implementing a stack (Theorem~\ref{thm:bounded-dyck}) -- these are likely to be most useful to natural language given the boundedness of human memory \citep{miller1963finitary, karlsson2007constraints}.

\begin{figure}
\begin{center}
    \includegraphics[width=0.8\textwidth]{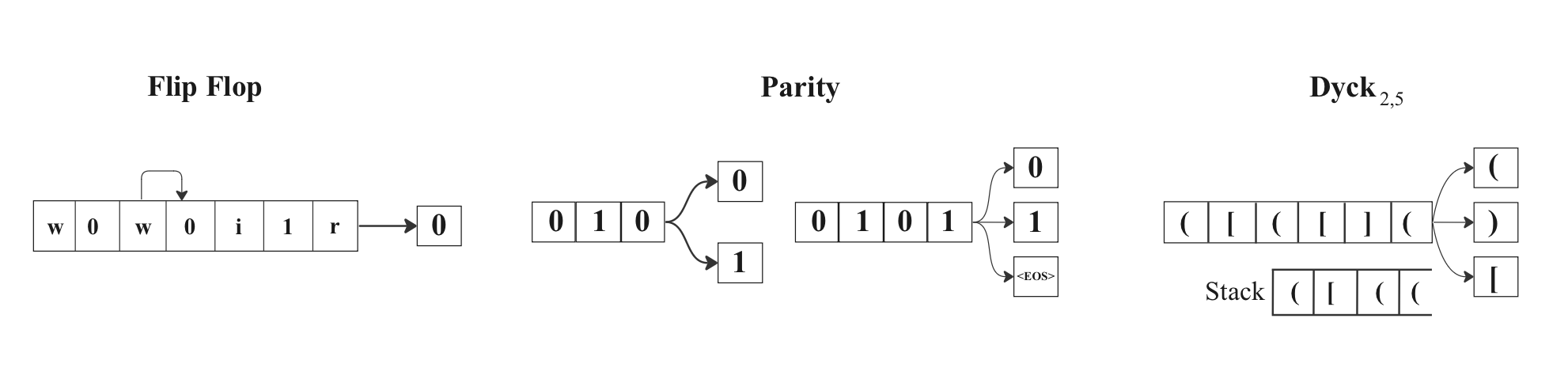}
    \end{center}
    \caption{Three key formal languages: prefixes with the sets of possible next characters: Flip Flop (Theorem~\ref{thm:flip_flop}), PARITY (Theorem~\ref{thm:parity}), bounded-depth Dyck (Theorem~\ref{thm:bounded-dyck}). In Flip Flop, after a \texttt{r} (read) instruction, the bit must match what came after the last \texttt{w} (write) instruction (here, 0). For PARITY, EOS can only follow when the number of ones in the prefix is even. For bounded-depth Dyck, a closing bracket can only appear if it matches the last unclosed opening bracket (here, ``)'' matches ``('')). Opening brackets can appear as long as the maximum depth (here, 5) hasn't been reached.}\label{fig:languages}
\end{figure}

\subsection{Formal Language Prediction and Recognition}
We fix a finite alphabet $\Sigma$.
Its elements are called \emph{characters} or \emph{symbols}.
The set of all finite strings ${\bf w}$ over $\Sigma$ is denoted $\Sigma^*$; such strings are often referred to as \emph{words}. The length of ${\bf w}$ is denoted $|{\bf w}|$.
A \emph{formal language} $L$ is a subset of $\Sigma^*$.
Techically, we assume that the alphabet includes BOS and EOS symbols, which occurs at the beginning and end of each element of $L$ and nowhere else.

We next need to define what it means for an SSM to model a formal language. The notion of \emph{recognition}, where the task is to classify a full string as belonging to the language or not.
Formally, for an SSM with $d_{pred}=1$, we say that it \textbf{recognizes} a language $L$ if the output $\rho(z_{|{\bf w}|}^{(L)})$ equals---when the SSM is run on ${\bf w} \in \Sigma^*$---1 if ${\bf w} \in L$ and 0 else.

However, such a classification task is arguably not always matched to dominant use cases in predictive sequence modeling, where the task is to predict the next token at each step.
Thus, we also cast formal languages into a language modeling and sequence prediction framework. We adopt the task of \citet{bhattamishra2020ability}, where the model is asked to output at each step in a sequence the set of possible next symbols.
Let $\operatorname{Prefix}(L) := \{w : w\in\Sigma^*, w\Sigma^* \cap L \neq \emptyset\}$ the set of valid prefixes of $L$.
We then say that a model \textbf{predictively models} a language $L$ if (Figure~\ref{fig:languages}), given a valid prefix $w \in \operatorname{Prefix}(L)$, it outputs the finite set
    \begin{equation}\label{eq:predictive-set}
    \{\sigma \in \Sigma : w\sigma\Sigma^* \cap L \neq \emptyset\}
    \end{equation}

We think of each such set as an atomic label; the set of possible labels is the power set of the finite alphabet $\Sigma$ (here, $d_{pred}=2^{|\Sigma|}$). Importantly, in both recognition and predictive modeling, we test the SSMs' ability across arbitrary input lengths, i.e. the choice of input length does not affect the inherent capability to recognize or predictively model the language. Predictive modeling can be easily converted into recognition by checking whether any symbol in the sequence is not in the predictive set at the preceding position; this can be done by adding 1 SSM layer.
Conversely, if we can show that SSMs cannot recognize a language, this proves they also cannot perform predictive modeling for it, as they then cannot correctly predict where EOS can appear.
To get the strongest results, we thus prove positive results for \emph{predictive modeling}, and negative results for \emph{recognition}.

\section{Theoretical Results}

% All of this seems repetitive, we are going to talk about all of this again, 
% We begin with  finite-state-automata -- equivalently, regular languages, a well-studied setting for understanding the expressive power of transformers \citep[e.g.][]{hahn2020theoretical, bhattamishra2020ability, angluin2023masked, liu2022transformers, Liu2023FlipFlop} and a fundamental model of state tracking when the number of possible states is finite \citep{Liu2023FlipFlop, merrill2024illusion}. Traditional RNNs can emulate all finite-state-automata at finite precision \citep{horne1993bounds, indyk1995optimal}.
% Starting with two specific languages, Flip Flop (Theorem~\ref{thm:flip_flop}) and PARITY (Theorem~\ref{thm:parity}), we derive an exact characterization of the regular languages modeled by a broad class of SSMs at finite precision (Theorem~\ref{thm:regular}).

\subsection{Length-Generalizing Representations for Flip-Flop State Tracking}\label{sec:flipFlop}

% We begin on the positive side, by establishing a case where SSMs pattern with RNNs in avoiding a failure mode of self-attention.
Flip Flop languages \citep{Liu2023FlipFlop} are a simple instance of state tracking defined in terms of \emph{write}, \emph{read}, and \emph{ignore} instructions. Each \emph{write} instruction comes with a piece of information; whenever a \emph{read} instruction is encountered, the information written by the last \emph{write} instruction is recalled.
Formally, $\mathcal{L}_{FF}$ is the set of finite strings ${\bf x}$ over $\Sigma = \{{\tt r}, {\tt w}, {\tt i}, 0, 1\}$, where $x_1, x_3, \dots \in \{{\tt r}, {\tt w}, {\tt i}\}$, $x_2, x_4, \dots \in \{0,1\}$, and where the bit following any ${\tt r}$ matches the bit following the last preceding occurrence of ${\tt w}$.
\citet{liu2022transformers} show that the Flip Flop language, as an abstraction, is a fundamental ingredient of many long-range reasoning settings.
It can be represented with a small finite-state-automaton, and LSTMs learn $\mathcal{L}_{FF}$  well \citep{Liu2023FlipFlop}.
Transformers can in principle represent it \citep{liu2022transformers, Liu2023FlipFlop}, though known constructions are not inherently length-generalizing, a fact confirmed empirically; intuitively, this may happen because  attention heads aggregate information in a commutative manner, and reliably attending to the last \emph{write} instruction requires strong position dependence in the attention weights.
SSMs, similar to traditional RNNs can easily represent Flip Flop at arbitrary input lengths and thus \textbf{avoid a failure mode of self attention}:

\begin{thm}\label{thm:flip_flop}
There is a two-layer SSM that predictively models $\mathcal{L}_{FF}$ at all lengths, at finite precision.
    %\textcolor{red}{MH: I'll try to give the theorems a more formal flavor}
\end{thm}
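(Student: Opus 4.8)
The plan is to exhibit an explicit two-layer construction in which the two layers play distinct roles: the first layer propagates enough local context that each bit can be tagged as ``written'' or ``not written'', and the second layer uses the multiplicative gate $A(\cdot)$ as a flip-flop register that overwrites its contents exactly at the written bits. The only state an automaton for $\mathcal{L}_{FF}$ must maintain is the value of the bit following the most recent ${\tt w}$; call it $m \in \{0,1\}$. Crucially, the parity of the current position (instruction slot vs.\ bit slot) need not be tracked separately, since the identity of the current symbol $x_t$ already determines whether the next symbol must be a bit (when $x_t\in\{{\tt r},{\tt w},{\tt i}\}$) or an instruction/EOS (when $x_t\in\{0,1\}$). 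Hence a readout with access to $x_t$ and to $m$ can emit the correct predictive set at every position: $\{m\}$ after an ${\tt r}$, $\{0,1\}$ after a ${\tt w}$ or ${\tt i}$, and $\{{\tt r},{\tt w},{\tt i},\mathrm{EOS}\}$ after a bit.

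The difficulty a single layer faces---and the reason two layers are used---is that in (\ref{eq:recurrence-ssm}) the gate and increment at step $t$ depend only on $x_t$, so the update triggered by reading a bit cannot by itself condition on whether that bit was preceded by a ${\tt w}$. A one-layer device tracking the ``last bit seen'' therefore fails on inputs like ${\tt w}\,1\,{\tt i}\,0\,{\tt r}$, where the last written bit ($1$) differs from the last bit ($0$). I would resolve this with the first layer, keeping a channel $c$ that records whether the most recent instruction was a ${\tt w}$: set $A=0,\,B=1$ on ${\tt w}$; $A=0,\,B=0$ on ${\tt r},{\tt i}$; and $A=1,\,B=0$ on a bit (so the value persists across the bit slot). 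Because $A,B$ may be arbitrary smooth maps of the discrete input embedding, these three cases are realizable with gate values in $\{0,1\}$. The transform $\phi^{(1)}$, which receives $x_t$ as a second argument, then outputs a representation $z_t^{(1)}$ that (i) flags the event ``current symbol is a bit and $c=1$'' together with that bit's value, and (ii) passes the original symbol identity through for later use.

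The second layer receives $x_t^{(2)}=z_t^{(1)}$ and implements the register. On a channel $m$ I set $A^{(2)}=0$ and $B^{(2)}=b$ whenever $z_t^{(1)}$ flags a written bit of value $b$, and $A^{(2)}=1,\,B^{(2)}=0$ otherwise; by the elementwise recurrence this overwrites $m$ at exactly the written bits and holds it constant everywhere else, so $h_t^{(2)}$ on this channel equals the required last-written bit. Since $A^{(2)},B^{(2)}$ are again arbitrary functions of $z_t^{(1)}$, this is immediate. Finally $\phi^{(2)}$ forwards both $m$ and the passed-through symbol identity into $z_t^{(2)}$, and the readout $\rho$ computes the predictive set by the case analysis above. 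All gates lie in $\{0,1\}$ and hence are nonnegative, so the construction respects the nonnegativity restriction of current SSMs (and in particular can plausibly be realized by Mamba).

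The step I expect to require the most care is verifying that the discrete tags survive the prescribed form $\phi=\operatorname{Mix}_1(\operatorname{Norm}(\operatorname{Mix}_2(h_t,x_t)),x_t)$ at finite precision: $\operatorname{Norm}$ (RMSNorm) rescales by a possibly irrational factor, so I must check that the finitely many reachable activation patterns remain separated after rounding to $p$ fractional bits, so that $\rho$---allowed to be an arbitrary finite-precision function---can still decode $m$ and the current symbol exactly. Because only finitely many states are reachable and the encoded quantities are $\{0,1\}$-valued before normalization, I can choose the $\operatorname{Mix}$ maps so that the post-norm codes are pairwise separated by a margin exceeding the rounding error, which settles the finite-precision guarantee.
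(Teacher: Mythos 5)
Your construction is essentially the paper's own proof in slightly more economical clothing: the paper also uses the first layer to gate on instruction symbols so that, at each bit position, the output carries both the current bit and the preceding instruction, and then uses the second layer's gate ($A^{(2)}=0$ at written bits, $A^{(2)}=1$ elsewhere) as a set-reset register for the last written bit, with the predictive sets read off by the same case analysis on $x_t$. The paper factors both layers through a general set-reset-automaton lemma (Lemma~\ref{thm:set-reset}) and handles the RMSNorm/finite-precision issue there by a bounded-norm margin argument plus a dummy always-one channel to avoid division by zero---the same margin reasoning you sketch in your final paragraph; your tracking of the single boolean ``last instruction was ${\tt w}$'' instead of the full last instruction is a harmless simplification, since $x_t$ is passed through and suffices for the rest of the readout.

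There is one genuine, though easily repaired, gap: you assert that the only state needed is $m\in\{0,1\}$ and that the readout after an ${\tt r}$ is the singleton $\{m\}$, but you never specify the initialization of the register, and on valid prefixes where an ${\tt r}$ occurs before any ${\tt w}$ (e.g.\ the prefix consisting of a single ${\tt r}$) the matching constraint is vacuous, so the correct predictive set is $\{0,1\}$, not a singleton. A two-valued register cannot represent this ``no write yet'' condition, so your construction as stated outputs a wrong (too small) predictive set on such prefixes. The paper handles exactly this case through the start state $q_0$ of its second set-reset automaton: its readout emits $\{0,1\}$ whenever the automaton is still in $q_0$. The fix in your setting is one sentinel value---encode $m$ over three states (say one-hot over $\{0,1,\perp\}$ with $h_0^{(2)}$ encoding $\perp$, which the $A^{(2)}=1$, $B^{(2)}=0$ updates preserve until the first written bit) and let $\rho$ emit $\{0,1\}$ after ${\tt r}$ when $m=\perp$. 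With that amendment your argument goes through and matches the paper's.
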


In the construction (Figure~\ref{fig:flip_dyck}), the first layer records the last instruction token, achieved in (\ref{eq:recurrence-ssm}) by setting $A(e({\tt r})) = A(e({\tt w})) = A(e({\tt i})) = 0$, and $A(e(0) = A(e(1)) = 1$, and setting $B(e(0)) = B(e(1)) = 0$.
Additional dimensions forward the current token to $h_{t}^{(1)}$.
In the output of the first layer $z_t^{(1)}$, whenever the input is 0 or 1, the model now has access both to the current token $w_t$ and the preceding token $w_{t-1}$, which must have been an instruction.
Based on this information, the model can set the gate to overwrite the state $h_{t-1}^{(2)}$ with the current input token when the preceding token was ${\tt w}$, and pass along the state $h_{t-1}^{(2)}$ unaltered otherwise.
This, together with  $z_t^{(1)}$, is sufficient for always identifying the legal next symbols in $\mathcal{L}_{FF}$.
The formal proof is in Appendix~\ref{sec:app:flipFlop}.

\begin{figure}
\begin{center}
    \includegraphics[width=0.8\textwidth]{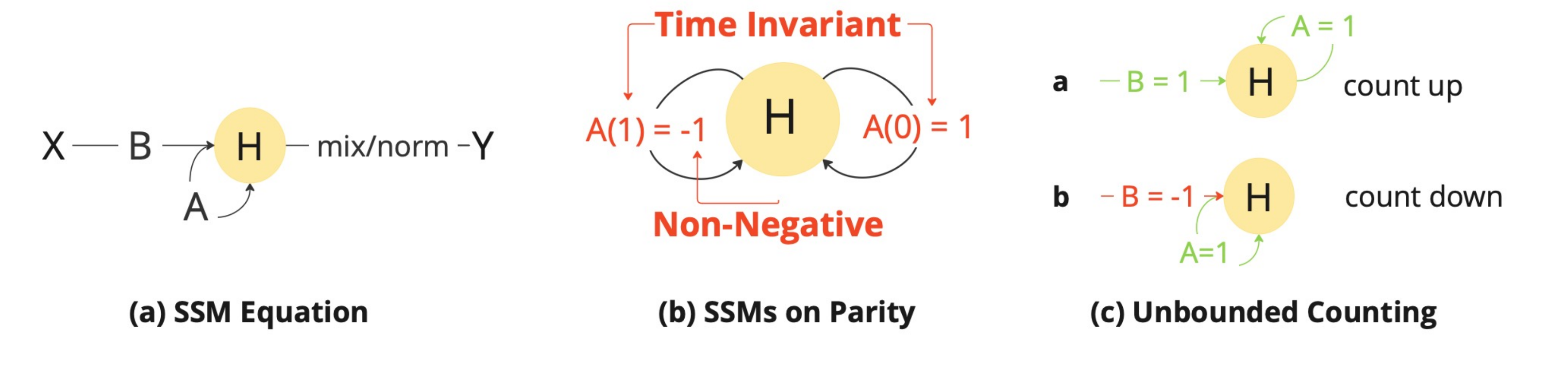}
    \end{center}
    \caption{(a) Visualizing the SSM equations \ref{eq:recurrence-ssm}, \ref{eq:mix}: The hidden state $H$ is updated by a combination of its previous values, transformed by matrix $A$, and the input $X$, modulated by matrix $B$. The updated hidden state and input are then processed through a $Mix(.)$ layer, which can incorporate components like (Swi)GLU or Linear layers, with an optional RMSNorm for normalization. (b) An intuitive construction for recognizing PARITY with SSMs is achieved by setting $B = 0$ and $A = -1$ when the input is $1$, and $A = 1$ otherwise. However, this construction violates both \textsc{nonnegative} and \textsc{time-invariant} properties. We show that one of these properties is provably required to recognize PARITY at arbitrary lengths using an SSM (Theorem~\ref{thm:parity}).
    (c) Modeling $a^nb^n$: the matrix $A$ adds the previous hidden state to the update, and depending on whether the input symbol requires counting up or down, matrix $B$ is set to $1$ or $-1$, thus making the SSM simulate a counter (Theorem~\ref{thm:counter-languages})}
    \label{fig:const_parity_flip}
    \end{figure}

\subsection{Difficulty of PARITY}

% Conversely, we next establish a design choice in SSMs which limits their power in emulating finite-state-automata, establishing -- in the finite-precision setting -- an even stronger separation between existing SSM variants and traditional RNNs than  the circuit complexity arguments in \citet{merrill2024illusion}.
PARITY, the language of bitstrings with an even number of ones, is recognized by a finite-state automaton with 2 states, and is straightforwardly encoded into a traditional RNN, even a linear one, with finite precision. It is in principle expressible for transformers \citep{chiang2022overcoming}, but is empirically hard for transformers to learn \citep{bhattamishra2020ability, deletang2022neural}, as it can provably only be represented in sharp minima \citep{HahnRofin2024Sensitive}.
A sufficiently general SSM could easily recognize it at $d=1$ by setting $h_0=1$, $A(e_1) = -1$, $A(e_0) = 0$, $B\equiv 0$, so that the sign of the single entry of $h_t$  indicates the parity (Figure~\ref{fig:const_parity_flip}).
Such an SSM would need to be non-time-invariant and require negative or complex gate values; i.e., satisfy neither \textsc{time-invariant} nor \textsc{nonnegative}. Thus, these design choices necessitated by optimization, limit the power of an SSM in emulating finite-state-automata, establishing an \textbf{even stronger separation} between existing SSM variants and traditional RNNs than the circuit complexity arguments in \citet{merrill2024illusion}
% Such properties are indeed provably needed: 

\begin{thm}\label{thm:parity}
No SSM satisfying \textsc{nonnegative}  can recognize PARITY at arbitrary input lengths with finite precision.
In particular, this applies to Mamba.
\end{thm}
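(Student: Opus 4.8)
The plan is to argue by contradiction, exploiting the fact that a \textsc{nonnegative} gate forces the state to evolve \emph{monotonically} under a repeated input symbol, whereas recognizing PARITY requires the decision to flip with every additional $1$. Concretely, I would feed the family of strings $\mathrm{BOS}\,1^n\,\mathrm{EOS}$, which lie in PARITY exactly when $n$ is even, so any recognizer must produce final outputs $s_n := \rho(z^{(L)}_{\mathrm{final}})$ that alternate with $n$. It therefore suffices to show that $s_n$ is \emph{eventually constant} in $n$, since an eventually constant sequence cannot alternate; this eventual-constancy statement is the target of the whole argument and the source of the contradiction.

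The core is a two-part lemma applied layer by layer. First (monotonicity): when a layer reads a constant symbol $\sigma$ from a fixed starting state, each coordinate updates by a fixed finite-precision affine map $x \mapsto \mathrm{round}_p(A(e_\sigma)_i\, x + B(e_\sigma)_i)$, which is \emph{non-decreasing} because $A(e_\sigma)_i \ge 0$ by \textsc{nonnegative}; the orbit of a non-decreasing map is monotone in $n$, so $h^{(l)}_n$ is coordinatewise monotone. Second (eventual constancy): a monotone sequence in the discrete set $2^{-p}\mathbb{Z}$ either stabilizes or diverges to $\pm\infty$, and I claim that in either case $z^{(l)}_n = \operatorname{Mix}_1(\operatorname{Norm}(\operatorname{Mix}_2(h^{(l)}_n, \cdot)), \cdot)$ is eventually constant. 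The bounded coordinates stabilize outright; the divergent ones are tamed by RMSNorm, whose output depends only on the \emph{direction} of its argument, which converges as the fastest-growing coordinates come to dominate, and a convergent sequence of finite-precision values is eventually constant. With $z^{(l)}_n$ eventually constant, the next layer reads an eventually constant input, so the same lemma applies; an induction over $l = 1, \dots, L$ then shows every layer's output, hence the post-EOS state and $s_n$, is eventually constant, giving the contradiction.

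I expect the main obstacle to be the divergent-coordinate case of the eventual-constancy step, where the state grows without bound yet $\phi$ and $\rho$ are arbitrary. This is exactly where \textsc{nonnegative} alone does not suffice and the normalization inside $\phi$ must be invoked: indeed, a \textsc{nonnegative} counter with $A(e_1)=1$, $B(e_1)=1$ stores $h_n = n$ exactly, and an unconstrained readout could return $n \bmod 2$, so any valid proof must use RMSNorm to wash out the unbounded magnitude. The careful part is verifying, for each admissible form of $\operatorname{Mix}_2$ (linear or (Swi)GLU), that a coordinatewise-monotone divergent $h_n$ produces a pre-normalization vector whose direction converges -- linear maps act affinely, while the (Swi)GLU gates saturate, so in both cases the growth orders of the coordinates are eventually fixed and the normalized output settles. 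Making the finite-precision rounding interact cleanly with this convergence, so that "converges" genuinely upgrades to "eventually exactly constant," is the remaining delicate bookkeeping.
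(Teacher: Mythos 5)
Your overall strategy coincides with the paper's: run the model on $1^n$, show by induction over layers that every layer's output converges and hence (by finite precision) becomes ultimately constant, and conclude that the final readout cannot alternate with $n$. Your identification of where \textsc{nonnegative} enters, and your observation that the normalization inside $\phi$ is indispensable (the counter $h_n = n$ with an unconstrained $\rho$ would otherwise recognize PARITY), are both correct and match the paper's proof and its Remark on activation functions. Your coordinatewise monotonicity lemma---the orbit of the non-decreasing map $x \mapsto A(e_1)_i\,x + B(e_1)_i$ with $A(e_1)_i \ge 0$ is monotone, hence in $2^{-p}\mathbb{Z}$ it stabilizes or diverges to $\pm\infty$---is a valid substitute for the paper's trichotomy on the raw state coordinates.

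However, there is a genuine gap at exactly the step you flag as ``delicate bookkeeping'': monotonicity is the wrong tool there, and it cannot be patched by more care alone. First, $\operatorname{Mix}_2$ forms linear combinations $u_n = \sum_j \lambda_j (h_n)_j$, and a linear combination of coordinatewise-monotone sequences is in general neither monotone nor free of cancellation; your lemma says nothing about $u_n$. Second, and more fundamentally, coordinatewise-monotone divergence does \emph{not} imply that the direction of the pre-normalization vector converges: take $b_n = n$ and a staircase $a_n$, monotone and diverging, whose ratio $a_n/b_n$ oscillates between $1/2$ and $1$ forever---then $\operatorname{Norm}$ of $(a_n, b_n)$ never settles. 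So the claim ``monotone divergent coordinates are tamed by RMSNorm'' is false at the generality at which your lemma delivers it. What rescues the argument---and what the paper actually uses---is not monotonicity but the \emph{closed form} of the affine orbit: once the previous layer is stationary, $h_n = \alpha^{n-T_0}\bigl(h_{T_0} + \tfrac{\beta}{\alpha-1}\bigr) + \tfrac{\beta}{1-\alpha}$, so every coordinate is exactly of the form $c\,\alpha^n + d$ (or $cn + d$ when $\alpha = 1$). With these explicit asymptotics, cancellations in $u_n$ either erase terms or leave a term with the same exponent, pairwise ratios converge to a constant or to $0$ or $\pm\infty$, and the direction genuinely converges after GLU/SwiGLU saturation---which is the case analysis the paper carries out. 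Your plan is completable by swapping the monotone-orbit lemma for this closed form (you implicitly have the ingredients when you say ``growth orders are eventually fixed,'' but that assertion is precisely what needs the explicit solution, not monotonicity). One further caution: baking the rounding into the recurrence, as you do, makes this harder rather than easier, since per-step rounding errors grow at the same geometric rate as the signal and can perturb the ratios; the paper avoids this by analyzing the exact dynamics and invoking finite precision only at the end, to upgrade convergence of the bounded values $z_n$ to eventual exact constancy.
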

The proof is in Appendix~\ref{sec:app:parity}; it examines inputs of the form $1^N$ and shows that the activations $z_N$ converge as $N\rightarrow \infty$, and thus cannot reliably encode the parity of $N$. It should be noted that we require the layer-wise operations used in the SSM to be either linear or based on the GLU or SwiGLU activation functions, as seen for instance in Mamba (Remark ~\ref{rem:activation_functions}).
As we show in Theorem~\ref{thm:parity-complex}, the same result holds even for SSMs evading \textsc{nonnegative} when they are \textsc{time-invariant}, at least when the coefficients have rational angles in the complex planes.
All extant SSMs we surveyed (Appendix, Section~\ref{appendix:unified-ssm}) satisfy either \textsc{Nonnegative} or \textsc{time-invariant}.
Hypothetical SSMs evading both \textsc{nonnegative} and \textsc{time-invariant} would be strictly stronger and can represent not only PARITY, but \emph{all} regular languages known to be in $\operatorname{TC}^0$ (Theorem~\ref{thm:solvable}).

\begin{figure}
\begin{center}
    \includegraphics[width=0.8\textwidth]{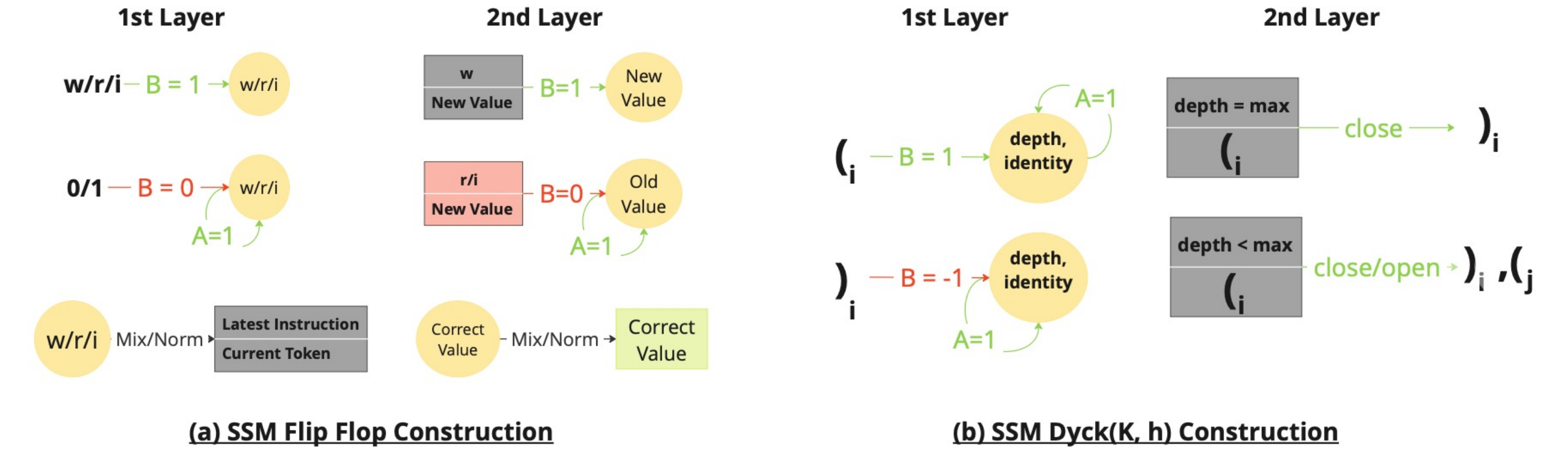}
    \end{center}
    \caption{(a) Construction for Flip-Flop (Theorem~\ref{thm:flip_flop}): The first layer stores instruction bits to the hidden state, while data bits are forwarded to the output. Hence, the output always contains both the latest instruction and the associated data bit. In the second layer, if the instruction bit is \texttt{w}, the corresponding data bit is written to the hidden state, else the old value persists. This allows the model to consistently output the correct data bit. (b) Construction for Dyck(K, h) (Theorem~\ref{thm:bounded-dyck}): The first layer tracks the depth by counting up for each opening bracket, and down for each closing bracket. The second layer builds on the Flip-Flop construction to find the last opening bracket at the current depth; the next symbol can be either the matching closing bracket or -- if the maximum depth has not been reached -- an arbitrary opening bracket.}\label{fig:flip_dyck}
\end{figure}

% \subsection{Which Regular Languages can SSMs Model?}
% This doesn't sound like a result, and doesn't go well with other definitions in the section. 
\subsection{Exact characterization of Regular Languages modeled by SSMs}

We combine Theorems~\ref{thm:flip_flop} and \ref{thm:parity} to derive an exact characterizations of the regular languages that modern non-time-invariant SSMs such as Mamba can recognize or predictively model -- the two notions coincide here -- in the finite-precision setting.
The key insight is that $\mathcal{L}_{FF}$ and PARITY are fundamental building blocks of two classes of regular languages: \emph{star-free languages} and their complement, \emph{non-star-free languages} \citep{schutzenberger1965finite, mcnaughton1971counter}:
\begin{defin}
A regular language is \emph{star-free} if it can be defined using regular expressions involving only the empty set, the empty string, individual symbols, concatenation, and Boolean combinations -- avoiding the Kleene star operation.
\end{defin}
$\mathcal{L}_{FF}$ is star-free: there is a way to define it without Kleene star.
PARITY is not star-free; any regular expression for it must involve the Kleene star.
Some languages that are intuitively defined with Kleene stars may still be star-free.\footnote{For example, $(01)^*$ is star free. It is the union of $\epsilon$ with the intersection of $0\Sigma^*$, $\Sigma^* 1$, with the complements of $\Sigma^*00\Sigma^*$ and $\Sigma^*11\Sigma^*$.}
A language is star-free if and only if it can be defined logically using only first-order quantifiers and the order relation \citep{schutzenberger1965finite}.
Also, $\mathcal{L}$ is non-star-free if and only if recognizing it involves counting modulo some finite integer $K$ \citep{mcnaughton1971counter}; % PARITY being the simplest example.
Modern non-time-invariant SSMs such as Mamba cannot perform modulo counting, but they can model \emph{all} star-free languages:
\begin{thm}\label{thm:regular}
Let $\mathcal{L}$ be a regular language.
The following are equivalent:
\begin{enumerate}
    \item There is an SSM satisfying \textsc{nonnegative} that predictively models $\mathcal{L}$ at all input lengths, at finite precision
    \item $\mathcal{L}$ is star-free.
\end{enumerate}
\end{thm}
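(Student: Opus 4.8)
The plan is to prove the two implications separately, using the classical algebraic theory of star-free languages as the bridge between the two prior results. The backbone is the Schützenberger--McNaughton--Papert characterization already invoked in the excerpt: a regular language is star-free if and only if its syntactic monoid is aperiodic (group-free), equivalently its minimal DFA is \emph{counter-free}, i.e.\ contains no nontrivial cycle $s_0 \xrightarrow{u} s_1 \xrightarrow{u} \cdots \xrightarrow{u} s_{k-1} \xrightarrow{u} s_0$ of length $k\geq 2$ on a repeated word $u$. Theorem~\ref{thm:flip_flop} supplies the positive building block (a reset/memory unit realizable by a \textsc{nonnegative} SSM) and Theorem~\ref{thm:parity}, suitably generalized, supplies the negative one (modular counting is impossible under \textsc{nonnegative}).

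For the direction star-free $\Rightarrow$ SSM, I would start from aperiodicity and invoke the Krohn--Rhodes decomposition in its aperiodic form: the transformation monoid of a counter-free automaton divides an iterated wreath product $U \wreath U \wreath \cdots \wreath U$ of two-state identity-reset (flip-flop) units. I would then simulate this cascade by a deep SSM. Each reset unit is exactly the memory primitive realized in the two-layer construction of Theorem~\ref{thm:flip_flop}---``overwrite the state when the current instruction says reset, otherwise keep it''---whose gate values lie in $\{0,1\}$ and hence satisfy \textsc{nonnegative}. The only generalization needed is that a unit in the cascade reads an enriched but still finite alphabet, namely the original symbol together with the states of the upstream units; these upstream states already live in the hidden state and can be forwarded through the layers by the same token-forwarding mechanism used in Theorem~\ref{thm:flip_flop}. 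Stacking one such block per cascade level yields an SSM whose top-level activation encodes the current state of the minimal DFA; the readout $\rho$ then maps this state to the predictive set~(\ref{eq:predictive-set}). Since every quantity ranges over a fixed finite set, finite precision suffices.

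For the converse, SSM $\Rightarrow$ star-free, I would argue by contraposition. If $\mathcal{L}$ is not star-free, its minimal DFA contains a counter: a word $u$ and distinct states $s_0,\dots,s_{k-1}$ ($k\geq 2$) cyclically permuted by $u$. Choosing a word $w_0$ reaching $s_0$ and, by distinguishability, a suffix $v$ with $s_i v\in F$ but $s_j v\notin F$ for some $i\neq j$, the membership of $w_0 u^{N} v$ in $\mathcal{L}$ becomes a non-constant periodic function of $N\bmod k$. I would then show that no \textsc{nonnegative} SSM reproduces this behavior by extending the convergence argument behind Theorem~\ref{thm:parity}: on the periodic input $w_0 u^{N}$, the per-period state update is an affine map whose linear part is a product of nonnegative diagonal gates, so each coordinate is monotone or convergent and, after the RMS normalization inside $\phi$ and the linear/(Swi)GLU mixing, the normalized layer outputs converge as $N\to\infty$. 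Processing the fixed suffix $v$ preserves this convergence, so at finite precision the recognition output is eventually constant in $N$, contradicting its $k$-periodicity. (Since predictive modeling by a \textsc{nonnegative} SSM entails recognition by one---the one-layer conversion noted in the excerpt uses gates in $\{0,1\}$---ruling out recognition rules out predictive modeling.)

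I expect the two genuinely nontrivial steps to be, first, the faithful realization of the wreath product by stacked SSM layers---in particular routing each upstream unit's state into the downstream reset units and verifying that the composed invariant holds exactly at finite precision---and, second, the generalization of the Theorem~\ref{thm:parity} convergence analysis from constant inputs and modulus two to arbitrary periodic inputs and arbitrary modulus $k$, together with the claim that convergence of the normalized activations survives a fixed-length suffix. Both are extensions of machinery already in place, but the wreath-product simulation is the main obstacle, since it must simultaneously preserve \textsc{nonnegative}, stay within finite precision, and correctly reproduce the cascade semantics at every layer.
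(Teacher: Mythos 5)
Your overall plan coincides with the paper's own proof in both directions. For the converse you argue exactly as the paper's Corollary~\ref{eq:nonnegative-lower-bound-periodic}: pick a McNaughton--Papert counter so that membership of $w_0u^Nv$ is periodic in $N$ modulo $k$, absorb the prefix into $h_0$, run a layer-by-layer induction showing that on periodic inputs the per-period update is eventually affine with nonnegative diagonal linear part, conclude convergence of the normalized activations, check that a fixed-length suffix preserves convergence, and use finite precision to turn convergence into eventual constancy, contradicting periodicity. This direction is correct in outline and matches the paper step for step (your predictive-modeling-to-recognition reduction is also fine, though the paper uses the simpler observation that inability to recognize already precludes predicting where EOS may occur). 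For the forward direction you likewise take the paper's route: Krohn--Rhodes plus Sch\"utzenberger reduces star-free languages to iterated cascades of set-reset units, each unit is realized by an SSM layer with gates in $\{0,1\}$ (the paper's Lemma~\ref{thm:set-reset}), and the cascade is realized by stacking (the paper's Lemma~\ref{thm:cascade-ssm}).

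The genuine gap is in the step you yourself flag as the main obstacle but then dismiss: routing each upstream unit's state into the downstream reset units. The cascade semantics require the downstream automaton, when processing $w_t$, to read the pair $\langle q_{t-1}, w_t\rangle$ --- the upstream state \emph{before} the current update --- whereas the layers simulating $\mathcal{A}_1$ make $q_t$ available at position $t$. Knowing $q_t$ and $w_t$ does not determine $q_{t-1}$, since transition maps need not be injective, so the ``token-forwarding mechanism used in Theorem~\ref{thm:flip_flop}'' does not suffice: in $\mathcal{L}_{FF}$ it is the strict alternation of instruction and bit positions that makes the stored last instruction coincide with $w_{t-1}$, and that is special structure absent in a general cascade. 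Worse, the recurrence (\ref{eq:recurrence-ssm}) is \emph{elementwise}: coordinate $j$ of $h_t$ depends only on coordinate $j$ of $h_{t-1}$, so no single layer can copy a different coordinate of the state with a one-step delay. The paper closes this hole with a dedicated delay layer (Lemma~\ref{thm:readout-last}), which packs the recent input history into the binary expansion of a single coordinate via a decaying gate; the finite-precision subtlety is real, since the natural choice $A=1/2$ fails (after rounding, the second-most-significant bit is unrecoverable, as $0.0111\dots 1$ and $0.1$ are arbitrarily close), and the construction instead takes $A=1/4$ so that the delayed symbol survives normalization with a constant margin. Your appeal to ``every quantity ranges over a fixed finite set'' also glosses over the normalization step, which the paper handles by adding a dummy constant coordinate to bound $\|h_t\|_2$ away from zero. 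Supplying this delay construction turns your sketch into the paper's proof; without it, the cascade simulation does not go through.
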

The proof in Appendix~\ref{sec:acc:star-free} uses the Krohn-Rhodes theorem \citep{krohn1965algebraic} to reduce all star-free languages to flip flop-like state tracking.
Importantly, there are well-known constructive criteria for deciding whether a given automaton defines a star-free language \citep{schutzenberger1965finite}; hence, we have a  \emph{decidable criterion} for the finite-state tracking problems that such SSMs satisfying \textsc{Nonnegative} can solve.

This is much simpler than the situation for transformers, where an exact characterization of their power within the regular languages is complicated:
\citet{angluin2023masked} show that a certain formal abstraction of transformers (masked unique hard attention) also recognizes exactly the star-free languages, but constructions of realistic transformers via Krohn-Rhodes in \citet{liu2022transformers} do not inherently length generalize. Both theoretical \citep{huang2024formal} and empirical research indicate difficulties in generalizing even for some simple star-free languages \citep{bhattamishra2020ability, Liu2023FlipFlop}.
Known length-generalizing constructions are limited to very simple subclasses such as the piecewise testable languages \citep{yang2024counting}.
In contrast, for SSMs we have a single model per language, at finite precision and for arbitrarily long inputs.
Thus, we expect that the SSM architecture confers an advantage in star-free state tracking problems when compared to transformers -- a prediction we will find supported experimentally (Figure~\ref{fig:results-bhattamishra}).

% \subsection{Unbounded Counting}
% Again, I want the title to somehow hint towards the result, as it would help in readability.
\subsection{SSMs can perform unbounded counting}
Having characterized the regular languages modeled by SSMs, we now consider languages requiring unbounded counting \citep{Fischer1968Counter}, specifically, languages recognized by keeping track of one or more counters, where each character causes a specific increment or decrement to each counter \citep{DBLP:conf/stacs/KrebsLL15, DBLP:conf/mfcs/HahnKLL15,weiss2018practical,kutrib2021input}.
A prime example is the Dyck-1 language of well-formed strings over ``('' and ``)''; here a counter is incremented (decremented) whenever an opening (closing) bracket is encountered; a string is well-formed if and only if the counter is 0 at the end of the string.
Some other relevant formal languages are Shuffle-Dyck-$k$ (the shuffles of multiple Dyck-1 languages), $a^nb^n$ -- here, $a$ increments the counter and $b$ decrements it, and $a^nb^nc^n$ -- here, there are two counters, one keeping track of $a^nb^n$ and one of $b^nc^n$ (See Appendix~\ref{sec:language-defin-counter}).
Such counter languages are fundamental as basic context-free (Dyck-1, $a^nb^n$) or context-sensitive (e.g., $a^nb^nc^n$) languages \citep{hopcroft2001introduction}, and have been the subject of studies of both transformers \citep{bhattamishra2020ability} and RNNs \citep{weiss2018practical}.
% Many such languages are modeled by SSMs:

\begin{thm}\label{thm:counter-languages}
The languages Dyck-1, Shuffle-Dyck-$k$, n-ary Boolean Expressions, \(a^nb^n\), \(a^nb^nc^n\), and \(a^nb^nc^nd^n\), (defined in Appendix~\ref{sec:language-defin-counter}) can each be predictively modeled by an SSM.
\end{thm}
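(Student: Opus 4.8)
The plan is to exhibit, for each language, an explicit SSM implementing the appropriate counter(s) directly in the linear-recurrence state $h_t$, exploiting the fact that Eq.~(\ref{eq:recurrence-ssm}) with $A(x_t)\equiv 1$ reduces to simple addition $h_t = h_{t-1} + B(x_t)$. This is the key observation: by fixing the gate to the identity, the hidden state accumulates a running sum of increments, which is exactly the behavior of an integer counter. The finite-precision constraint is actually harmless here, since all counter values are integers and the paper allows unbounded integer bits, so counts of arbitrary size are representable exactly.

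First I would handle the single-counter cases $a^nb^n$ and Dyck-1 together, since they share the same mechanism (Figure~\ref{fig:const_parity_flip}(c)): set $A(x_t)=1$ for all symbols, $B(e_a)=B(e_{(})=+1$ (increment on opening symbols) and $B(e_b)=B(e_{)})=-1$ (decrement on closing symbols), with the $\mathrm{BOS}$/$\mathrm{EOS}$ increments set to $0$. The single state coordinate then holds the counter value. For predictive modeling I would use $\rho$ (together with the $\phi$/$\mathrm{Mix}$ transform) to read off the set of legal next symbols: the closing symbol is legal exactly when the counter is positive, an opening symbol is always legal (for $a^nb^n$ one additionally tracks via a second coordinate whether any $b$ has already been seen, to forbid $a$ after a $b$), and $\mathrm{EOS}$ is legal exactly when the counter equals $0$. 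Since these are threshold/equality tests on a finite-precision integer, they are implementable by the arbitrary readout $\rho$.

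Next I would extend to the multi-counter and multi-alphabet variants by running independent counters in separate state coordinates of a single SSM. For Shuffle-Dyck-$k$ I would allocate one coordinate per bracket type, each incremented/decremented by its own matched pair, and require all $k$ counters to be nonnegative at every prefix and zero at $\mathrm{EOS}$. For $a^nb^nc^n$ I would use two coordinates, one tracking $a^nb^n$ and one tracking $b^nc^n$ (as described in the excerpt), and for $a^nb^nc^nd^n$ three coordinates analogously; $n$-ary Boolean Expressions I would treat as a bracket-depth counter augmented with a small finite-state component tracking the current expected token class, both carried in the state. In every case the predictive set at each prefix is a Boolean combination of sign/equality tests on the counter coordinates plus the already-read token, all delegated to $\rho$.

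The main obstacle is not the counting itself but verifying two subtler points. First, I must confirm that the predictive set can genuinely be computed as a function of the final-layer activations alone, i.e.\ that the finite-state ``which phase are we in'' information (e.g.\ ``have we started reading $b$'s yet'') is either recoverable from the counter values or can be carried in additional state coordinates that the linear recurrence can maintain---this is where I would need to be careful that a coordinate meant to latch a one-time transition (such as ``a $b$ has occurred'') is implementable with an elementwise-linear update rather than requiring a genuine max or saturating nonlinearity; using a gate $A$ that depends on the input (non-time-invariant, which is permitted) together with a nonnegative increment suffices to latch such bits. Second, I must check that all these constructions respect the \textsc{nonnegative} constraint where the paper cares about it: since every gate value used is $1 \geq 0$, this is automatic, so the constructions fall within the Mamba-style regime rather than requiring the forbidden negative gates. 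I expect the write-up to consist mostly of tabulating $A$, $B$, and the readout condition for each language, with the bulk of the genuine argument concentrated in the $n$-ary Boolean Expressions case, whose grammar requires the most bookkeeping.
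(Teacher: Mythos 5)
Your construction is essentially the paper's: fix the gate $A \equiv 1$ so that $h_t = h_{t-1} + B(x_t)$ accumulates integer counters, assign per-symbol increments $u(\sigma)$ exactly as in the paper's table (one coordinate for Dyck-1 and $a^nb^n$, $k$ for Shuffle-Dyck-$k$, two and three for $a^nb^nc^n$ and $a^nb^nc^nd^n$), forward the current input symbol in extra coordinates, and let $\rho$ compute the predictive set from sign/zero tests on the counters together with the current symbol. Your nonnegativity check and your observation that latch bits are implementable with input-dependent gates are both correct (the latter is exactly the set-reset mechanism of Lemma~\ref{thm:set-reset}); in fact the latch for $a^nb^n$ is unnecessary, since under predictive modeling the prefix is valid and the current symbol $\sigma_1$ already determines the phase, and for Boolean expressions the paper gets away with a single counter ($u(\langle VALUE\rangle)=-1$, $u(\langle n\text{-}ARY\rangle)=+n$) rather than a counter plus a finite-state component.

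The one genuine gap is your claim that finite precision is ``harmless here, since all counter values are integers and the paper allows unbounded integer bits.'' That is true of the recurrence state $h_t$, but $\rho$ never sees $h_t$: by Eq.~(\ref{eq:mix}), the state is forced through $\operatorname{Norm}$ (RMSNorm) before readout, and rounding to $p$ fractional bits happens \emph{after} normalization, where all values are bounded. So your ``threshold/equality tests on a finite-precision integer'' are not directly available; you must argue that the tests you need survive normalization and rounding. This is precisely what the paper's Lemma~\ref{thm:counter_app} does: it only ever reads out the counter \emph{clipped} to $[-L,L]$ (with $L=1$ sufficing here, distinguishing $\leq -1$, $0$, $\geq 1$), using $d = 2L+1$ coordinates offset by $[0,1,-1,\dots,L,-L]$ so that the clipped value is recoverable from the \emph{signs} of the normalized entries---sign is invariant under the positive rescaling of RMSNorm, and the relevant nonzero entries stay bounded away from zero, so they survive rounding. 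Your plan is rescuable exactly along these lines (sign and exact-zero tests do survive normalization, given a dummy constant coordinate to avoid division by zero), but as written the finite-precision readout step is asserted rather than proved, and it is the only place in this theorem where a nontrivial argument is actually needed.
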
 

The proof is in Appendix~\ref{sec:app:counting}.
Intuitively (Figure~\ref{fig:const_parity_flip}), an SSM can directly implement the required counters by setting $A \equiv 1$ and by defining $B(e_\sigma)$ to be the increment or decrpement cased by $\sigma$. 
In modeling such languages, SSMs pattern with both transformers \citep{bhattamishra2020ability} and LSTMs \citep{weiss2018practical}.

It may seem counterintuitive that \textsc{nonnegative} SSMs can perform unbounded counting but (by Theorem~\ref{thm:parity}) not modular counting---the latter would seem to just require reading out the value of an unbounded counter.
What is key is that, even though  $h_t$ can encode unbounded counts, reading out the modular value of an unbounded integer is a formidable problem for typical neural network nonlinearities, in particular when the information has been pushed through normalization (\ref{eq:mix}).
We should note that there is a qualitative difference between this result and the preceding positive results about finite-state languages (Theorems~\ref{thm:flip_flop} and \ref{thm:regular}), in that the construction in Theorem~\ref{thm:counter-languages} uses unboundedly large entries in the state $h_t$, whereas Theorems~\ref{thm:flip_flop} and \ref{thm:regular} use bounded values at finite precision. Indeed, we will find better length generalization in the finite-state case (Figure~\ref{fig:results-bhattamishra}).

A consequence of Theorem~\ref{thm:counter-languages} is that SSMs can recognize some languages transcending the context-free languages, as $a^nb^nc^n$ is not context-free.
A second application of the theorem, of great linguistic interest, is to bounded hierarchical structure, as we discuss next.

\subsection{Bounded Hierarchical Structure without Stacks}

It is generally agreed that hierarchical structure is a key aspect of language, and comprehending language at a human-like level requires the computational ability to process such structures 
\citep{chomsky1963algebraic, linzen2016assessing, everaert2015structures}.
The fundamental data structure for the same is a stack, where information is stored and removed as one traverses to higher and lower levels of hierarchical embedding \citep{hopcroft2001introduction}.
We now show that SSMs' counting ability can offer shortcuts on languages modeling hierarchical structure, eschewing the need for a stack.

A useful abstraction of hierarchical structure as relevant to natural language is the family of Dyck languages. The bounded-depth Dyck language $Dyck_{K,h}$ with $K$ types of parentheses and depth $h$ is the language of well-bracketed strings over $(_1$, $)_1$, $\dots$, $(_K$, $)_K$, such that the number of yet unclosed brackets never exceeds $h$ in any prefix \citep{hewitt2020rnns, yao-etal-2021-self}.
The Chomsky-Sch{\"u}tzenberger theorem \citep{chomsky1963algebraic} asserts that any context-free language can be expressed as a homomorphic image of the intersection between a Dyck language and a regular language. Specifically, the Dyck language in question refers to the classical unbounded-depth Dyck language, where $h \rightarrow \infty$, underscoring its fundamental role as the structural backbone of context-free languages. Bounding the depth reflects the fact that deep embedding is rare in natural language \citep{karlsson2007constraints, blasi2019distribution}. Prior work has found that two-layer transformers \citep{Yao_2021} and traditional RNNs \citep{hewitt2020rnns, bhattamishra2020ability} both model all $Dyck_{K,h}$ languages.
The same turns out to hold for SSMs:
\begin{thm}\label{thm:bounded-dyck}
There is a two-layer SSM with $d = \mathcal{O}(h \log K)$ that predictively models $Dyck_{K,h}$ at all input lengths, at finite precision.
\end{thm}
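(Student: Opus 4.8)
The plan is to build a two-layer SSM that combines the counter construction from Theorem~\ref{thm:counter-languages} with the Flip-Flop construction from Theorem~\ref{thm:flip_flop}, following the architecture sketched in Figure~\ref{fig:flip_dyck}(b). The key observation is that at each position in a valid $Dyck_{K,h}$ prefix, the set of legal next symbols is completely determined by two pieces of information: (i) the current nesting depth $d_t$ (the number of unclosed brackets), which controls whether an opening bracket is still allowed (namely whenever $d_t < h$) and whether a closing bracket is allowed at all (whenever $d_t > 0$); and (ii) the \emph{type} of the last unclosed opening bracket, which determines \emph{which} closing bracket $)_i$ is legal. If I can compute both quantities at finite precision, a final read-out $\rho$ can assemble the predictive set~(\ref{eq:predictive-set}).

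First I would implement the depth counter in layer one. Setting $A \equiv 1$ and choosing $B(e_\sigma) = +1$ for each opening bracket $(_i$, $B(e_\sigma) = -1$ for each closing bracket $)_i$, and $B = 0$ on BOS gives $h_t^{(1)} = d_t$ exactly, reproducing the counter idea from Theorem~\ref{thm:counter-languages}. Because valid $Dyck_{K,h}$ prefixes have $0 \le d_t \le h$, this counter is \emph{bounded}, so unlike the general counter languages it lives in a finite range and is representable at finite precision---this is precisely why the theorem asserts bounded values and good length generalization. In the same layer I forward the current token identity into the output $z_t^{(1)}$, so the second layer sees both the running depth and the current symbol.

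Second I would recover the last unclosed bracket type via a depth-indexed Flip-Flop. The idea is to maintain, for each depth level $1 \le j \le h$, a small register storing the type of the most recent opening bracket that opened level $j$; when we read $(_i$ and the depth rises to $j$, we \emph{write} the type $i$ into register $j$ (the write-instruction analogue from Theorem~\ref{thm:flip_flop}), and otherwise the registers persist. Reading out register $d_t$ then yields the type of the bracket that must be matched by the next closing symbol. Encoding a bracket type among $K$ options needs $O(\log K)$ dimensions, and maintaining $h$ such registers accounts for the stated dimension $d = \mathcal{O}(h\log K)$. The gate $A$ in layer two is set, Flip-Flop style, to overwrite the relevant register when an opening bracket arrives and to preserve it otherwise; since overwriting uses gate value $0$ and preservation uses gate value $1$, this stays within \textsc{nonnegative} and at finite precision.

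The main obstacle is the bookkeeping that ties the gating in layer two to the \emph{current} depth: the register to be written or read depends on the value $d_t$ computed in layer one, and the SSM gate $A(x_t)$ must select the correct level without instability as depth moves up and down. Concretely I expect the delicate part to be arranging that, when a closing bracket $)_i$ is read and the depth drops from $j$ to $j-1$, the stale contents of register $j$ do not later corrupt the read-out at a lower level---this requires that a subsequent reopening of level $j$ cleanly overwrites it, which it does, but verifying finite-precision correctness across all interleavings is the fiddly step. I would handle this by making each level's register update depend only on the conjunction of ``input is an opening bracket'' and ``new depth equals $j$,'' both of which are finite-precision decidable from $z_t^{(1)}$, and then argue by induction on prefix length that register $d_t$ always holds the correct last-unclosed type. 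The remaining read-out $\rho$ comparing $d_t$ against $0$ and $h$ and combining with the register value is a routine finite-state computation, deferred to the appendix.
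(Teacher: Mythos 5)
Your proposal is correct and matches the paper's own proof in all essentials: the first layer implements the bounded depth counter via Lemma~\ref{thm:counter_app} while forwarding the token identity, and the second layer runs $h$ parallel set-reset (Flip-Flop-style) registers of width $\mathcal{O}(\log K)$ over the depth-enriched alphabet, exactly as in Appendix~\ref{app:sec:bounded-dyck}. The only cosmetic difference is that you store the last \emph{opening} bracket that raised the depth to each level $j$ and read out register $d_t$ directly, whereas the paper's set-reset automata record the last bracket of either kind at each depth and then infer the maximum depth whose last bracket is opening---both yield the same information and the same $d = \mathcal{O}(h \log K)$ bound.
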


\begin{wrapfigure}{r}{0.4\textwidth}
    \centering
    \vspace{-2mm}
    \begin{minipage}[b]{0.32\textwidth}
    \vspace{-2mm}
        \includegraphics[width=\textwidth]{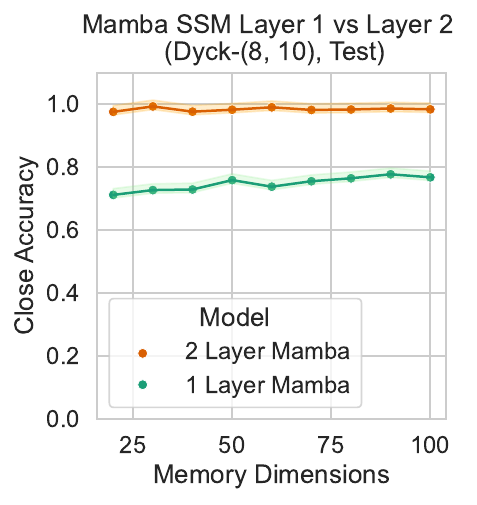}
        \label{fig:bdyck_val2}
    \end{minipage}    
    \caption{As predicted by Theorem~\ref{thm:bounded-dyck}, Mamba with 2 layers can model Dyck(K, h). Results for test set with strings of length $700 \leq n \leq 1400$.}
    \label{fig:dyck}
\end{wrapfigure}

The proof is in Appendix~\ref{app:sec:bounded-dyck}.
Intuitively (Figure~\ref{fig:flip_dyck}), the first layer records the depth of each parenthesis using the  ideas from Theorem~\ref{thm:counter-languages}, and the second layer keeps track of the last open bracket at each depth using Theorem~\ref{thm:flip_flop}.
We note that, since $Dyck_{K,h}$ is star-free, Theorem~\ref{thm:regular} already guarantees the existence of representing SSMs, but the depth and width guaranteed by Theorem~\ref{thm:bounded-dyck} is likely to be much better than what would be obtained by a black-box application of Theorem~\ref{thm:regular}:
As \citet{hewitt2020rnns} show, $h \log K$ units is optimal up to constants and is attained by traditional RNNs and LSTMs.
The SSM construction is very different from that of \citet{hewitt2020rnns} for traditional RNNs (both simple RNNs and LSTMs), which directly simulates a stack. 
Our construction is similar to the transformer construction in Theorem 4.2 in \cite{Yao_2021}, which however has to rely on specific positional encodings, unlike the SSM construction. 
This highlights that stacks are not the only way of simulating bounded hierarchical structure in recurrent architectures, and non-stack-based strategies can even attain the same optimal scaling of hidden units. 
Probing whether such stack-free shortcuts are learned by SSM-based LLMs is an exciting problem for future research.

\section{Experiments}
We have derived a fine-grained theoretical characterization of expressiveness strengths and limitations of  SSMs. We now show that our positive results can be instantiated and learned in a realistic SSM implementation, by evaluating a recent highly successful SSM, Mamba \citep{Gu2023Mamba}. 

\paragraph{FlipFlop}
We empirically instantiate  Theorem~\ref{thm:flip_flop} using the dataset of \cite{Liu2023FlipFlop},  reflecting the language $\mathcal{L}_{FF}$ as defined in Section~\ref{sec:flipFlop}.
Matching Figure 2 in \citet{Liu2023FlipFlop}, we evaluated both on in-distribution data, and on out-of-distribution data where the distance between read and write instructions tended to be larger. 
We evaluate for predicting the bits following $r$ instructions\footnote{Predictive modeling is trivial at other positions, as only the input symbols need to be considered there.}, matching the ``deterministic/clean'' mode of \citet{Liu2023FlipFlop}, and considered predictions to be correct only if all predictions within a sequence were correct. (Further details in Appendix~\ref{sec:app:setup-flipFlop}).
A small one-layer\footnote{Theorem~\ref{thm:flip_flop} constructs a \emph{two}-layer SSM. We hypothesize that Mamba uses its local convolution (Remark~\ref{app:local-co v}) to replace the lower layer from the construction in Theorem~\ref{thm:flip_flop}.} Mamba model converged to 0 error in both validation sets after $\sim$ 1400 steps (Figure~\ref{fig:flipflop}), compared to 500 steps for an LSTM reported by \citet{Liu2023FlipFlop}.  
In contrast, \citet{Liu2023FlipFlop} found that transformers kept making occasional mistakes despite training for 10K steps.

\begin{figure}
\centering
\includegraphics[width=\textwidth]{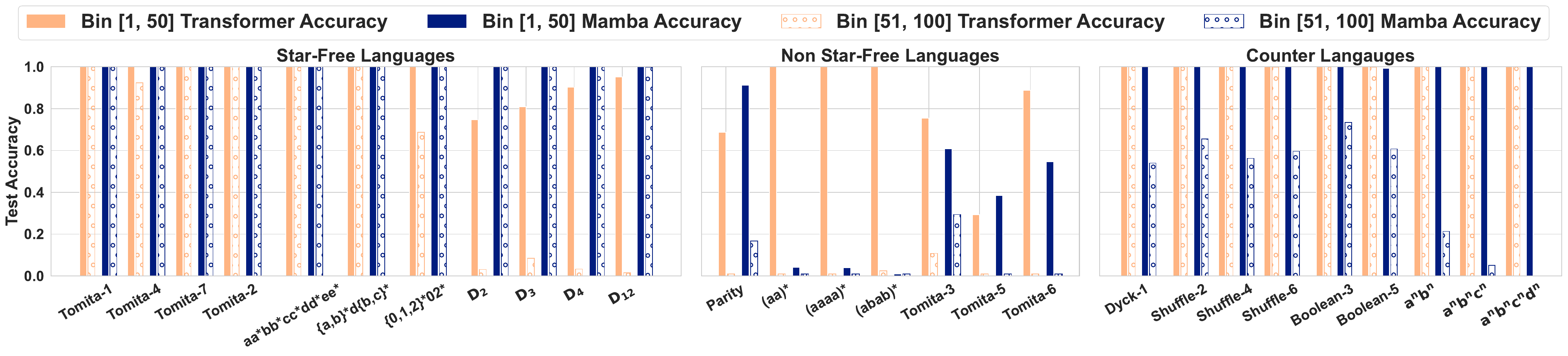}
\caption{
Results on 27 formal languages, comparing our Mamba results (blue) with transformer results reported by \cite{bhattamishra2020ability} (orange), on in-distribution lengths (solid) and out-of-distribution lengths (dotted).
As predicted by Theorem~\ref{thm:regular}, Mamba performs strongly on star-free languages, and even shows perfect length generalization.
Again as predicted by Theorem~\ref{thm:regular}, it performs poorly on non-star-free languages.
Results for transformers from \cite{bhattamishra2020ability} are mixed.
Mamba also succeeds on learning the counter languages from Theorem~\ref{thm:counter-languages}, showing perfect accuracy at in-distribution lengths at in-distribution lengths, but length generalization lags behind transformers.
}\label{fig:results-bhattamishra}
\end{figure}

\begin{wrapfigure}{r}{0.4\textwidth}
    \centering
    \vspace{-2mm}
    \begin{minipage}[b]{0.32\textwidth}
    \vspace{-2mm}
        \includegraphics[width=\textwidth]{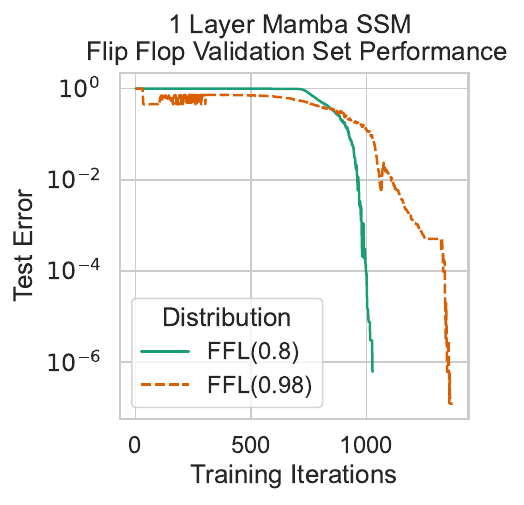}
        \label{fig:flipflop_val}
        \vspace{-2mm}
    \end{minipage}    
    \caption{Test error on the validation set for $\mathcal{L}_{FF}$, following \citet{Liu2023FlipFlop}. Mamba shows near-zero test error in both In- (green) and Out-of-distribution (orange) settings, consistent with Theorem~\ref{thm:flip_flop}, and avoids the failure seen in transformers \citep{Liu2023FlipFlop}}
    \label{fig:flipflop}
\end{wrapfigure}

\paragraph{Test Suite from \citet{bhattamishra2020ability}}
To test our theoretical results on regular and counter languages (Theorems~\ref{thm:parity}, \ref{thm:regular}, \ref{thm:counter-languages}), we test Mamba on 27 formal languages, including 18 regular languages and  9 counter languages, based on a prior study comparing transformers and RNNs \citep{bhattamishra2020ability}.
The regular languages include a popular benchmark \citep{tomita1982dynamic} and various  regular expressions; 11 are star-free.
The counter languages include the languages covered by Theorem~\ref{thm:counter-languages}.
(Definitions in Appendix \ref{sec:language-definitions}).
We  chose this test suite as it precisely covers Theorems~\ref{thm:regular} and \ref{thm:counter-languages}, and we have proven (in)expressibility results for each language in the set.

Following \cite{bhattamishra2020ability}, we trained the model for predictive modeling, i.e., at each step, the model outputs a label indicating the  set of possible next characters  (\ref{eq:predictive-set}), including EOS when required.
Following \cite{bhattamishra2020ability}, we count the model's response on an input string as correct if and only if predictive modelling was successful at \emph{all} positions in the input. 
Such a evaluation setup makes random baselines low, where a random predictor would have an accuracy exponentially small in $N$ in each of the $N$ positions. 
Training inputs have length in [1,50]; the model is evaluated on held-out bins with length [1,50] and [51,100].
Further experimental details are in Appendix~\ref{sec:app:setup-bhattamishra}.

We show our Mamba results, together with Transformer results reported by \citet{bhattamishra2020ability}, in Figure~\ref{fig:results-bhattamishra}.
LSTMs perform perfectly on all languages, and are thus not shown. In a striking confirmation of Theorem~\ref{thm:regular}, Mamba learns all star-free languages with strong length generalization but performs poorly on non-star-free languages. Transformers show more mixed results, often failing to length-generalize even on star-free languages. Consistent with Theorem~\ref{thm:counter-languages}
, Mamba, like Transformers, learns counter languages but struggles more with length generalization. The differences in Mamba's performance between star-free and counter languages may stem from the fact that the construction for the former class (Theorem~\ref{thm:regular}) is able to use finite precision and bounded state values at arbitrary input lengths, while the latter (Theorem~\ref{thm:counter-languages}) uses unbounded state values.

\paragraph{Bounded Hierarchical Structure}
To test Theorem~\ref{thm:bounded-dyck}, we recreate the experimental setup from \citet{yao-etal-2021-self}. Matching their Figure 4, we trained Mamba to predictively model $Dyck_{K,h}$ at $K=8$ and $h=10$. The training and the validation set contained samples of length $\le 700$, while the test set contained samples of length $700 \le n \le 1400$.
\citet{yao-etal-2021-self} found both transformers and LSTMs achieved strong performance on this setup.
We provide further details in Appendix~\ref{sec:experimental-dyck}.
Recall that Theorem~\ref{thm:bounded-dyck} shows that two-layer SSMs can predictively model $Dyck_{K,h}$.
We trained Mamba with 1 or 2 layers and varying dimensionality, finding that two layers can achieve essentially perfect performance across model sizes, even on the test set (Figure~\ref{fig:dyck} and~\ref{fig:three_figures}).

\section{Discussion}\label{sec:discussion}

\paragraph{Related Work}
Our work belongs to an incipient line of research into the expressiveness of SSMs \citep{jelassi2024repeat, merrill2024illusion}.
It is closely related to a long string of work studying the expressive capacity of neural sequence models, which has so far focused on recurrent networks \citep[e.g.][]{siegelman1991neural, bhattamishra2020ability, hewitt2020rnns} and, more recently, self attention \citep[e.g.][]{chiang2023tighter, merrill2023logic, strobl2023survey}.
A second link is to the classical and long-standing study of linear dynamical systems and control theory \citep{kalman1960general}.
For instance, Theorem~\ref{thm:parity} relies the asymptotic convergence of an SSM on certain inputs, establishing a link to the asymptotics of linear systems \citep[e.g.][]{phillips1992asymptotics}.

\paragraph{Take-Aways}
While theoretical in nature, our results have several actionable implications for SSM and LLM research, informing the rapidly growing research on SSM-based LLMs.
\emph{First}, encouragingly, SSMs can keep track of bounded hierarchical structure with optimal memory even without explicitly implementing a stack (Theorem~\ref{thm:bounded-dyck}), suggesting that simple diagonal linear state updates may be sufficiently powerful for modeling the hierarchical structure of language.
\emph{Second}, SSMs resolve a basic failure mode of self-attention in flip-flop state tracking while being parallellizable (Theorem~\ref{thm:flip_flop}). Overall, SSMs and attention have overlapping but distinct strengths. This lends support to the development of hybrid architectures interleaving SSM and attention layers, as instantiated very recently by Jamba \citep{lieber2024jamba}.
\emph{Third}, nonnegative gates as obtained by exponential or sigmoid parameterizations provably restrict expressive capacity, even in non-time-invariant SSMs (Theorem~\ref{thm:parity}).
While \cite{Gu2023Mamba} found no evidence that complex-valued paramerizations improved over real-valued ones in the language modality, our results suggest revisiting this question, at least for tasks where periodic state-tracking abilities may be important.
\emph{Fourth}, while exactly characterizing the capacity of transformers has proven difficult even in the finite-state case, Theorem~\ref{thm:regular} provides a decidable characterization of the regular languages -- equivalently, finite-state tracking problems -- that SSMs such as Mamba can  model. Such decidable characterizations may make it easier to theoretically predict abilities and anticipate failures of LLMs; exploring the implications of this characterization in more realistic setups is an exciting direction for future research.

\paragraph{Limitations}
The main limitation of our theoretical results is that they focus on in-principle expressiveness, and do not directly make statements about learning and generalization.
Future work could address this, for example, by examining whether our constructions result in reasonably flat minima, or by studying gradient flow dynamics.
While we empirically verified that our positive results can indeed be instantiated, in a learnable manner, in one realistic SSM implementation, implementational differences might still result in practical differences between implementations. Studying the role of such implementational differences is an interesting problem for future work; we have made a first step by theoretically elucidating the implications of nonnegative gate values.

\section{Conclusion}

We have studied the expressive capacity of modern state space models (SSMs), through the lens of automata and formal languages.
We have shown theoretically that SSMs can express star-free languages, a range of counter languages, and bounded hierarchical structure.
By providing rigorous results about the expressiveness of the SSM architecture, our results can provide guidance to work on SSM-based language models.

\section*{Acknowledgments} 
We thank Mark Rofin for useful discussion about Theorem~\ref{thm:parity} and we thank anonymous reviewers for their helpful feedback. We gratefully acknowledge the insightful discussions with members of the FLaNN community which contributed to the ideas of this project. Funded by the Deutsche Forschungsgemeinschaft (DFG, German Research Foundation) – Project-ID 232722074 – SFB 1102.

\bibliographystyle{abbrvnat}
\bibliography{literature}

\clearpage

\appendix

\section{Instantiations of General Framework in SSM Models}\label{appendix:unified-ssm}
Here, we survey how (\ref{eq:recurrence-ssm}) is instantiated in a range of SSMs.
As stated in Section~\ref{sec:background}, we refer to SSMs where the gate $A$ does not depend on $x_t$ as \emph{time-invariant}.
An equivalent terminology is the distinction between ``Weak Linear Time Invariant Convolutional Models'' (i.e., time-invariant) and ``Linear Time Variant Models'' (i.e., non-time-invariant) in \citet{akyurek2024context}.

\subsection{Non-Time-Invariant Models}

Approximately simultaneously with or more recently than \cite{Gu2023Mamba}, a range of non-time-invariant SSMs have been introduced \citep{De2024Griffin, yang2023gated, qin2024hierarchically, qin2024hgrn2}.
This category also covers highly similar earlier RNN variants \citep{bradbury2016quasi, lei2017simple}.

\paragraph{Mamba}
In Mamba, (2) and (3) directly map onto Eqs. (2a) and (2b) in \citet{Gu2023Mamba}.
The notation of \citet{Gu2023Mamba} use a matrix multiplication $\overline{A} h_{t-1}$ instead of elementwise  multiplication $A(x_t) \hadamard h_{t-1}$ in (REF), but importantly, Mamba's $\overline{A}$ is diagonal, so we can take $A(x_t)_i = \overline{A}_{ii}$.
Due to exponential parameterization, its entries are nonnegative.

\paragraph{Griffin}
The  RG-LRU layer of Griffin \citep{De2024Griffin} uses the equation
\begin{align*}
    h_t &= \underbrace{a_t}_{A(x_t)} \hadamard h_{t-1} + \underbrace{\sqrt{1-a_t^2} \hadamard (i_t \hadamard x_t)}_{B(x_t)}
\end{align*}
where $a_t, i_t$ are neurally parameterized in terms of $x_t$ but not $h_{<t}$; by design, $a_t \in (0,1)$.
$\phi$ is instantiated in terms of linear transformations, GeLU, and RMSNorm (Figure 2 in \citet{De2024Griffin}).
The local attention used by Griffin can be subsumed into an SSM layer (Remark~\ref{app:local-co v}).

\paragraph{Gated Linear Attention  \citep[GLA][]{yang2023gated}}
This model (Section 4.4 in \citet{yang2023gated}) instantiates our framework using a recurrence of the form (\ref{eq:recurrence-ssm}); while the state is two-dimensional in this model, the update is performed by elementwise products as in (\ref{eq:recurrence-ssm}).
The gate is obtained by applying sigmoid to a linear transformation of $x_t$; thus, its entries are in $(0,1)$.
$\phi$ is instantiated in terms of SwiGLU and LayerNorm.

\paragraph{HGRN}
HGRN \citep{qin2024hierarchically} and HGRN2 \citep{qin2024hgrn2} are defined by a recurrence of the form (\ref{eq:recurrence-ssm}); the gate entries are $\in (0,1)$ by design.
$\phi$ is instantiated in terms of GLU, linear transformations, and normalization.
In HGRN, the state is complex, but crucially the gate remains real-valued.

\subsection{Time-Invariant Models}
Time-invariant SSMs introduced before late 2023 are surveyed by \citet[][Appendix B]{Gu2023Mamba}, such as \citep{mehta2022long,sun2023retentive,orvieto2023resurrecting}.
Time-invariant SSMs have often used complex-valued states and gates; this does not have a major impact on our results: First, as complex-valued SSMs subsume real-valued ones, our positive results carry over. Second, our negative result about PARITY is affected by this distinction and requires a separate argument, see Theorem~\ref{thm:parity-complex}.

Note also that $A h_{t-1}$ is often described as a general matrix multiplication, but $A$ is diagonalizable (e.g. Lemma 3.2 in \citet{gu2021efficiently}; \cite{sun2023retentive} for RetNet), which ---even though implementation may be based on non-diagonalized representations \citep{gu2021efficiently}---renders the model equivalent to one where $A$ is diagonal from the start. This equivalence is shown as Lemma 3.1 in \citet{gu2021efficiently}.

\section{Formal Definitions and Proofs}

\subsection{Flip Flop}\label{sec:app:flipFlop}

We begin by introducing key notions of automata theory.
References for automata theory include \citet{eilenberg1974automata, hopcroft2001introduction, sakarovitch2009elements}.
We will provide those key notions that are necessary to prove our results.
We will focus on \emph{deterministic} finite-state-automata (DFA), and simply refer to them as \emph{finite-state-automata}.\footnote{A closely related notion is the \emph{semiautomaton}, which is the notion considered in the closely related work \citet{liu2022transformers}. Semiautomata lack a fixed start state $q_0$. We include $q_0$, but this difference is not substantial for our formal results.}
First,
\begin{defin}\label{def:automaton}
    A (deterministic) finite-state-automaton $\mathcal{A}$ consists of:
    \begin{itemize}
        \item a finite alphabet $\Sigma$
        \item a finite state set $Q$
        \item a starting state $q_0 \in Q$
        \item a transition function $u : Q \times \Sigma \rightarrow Q$
    \end{itemize}
We extend $u$ to a map $u : Q \times \Sigma^* \rightarrow Q$ by setting:
    \begin{align*}
    u(q, \epsilon) &= q \\
        u(q, w_{1\dots i+1}) &= u(u(q, w_{1\dots i}), w_{i+1})
    \end{align*}
    where $\epsilon$ is the empty word.
    
    Intuitively, $u(q_0, {\bf w})$ is the state that $\mathcal{A}$ is in after reading ${\bf w}$.

   The automaton \emph{recognizes} a language $L \subseteq \Sigma^*$ if there is a \emph{recognizing set} $R \subseteq Q$ such that
   \begin{equation}\label{eq:recog-set}
   L := \{w : u(q_0, {\bf w}) \in R\}
   \end{equation}
\end{defin}
Kleene's Theorem \citep{kleene1951representationof} asserts that a language $L \subseteq \Sigma^*$ is regular (i.e., defined by a regular expression) if and only if it is recognized by some finite-state automaton.

A very fundamental automaton underlying Flip Flop is:
\begin{defin}\label{def:set-reset}
    A set-reset automaton is a finite-state-automaton where $(Q $ \textbackslash $\{q_0\}) \subseteq \Sigma$ and
    \begin{equation}
        u(q,\sigma) = \begin{cases}q & \text{ if }\sigma \not\in Q \\
        \sigma & \text{ else}\end{cases}
    \end{equation}
\end{defin}
Intuitively, such an automaton keeps recording the last seen symbol from a designated set $Q \subseteq \Sigma$.
Such an automaton is easily simulated with a single non-time-invariant SSM layer:
\begin{lemma}\label{thm:set-reset}
Let $\mathcal{A} = \langle \Sigma, Q, q_0, u\rangle$ by a set-reset automaton.
Then there is a single-layer SSM with finite precision and width $d=1+\log Q$ that maps each $w_{1\dots T} \in \Sigma^*$ to the state sequence $u(q_0, w_1), u(q_0, w_{12}),\dots, u(q_0, w_{1\dots T}) \in Q^T$. 
    
    Formally, there is an injective map $V : Q \rightarrow \mathbb{R}^d$ such that $\rho(z_t)= V(u(q_0, w_{1\dots t}))$ for $t=1,\dots,T$.
\end{lemma}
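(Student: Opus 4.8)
The plan is to let the single SSM layer directly \emph{be} the set-reset automaton, using the gate $A(x_t)$ to toggle between ``overwrite'' and ``persist.'' A set-reset automaton either copies the last-seen symbol $\sigma\in Q\setminus\{q_0\}$ into the state (when $\sigma\in Q$) or leaves the state untouched (when $\sigma\notin Q$), and this is exactly the behavior of the recurrence (\ref{eq:recurrence-ssm}) when the gate is set componentwise to $0$ (reset) or to $1$ (keep). First I would fix an injective encoding: enumerate $Q$, assign each $q$ a distinct codeword $c(q)\in\{0,1\}^m$ with $m=\lceil\log_2|Q|\rceil$, and set $\mathrm{enc}(q)=(1,c(q))\in\mathbb{R}^{1+m}$, where the leading coordinate is a constant $1$ channel whose role becomes clear below. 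This gives width $d=1+\lceil\log_2|Q|\rceil$, matching the claimed $1+\log Q$.

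The layer is then defined as follows. Set $h_0=\mathrm{enc}(q_0)$. For the gate, put $A(e_\sigma)=\mathbf{0}$ if $\sigma\in Q$ and $A(e_\sigma)=\mathbf{1}$ if $\sigma\notin Q$; for the increment, put $B(e_\sigma)=\mathrm{enc}(\sigma)$ if $\sigma\in Q$ and $B(e_\sigma)=\mathbf{0}$ if $\sigma\notin Q$. Since $\Sigma$ is finite and $e$ is injective, the membership test $\sigma\in Q$ is a fixed finite classification, so these finitely many input--output constraints are realized by some smooth $A,B$, as the model permits. A straightforward induction on $t$ then gives $h_t=\mathrm{enc}(u(q_0,w_{1\dots t}))$: when $w_t\notin Q$ the update reads $h_t=\mathbf{1}\hadamard h_{t-1}+\mathbf{0}=h_{t-1}$, matching $u$ leaving the state fixed; when $w_t\in Q$ it reads $h_t=\mathbf{0}\hadamard h_{t-1}+\mathrm{enc}(w_t)=\mathrm{enc}(w_t)$, matching $u$ resetting to $w_t$.

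It remains to push $h_t$ through $\phi$ and read it out. I would take $\operatorname{Mix}_2$ and $\operatorname{Mix}_1$ to be the projection onto $h_t$ (ignoring $x_t$), so that $z_t=\operatorname{Norm}(h_t)$. Here the constant channel earns its keep: since every $\mathrm{enc}(q)$ carries the same nonzero value in its first coordinate, no two distinct encodings are positive scalar multiples of one another, so RMSNorm cannot collapse them and $z_t$ still takes a distinct value for each state. As there are only $|Q|$ possible values of $z_t$, I can define $\rho$ as the finite lookup sending each to $V(q):=\mathrm{enc}(q)$, yielding $\rho(z_t)=V(u(q_0,w_{1\dots t}))$. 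The step requiring the most care is precisely this interaction with $\operatorname{Norm}$: RMSNorm could in principle identify two states, and it is the constant channel that rules this out. Finite precision is then unproblematic, since the finitely many values of $z_t$ are separated by a fixed positive gap, so any sufficiently large but length-independent number $p$ of fractional bits keeps them distinguishable for $\rho$.
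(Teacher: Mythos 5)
Your proof is correct and matches the paper's own construction essentially step for step: the gate set to $\mathbf{0}$ on symbols in $Q$ and $\mathbf{1}$ otherwise, a binary encoding delivered through $B$, identity mixing maps, and an extra constant channel to keep the normalization well-behaved. Your injectivity argument for RMSNorm (no two encodings are positive scalar multiples thanks to the constant coordinate) is a slightly different phrasing of the paper's observation that $1 \leq \|h_t\|_2 \leq \sqrt{1+\log|Q|}$ keeps nonzero entries bounded away from zero, but it addresses the same point and is equally valid.
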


\begin{proof}
Let $B(\sigma) \in \mathbb{R}^{\log |Q|}$ be a binary encoding if $\sigma\in Q$, and ${\bf 0}  \in \mathbb{R}^{\log |Q|}$ else.
Take $h_0 = B(q_0)$.
Let $A(\sigma) = {\bf 0}$ if $\sigma\in Q$ and $A(\sigma) = {\bf 1}$ else.
After processing a string, the state $h_t$ is $B(\sigma)$ where $\sigma$ is the last symbol in $Q$ that has occurred if any has, and $B(q_0)$ otherwise.
Coming to (\ref{eq:mix}, in order to avoid division by zero when normalizing if no element of $Q$ has been read, we add a dummy dimension to $h_t$ whose value is always $1$.
We take $\operatorname{Mix}_1, \operatorname{Mix}_2$ to be the identity.
Note that, even though normalization will affect the numerical values,  the binary encoding of $\sigma\in Q$ can still be read out with finite precision, as  $1 \leq \|h_t\|_2 \leq \sqrt{1+\log |Q|}$, and thus nonzero entries will remain bounded away from zero.
\end{proof}

\begin{thm}[Restated from Theorem~\ref{thm:flip_flop}]\label{thm:flip_flop_app}
    There is a two-layer SSM that predictively models $\mathcal{L}_{FF}$ at all lengths, at finite precision.
\end{thm}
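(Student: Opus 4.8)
The plan is to build a two-layer SSM directly, using the set-reset construction of Lemma~\ref{thm:set-reset} as the workhorse for both layers. The key structural observation about $\mathcal{L}_{FF}$ is that at each \texttt{r} position, the required output (the set of legal next bits) is determined entirely by the bit that followed the \emph{most recent} \texttt{w} instruction. Since the bit after the last \texttt{w} is exactly the kind of ``last symbol from a designated set'' that a set-reset automaton records, the bulk of the work reduces to tracking that quantity with a single SSM layer, modulo the bookkeeping needed to align instructions with their following data bits.

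The construction proceeds in two layers, as sketched in Figure~\ref{fig:flip_dyck}. First I would use layer one to make each data-bit position ``aware'' of the instruction that preceded it. Concretely, set $A(e(\texttt{r})) = A(e(\texttt{w})) = A(e(\texttt{i})) = 0$ and $A(e(0)) = A(e(1)) = 1$, with $B(e(0)) = B(e(1)) = \mathbf{0}$, so that the hidden state holds the most recent instruction token whenever the current input is a data bit; additional dimensions simply forward the current token $w_t$ into $h_t^{(1)}$ (again via an appropriate set-reset-style gate). After $\operatorname{Mix}$ and readout, the output $z_t^{(1)}$ at any data-bit position exposes both $w_t \in \{0,1\}$ and the preceding instruction $w_{t-1} \in \{\texttt{r},\texttt{w},\texttt{i}\}$. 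Second, layer two treats $(w_{t-1}, w_t)$ as an input to a set-reset automaton that records the bit following the last \texttt{w}: the gate $A^{(2)}$ is set to $0$ (overwrite) precisely when the preceding instruction was \texttt{w}, and to $1$ (persist) otherwise, with the increment carrying the current bit in the overwrite case. Thus $h_t^{(2)}$ always stores the value written by the most recent \texttt{w}. Finally, the readout $\rho$ combines this stored bit with the position-parity information (whether the next position expects an instruction or a data bit) and, at \texttt{r} positions, outputs the singleton set forcing the matching bit; at all other positions it outputs the appropriate unconstrained set. Formally, one invokes Lemma~\ref{thm:set-reset} twice, once per layer, and verifies that the finite-precision guarantees of that lemma are preserved under composition.

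The main obstacle, and the step requiring the most care, is the finite-precision read-out after normalization. Lemma~\ref{thm:set-reset} already handles the fact that RMSNorm rescales the hidden state but keeps nonzero entries bounded away from zero (using the dummy dimension and the bound $1 \le \|h_t\|_2 \le \sqrt{1+\log|Q|}$); the delicate point here is to confirm that this property survives when the \emph{output} of layer one, after normalization, is fed as the \emph{input} to the gate of layer two. I would need to check that the values $z_t^{(1)}$ take on only finitely many distinct configurations (since the alphabet and the set-reset state space are finite), so that the gate function $A^{(2)}$ of the second layer can be realized as a smooth map taking the prescribed discrete values $\{0,1\}$ on these finitely many inputs, and similarly that $\rho$ can distinguish the finitely many relevant states at bounded precision. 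Because everything ranges over a finite set of exact symbolic configurations, one can always interpolate a smooth $A, B, \phi, \rho$ through them, so no genuine analytic difficulty arises; the work is purely in verifying that the two layers compose cleanly and that the alignment of instruction/bit positions is correct at every step. The full details appear in Appendix~\ref{sec:app:flipFlop}.
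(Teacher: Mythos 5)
Your proposal is correct and follows essentially the same route as the paper's proof in Appendix~\ref{sec:app:flipFlop}: layer one is the set-reset construction of Lemma~\ref{thm:set-reset} recording the last instruction while forwarding the current token, layer two is a second set-reset automaton over the pair $(w_t, w_{t-1})$ keyed on the previous symbol being \texttt{w}, and the readout $\rho$ maps the resulting state to the legal-next-symbol sets, with your finite-precision composition argument (finitely many normalized configurations of $z_t^{(1)}$, through which smooth $A^{(2)}, B^{(2)}, \rho$ can be interpolated) making explicit what the paper leaves implicit in its appeal to arbitrary smooth maps. The only detail you gloss over is the prefix containing no \texttt{w} yet, where the correct output at an \texttt{r} position is $\{0,1\}$ rather than a singleton; this is immediate in your construction since the second automaton is then still in its distinguishable start state $q_0$, exactly as the paper notes.
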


\begin{proof}
    In the first layer, we use Lemma~\ref{thm:set-reset} to simulate a set-reset automaton over the input alphabet $\Sigma_1 = \{w,r,i,0,1\}$ where $Q_1 = \Sigma_1 \cup \{q_0\}$.
    This layer outputs at each position whether the last instruction was write, read, or ignore.
    The layer additionally, at each position, forwards the input symbol using additional dimensions.
    Formally, at the first layer, $\rho(h_t)$ allows us to read out the input symbols $x_{t-1}, x_t \in \Sigma$.
    
    In the second layer, we again use Lemma~\ref{thm:set-reset} to simulate a set-reset automaton over an extended alphabet $\Sigma_2 := \Sigma_1 \times \Sigma_1$, where the first component indicates the input symbol $x_t$ and where the second component indicates $x_{t-1}$.
    In this set-reset automaton, $Q_2$ contains, besides a start state $q_0$, those elements of $\Sigma_2$ whose second entry is $w$.
    The second layer thus keeps track of the input bit $b \in \{0,1\}$ following the last write instruction.
    It additionally forwards the input symbol $x_t$ using additional dimensions.

    The second layer, via $\rho$, then predicts the possible next symbols on the basis of this information:
    If $x_t \in \{0,1\}$, any instruction in $\{w,r,i\}$ is possible.
    If $x_t \in \{w,i\}$, any bit in $\{0,1\}$ is possible.
    If $x_t = r$, the bit stored after the last write instruction is possible; if no write instruction has appeared (hence, the second automaton is still in its start state), any bit in $\{0,1\}$ is possible.
\end{proof}

\subsection{Difficulty of Representing PARITY}\label{sec:app:parity}

\begin{defin}
    PARITY is the regular language over $\Sigma=\{0,1\}$ of strings where the number of ones is even. As a regular expression, PARITY is $(0^*10^*10^*)^*$.
\end{defin}

\begin{thm}[Restated from Theorem~\ref{thm:parity}]\label{thm:parity_app}
No SSM satisfying \textsc{nonnegative}  can recognize PARITY at arbitrary input lengths with finite precision.
\end{thm}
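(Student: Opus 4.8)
The plan is to analyze the behavior of an SSM satisfying \textsc{nonnegative} on the family of inputs $1^N$ (possibly bracketed by BOS/EOS), and to show that the read-out activations converge to a limit as $N \to \infty$, so that no finite-precision classifier $\rho$ can separate even from odd $N$. Since PARITY on $1^N$ toggles with every additional $1$, a convergent read-out cannot track this, yielding the impossibility. The key structural fact I would exploit is that on a constant input symbol $\sigma = 1$, the recurrence (\ref{eq:recurrence-ssm}) becomes time-homogeneous: $h_t = A \hadamard h_{t-1} + B$ with $A := A(e_1) \geq 0$ and $B := B(e_1)$ fixed vectors. I would argue coordinatewise.

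The heart of the argument is the per-coordinate asymptotics of the scalar linear recurrence $h_t = a \, h_{t-1} + b$ with $a \geq 0$. First I would split into cases on the gate value $a$. If $0 \leq a < 1$, the recurrence is a contraction and $h_t \to b/(1-a)$ geometrically, so the coordinate converges. If $a = 1$, then $h_t = h_0 + tb$, which either stays constant (when $b=0$) or diverges monotonically to $\pm\infty$; in the divergent case the value is eventually monotone and, crucially, moves in a single direction rather than oscillating. If $a > 1$, then $h_t$ again grows monotonically (for $b \neq 0$, or for $h_0 \neq -b/(a-1)$) or is constant, and again does not oscillate. The essential point enforced by \textsc{nonnegative}, i.e. $a \geq 0$, is that the sequence $h_t$ is eventually monotone in each coordinate: there is no negative eigenvalue to induce sign alternation, and no complex phase (this is a real, diagonal update). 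So each coordinate of $h_t$ is either convergent or eventually strictly monotone, hence in all cases the \emph{parity} of $t$ is not recoverable from $h_t$ in the limit.

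I would then push this through the output map. The read-out passes $h_t$ through $\phi$ in (\ref{eq:mix}): a channel-mixing step, then $\operatorname{Norm}$ (RMSNorm), then another mix, followed by $\rho$. For the convergent coordinates the composite map is continuous and sends a convergent input to a convergent output. For the (eventually monotone) divergent coordinates, RMSNorm is the decisive ingredient: dividing by $\|h_t\|_2$ rescales the divergent coordinates so that their normalized values also converge to a limit determined by the dominant growth direction. Here I would use the restriction noted in Remark~\ref{rem:activation_functions} that $\operatorname{Mix}_j$ are linear or (Swi)GLU, so that the map applied after normalization is continuous and the normalized state $\operatorname{Norm}(\cdot)$ has a well-defined limit along the ray of fastest growth. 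Consequently $z_N^{(1)}$ converges as $N \to \infty$, and feeding a convergent sequence into the (finitely many) subsequent layers — each of which is again a time-homogeneous recurrence driven by a convergent input followed by the same kind of continuous read-out — preserves convergence, so $z_N^{(L)}$ and hence $\rho(z_N^{(L)})$ converges. But at finite precision a convergent sequence is eventually constant, whereas correct recognition of PARITY requires $\rho(z_N^{(L)})$ to alternate between $0$ and $1$ as $N$ increases. This contradiction establishes the claim.

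I expect the main obstacle to be the multi-layer bookkeeping and the careful treatment of normalization in the divergent regime. Handling a single layer is clean once the monotonicity from $a \geq 0$ is in hand, but I must verify that convergence of the input sequence into layer $l+1$ genuinely propagates through its own potentially divergent recurrence and its RMSNorm without reintroducing any $N$-dependence that survives in the limit; in particular I need that the limit of the normalized state is insensitive to whether $N$ is even or odd. The subtlety is entirely in ruling out oscillation: this is exactly where \textsc{nonnegative} is used, and it is also why the theorem fails for the $A=-1$ construction sketched in Figure~\ref{fig:const_parity_flip}, where a negative gate produces the sign alternation $h_t = (-1)^t h_0$ that encodes parity. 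The complex time-invariant case is genuinely different and is deferred to Theorem~\ref{thm:parity-complex}.
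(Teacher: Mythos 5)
Your proposal is correct and follows essentially the same route as the paper's proof: running the SSM on $1^N$, arguing layer by layer that each coordinate of the recurrence (with nonnegative gate, hence no sign oscillation) either converges or diverges monotonically, pushing this through the pre-norm mix, RMSNorm, and post-norm mix to get convergence of $z_N^{(k)}$, and then using finite precision to turn convergence into ultimate constancy, contradicting the required alternation of $\rho(z_N^{(L)})$ with the parity of $N$. The one point you flag as a worry — propagating through layer $l+1$ — is resolved exactly as you suggest: finite precision makes the convergent input to each layer eventually constant, so that layer's recurrence becomes time-homogeneous beyond some $T_0$, which is precisely the inductive step in the paper.
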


\begin{proof}
  
We consider an SSM with multiple layers, and indicate the layer in superscript: $h_t^{(1)}, \dots, h_t^{(L)}$.
We write $z_t^{(0)}$ for the input token embedding $e(w_t)$.
Consider a SSM processing the word $1^t$, for $t \rightarrow \infty$.
We show, by induction over the number of layers, the following claim:

\textit{($\dagger$) Each entry of $z_t^{(k)}$ converges to a value bounded, in absolute value, by a constant.}

By the assumption of finite precision, convergence automatically leads to the entries becoming ultimately constant.
Once we have shown this, we know that $z_t^{(L)}$ is constant when $t$ is sufficiently large; thus, the parity of the string $1^t$ cannot be read out from $z_t^{(L)}$. As a consequence, the SSM cannot recognize PARITY. Indeed, we have shwon the stronger claim that the language $(11)^*$ -- the language of even-length strings over one symbol -- is not recognized by an SSM; we will use this stronger statement in Corollary~\ref{eq:nonnegative-lower-bound-periodic}.

We proceed to proving ($\dagger$).
The claim ($\dagger$) is trivially true at $k=0$, as the input token is always the same and we defined $z_t^{(0)} := e(w_t)$.
Now consider $k>0$.    
    By hypothesis, the activations are given as
    \begin{equation}
        h_t^{(k)} = A(x_t) \hadamard h_{t-1}^{(k)} + B(x_t) \\
    \end{equation}
where $A(x_t), B(x_t)$ are constant $\alpha := A(x_t)$, $\beta := B(x_t)$ when $t>T_0$, for some $T_0 > 0$.
The solution of the recurrence for $t>T_0$ is
\begin{equation}
    h_t = \alpha^{t-T_0} \left(h_{T_0} + \frac{\beta}{\alpha-1}\right) + \frac{\beta}{1-\alpha}
\end{equation}
Each dimension $j=1,\dots,d$ of this vector can be constant (if $(h_{T_0})_j + \frac{\beta_j}{\alpha_j-1}=0$), diverge exponentially ($\alpha_j>1$), converge exponentially ($\alpha_j<1$) or diverge linearly ($\alpha_j=1$).

We next need to show that $z_t = \operatorname{Mix}_2(\operatorname{Norm}(\operatorname{Mix}_1(h_t,x_t)))$ converges.

First, consider the effect of applying a linear transformation to the state $h_t$.
Each entry of the result will be some linear combination
\begin{equation}
u_t = \lambda_1 (h_t)_1 + \dots  + \lambda_d (h_t)_d
\end{equation}
If each $\alpha_j < 1$, then $u_t$ converges.
If some $|\alpha_j| \geq 1$, there may be some cancellation if $\alpha_i=\alpha_j$ for some $i\neq j$; cancellation can only lead to full erasure of the relevant terms or to a remaining term with the same exponent.
In conclusion, each entry $u_t$ will again either converge to a finite value or diverge towards $\pm \infty$.

We now need to understand the behavior of $\operatorname{Mix}_1(h_t,x_t)$.
Recall that, based on our survey (Appendix~\ref{appendix:unified-ssm}), we allowed it to contain linear, GLU \citep{dauphin2017language}, and SwiGLU \citep{shazeer2020glu} components.
If $\operatorname{Mix}_1(h_t,x_t)$ implements a linear transformation only, each entry likewise may converge, diverge linearly, or diverge exponentially.
We note that---if $\sigma$ is the sigmoid function---$\sigma(u_t)$ always converges, as $\sigma$ simply saturates to 0 or 1 if $u_t$ diverges.
Hence, if $\operatorname{Mix}_1(h_t,x_t)$ implements GLU, each entry likewise may converge, diverge linearly, or diverge exponentially.
Finally, if $\operatorname{Mix}_1(h_t,x_t)$ implements SwiGLU, each entry of the result will be a product of a linear combination of the form $u_t$, and $Swish_\beta$ applied to another such linear combination. Depending on the behavior of these two $u_t$-like terms, the outcome will behave as a product of sequences that may converge exponentially, diverge exponentially, or diverge linearly -- e.g., the outcome may also diverge quadratically, or converge as $n\alpha^{-n}$, etc.

If all dimensions of $\operatorname{Mix}_1(h_t,x_t)$ converge, then $\operatorname{Norm}(\operatorname{Mix}_1(h_t,x_t))$ will also converge to a scaled version of $\frac{\beta_i}{1-\alpha_i}$, scaled by a bounded factor as $\beta_i \neq 0$.
Now assume some dimensions of $\operatorname{Mix}_1(h_t,x_t)$  do not converge; in this case, for any two dimensions $i,j$, either their ratio will converge to a constant, or converge to 0 or $\pm \infty$.
After applying $\operatorname{Norm}(\cdot)$, the entries asymptotically dominating the others will converge to a finite value bounded, in absolute value, by 1; the others will converge to zero.

In conclusion, we have found that each entry of $\operatorname{Norm}(\operatorname{Mix}_1(h_t,x_t))$ converges to some number bounded, in absolute value, by $1$.
As $\operatorname{Mix}_2$ is continuous, each entry of $z_t$ likewise converges, with a bound depending on the Lipschitz constant of $\operatorname{Mix}_2$.
\end{proof}

We next show the result, referenced in the main paper text after Theorem~\ref{thm:parity}, about time-invariant SSMs with complex-valued gates:
\begin{thm}\label{thm:parity-complex}
   \textsc{Time-invariant} SSMs cannot recognize PARITY with finite precision at arbitrary input lengths, even with complex-valued gates, as long as each entry in each $A$ has a rational angle in the complex plane.
\end{thm}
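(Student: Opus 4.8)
The plan is to generalize the convergence argument behind Theorem~\ref{thm:parity}, exploiting the extra rigidity of \textsc{time-invariance}. The key structural observation is that when $A$ does not depend on the input, a single layer computes a fixed linear convolution of its input: unrolling (\ref{eq:recurrence-ssm}) on a word $w_{1\dots n}$ gives $h_n = \alpha^n \hadamard h_0 + \sum_{s=1}^n \alpha^{n-s}\hadamard \beta_{w_s}$, where $\alpha := A$ is a \emph{constant} vector and $\beta_\sigma := B(e_\sigma)$. Only the additive increment depends on the symbol, so the gate cannot selectively ``count the ones'': it acts identically on $0$ and $1$. This is precisely the leverage we lose in the non-time-invariant case and regain here, and it is what forces us to use words containing both symbols rather than the single-symbol inputs $1^n$.

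First I would cash in the rational-angle hypothesis. Writing each coordinate as $\alpha_j = r_j e^{2\pi i p_j/q_j}$ and setting $Q := \mathrm{lcm}_j q_j$ over all coordinates and all layers, we obtain $\alpha_j^Q = r_j^Q \ge 0$, a nonnegative real. Hence along each residue class $n \equiv c \pmod Q$ we have $\alpha_j^n = r_j^n\,\zeta_{j,c}$ with a \emph{fixed} phase $\zeta_{j,c}$ and a real-exponential magnitude $r_j^n$. This lets me re-run the layer-by-layer analysis of Theorem~\ref{thm:parity} separately on each residue class: within a class the dynamics are, up to fixed phase factors, those of a \textsc{nonnegative} SSM. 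The induction over layers goes through because, once the lower layers are eventually $Q$-periodic at finite precision, the upper-layer increments are eventually $Q$-periodic too, and folding $Q$ consecutive steps into one yields a real recurrence $h_{n+Q} = r^Q \hadamard h_n + C_c$ with nonnegative coefficient $r^Q$, to which the convergence machinery applies verbatim.

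The heart of the argument is then a dichotomy on the coordinates that carry the parity signal. Fixing an input family that varies the number of ones by one (so as to flip parity), each pre-normalization coordinate either converges or diverges in magnitude along each residue class. When every parity-relevant coordinate converges, Theorem~\ref{thm:parity} applies directly: the state becomes eventually constant along the class and cannot separate even from odd. I can force this convergent regime for coordinates with $r_j<1$ by front-loading the ones, as in $1^k 0^m$ with $m\to\infty$, where the ones sit at decaying weights $\alpha_j^{n-s}\to 0$ while the trailing zeros contribute a fixed geometric limit, so that strings of every parity collapse to one state. Dually, coordinates with $r_j>1$ are handled by back-loading the ones, so their bounded contribution is dwarfed by a diverging zero-driven term; the interesting reduction is showing a single input family controls all coordinates simultaneously despite mixed moduli.

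The main obstacle is the complementary case, where a parity-bearing coordinate has modulus $r_j = 1$ (or the degenerate gate $\alpha_j = 1$) together with a phase that is a $2$-power root of unity --- a negative-real gate $\alpha_j = -r_j$ being the prototype. Here the parity of the ones-count genuinely \emph{survives}, but only inside the sign/phase pattern of an \emph{unbounded-magnitude} quantity of the form $\gamma_j\,(-1)^n\sum_{s\in S}(-1)^s$, whose integer factor has the correct parity yet grows without bound; this is exactly why a purely algebraic ``two colliding words'' argument does not close the case, since no bounded invisible edit flips parity. I expect this to be the crux, and the route through is the phenomenon the main text flags for counting versus modular counting: after $\mathrm{Norm}(\cdot)$ in (\ref{eq:mix}) rescales the state to bounded size, distinguishing consecutive even and odd values of an unbounded integer requires resolving differences that shrink like $1/n$, which finite precision cannot do. Making this rigorous --- quantifying the RMSNorm washout uniformly across coordinates of differing magnitudes $r_j$ --- is where most of the care will go, and it explains why the statement concerns PARITY, with its interplay of $0$s and $1$s, rather than the easier single-symbol language $(11)^*$, which a gate of angle $\pi$ can already recognize.
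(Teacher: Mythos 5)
Your opening moves match the paper's: you take the least common multiple $Q$ of the phase denominators so that $\alpha_j^Q = r_j^Q \geq 0$, you analyze the dynamics along residue classes mod $Q$, and you aim to fold $Q$ steps into a single recurrence with a nonnegative real coefficient so that the convergence induction of Theorem~\ref{thm:parity} applies. Your observation that $1^N$ cannot drive the contradiction here---because a time-invariant gate of angle $\pi$ already recognizes $(11)^*$---is correct and is exactly the constraint the paper's proof is built around. But there is a genuine gap in how you then choose the input family, and it surfaces precisely at the point you flag as the crux. With front-loaded inputs $1^k 0^m$, a coordinate with $r_j = 1$ and gate $\alpha_j = -1$ satisfies $h_n = (-1)^n h_0 + \sum_{s=1}^n (-1)^{n-s} \beta_{w_s}$; the contribution of the prefix $1^k$ is bounded (an alternating sum) but \emph{never decays}, and for $m$ ranging over even integers it settles to a limit that differs between $k$ even and $k$ odd. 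So the parity signal persists at \emph{bounded} magnitude: there is no unbounded quantity for $\operatorname{Norm}$ to wash out, the $1/n$-resolution argument you propose has nothing to act on, and no contradiction arises---an SSM with this gate genuinely distinguishes these parity classes on your family. Back-loading fails symmetrically for such coordinates, and your own worry about controlling all coordinates of mixed moduli with one family is never discharged.

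The missing idea is the paper's choice of input: $(10^{W-1})^T$, where $W$ is your $Q$. Spacing the ones exactly $W$ apart makes every \texttt{1} arrive with the same phase, since the gate power accumulated between consecutive ones is $\alpha^W = r^W \geq 0$; the parity of the string equals the parity of $T$, and the subsampled sequences $z_{tW+i}^{(k)}$ obey, by induction over layers, a recurrence with nonnegative real coefficient $r^W$ and eventually constant increments. This is exactly the situation of Theorem~\ref{thm:parity}---each coordinate converges exponentially, diverges linearly, or diverges exponentially, and the $\operatorname{Mix}$/$\operatorname{Norm}$ analysis forces $z_{TW}^{(L)}$ to converge, hence to be eventually constant at finite precision, killing the parity readout. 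In particular your crux case $r_j = 1$, $\alpha_j = -1$ is not special under this family (it lands in the linear-divergence or constant branch, already handled), so no delicate finite-precision washout estimate is needed. Your folding step also tacitly requires the input to be eventually $Q$-periodic for the block increments to stabilize---another constraint that the periodic family satisfies by construction and your families do not exploit. In short: right algebraic preprocessing, wrong probe strings; the theorem is proved by making the parity-carrying symbols land in phase, not by case-splitting on moduli.
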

Here, by a \emph{rational angle}, we refer to an angle that is a rational number when expressed in degrees; such angles are rational multiples of $2\pi$ when expressed in radians.
As the rational angles are dense in the reals, one expects that even if some irrational angles permitted modeling PARITY, such solutions would be very hard to find -- in particular given that irrational numbers are not exactly represented in finite precision.

\begin{proof}
By assumption, any $A_j \in \mathbb{C}$ in any layer can be written as
\begin{equation}
    A_j = r_j \exp(2\pi i q_j)
\end{equation}
where $q_j \in [0,1]$ is rational and $r_j \geq 0$ is real -- here, $2\pi q_j$ is known as the \emph{argument} of $A_j$; it describes the angle of $A_j$ in the complex plane in radians. Correspondingly, the angle in degrees is described by $q_j \cdot 360^\circ$; this is rational if and only if $q_j$ is.

As a time-invariant SSM has a finite number of such values $A_j$, across all its layers, we can select a positive integer $W$ such that $Wq_j \in \mathbb{N}$ for each $j$, in each layer.
Importantly, $(A_j)^W = (r_j)^W \in \mathbb{R}$.

Now consider the action of any layer of the SSM on an input sequence of the form $A_T = (10^{W-1})^T$.

The claim is that, for each $i=1,\dots, W$, the sequence
\begin{equation}
    z_{tW+i}^{(k)}
\end{equation}
converges as $t\rightarrow \infty$.
As in the proof of Theorem~\ref{thm:parity}, in the finite-precision setting, converge entails that the sequence becomes ultimately stationary. Note that the parity of $A_T$ equals the parity of $T$; hence, it is impossible to read out the parity from $z_{TW}^{(k)}$ when $T$ is large.

Now consider, suppressing the index for the dimension in $1,\dots, d$:
\begin{align*}
    h_{tW}^{(k)} &= \sum_{i=1}^{tW} A^{tW-i} B(z_{i}^{(k-1)}) \\
    &= \sum_{i=1}^{tW} A^{tW-i} B(z_{i}^{(k-1)}) \\
    &= \sum_{s=1}^{t} \sum_{j=sW}^{(s+1)W-1} A^{tW-j} B(z_{sW+j}^{(k-1)}) \\
    &= \sum_{s=1}^{t} \sum_{j=0}^{W-1} A^{(t-s)W-j} B(z_{sW+j}^{(k-1)}) \\
    &= \sum_{s=1}^{t} \sum_{j=0}^{W-1} (r \exp(2\pi i q))^{(t-s)W-j} B(z_{sW+j}^{(k-1)}) \\
    &= \sum_{s=1}^{t} \sum_{j=0}^{W-1} r^{(t-s)W-j} \exp(-2\pi i jq) B(z_{sW+j}^{(k-1)}) \\
    &=\sum_{j=0}^{W-1}\exp(-2\pi i jq) \sum_{s=1}^{t}  r^{(t-s)W-j}  B(z_{sW+j}^{(k-1)}) \\
\end{align*}
Separately considering summation beyond $T_0$ at which $z_{tW+j}^{(k-1)}$ has become stationary, we get
\begin{align*}
    &= \underbrace{\left[\sum_{j-0}^{T_0-1} \dots\right]}_{U_1} + \left[\underbrace{\left(\sum_{j=T_0}^{W-1}\exp(-2\pi i jq) B(z_{j}^{(k-1)}) r^{-j}\right)}_{U_2} \underbrace{\left(\sum_{s=1}^{t}  r^{(t-s)W} \right)}_{U_3}  \right] \\
\end{align*}
$U_1$ and $U_2$ do not depend on $t$.
Intuitively, $U_2 \in \mathbb{C}$ determines a direction in the complex plane, whereas $U_3 \in \mathbb{R}$ determines a magnitude.
It remains to understand $U_3$, which can be rewritten as:
\begin{equation}
    U_3 = \sum_{s=1}^t r^{(s-1)W} = r^{-W} \sum_{s=1}^t (r^{W})^s = r^{-W} \sum_{s=0}^t (r^{W})^s - r^{-W}  = r^{-W} \begin{cases}  \frac{1-(r^{W})^t}{1-(r^{W})}-1 & r \neq 1 \\ s-1 & r = 1\end{cases} 
\end{equation}
We have now achieved a situation like in the proof of Theorem~\ref{thm:parity}: $U_3$ can converge exponentially, diverge linearly, or diverge exponentially. The remainder of the proof is analogous to that proof.
\end{proof}

The following Corollary of Theorem~\ref{thm:parity} will be used in the proof of Theorem~\ref{thm:regular}:
\begin{corollary}\label{eq:nonnegative-lower-bound-periodic}
Assume \textsc{nonnegative}, SSMs with finite precision cannot recognize any non-star-free regular language.
\end{corollary}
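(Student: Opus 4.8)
The plan is to argue by contradiction: assume some SSM $S$ satisfying \textsc{nonnegative} recognizes a non-star-free regular language $\mathcal{L}$, and derive that $S$ would have to distinguish strings according to a count taken modulo some integer $q \ge 2$, which is exactly the obstruction already established for $(11)^*$ in Theorem~\ref{thm:parity_app}. Concretely, I would first extract from the non-star-freeness of $\mathcal{L}$ a fixed ``pumping'' triple of words exhibiting a genuine modular dependence, and then feed the corresponding family of inputs to $S$ and show that the final read-out converges, hence becomes eventually constant at finite precision.

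For the combinatorial step I would use the characterization that $\mathcal{L}$ is non-star-free iff its syntactic monoid is not aperiodic \citep{mcnaughton1971counter, schutzenberger1965finite}. Non-aperiodicity yields an element $s = \eta(u)$, for some word $u$ and the syntactic morphism $\eta$, whose cyclic submonoid has period $q \ge 2$, i.e.\ contains a copy of $\mathbb{Z}/q\mathbb{Z}$; in particular $s^m \neq s^{m+1}$ for $m$ past the index. By the defining separating property of the syntactic monoid, there is a context $(a,c)$ with $\eta(a)\, s^m\, \eta(c) \in P$ but $\eta(a)\, s^{m+1}\, \eta(c) \notin P$, where $\mathcal{L} = \eta^{-1}(P)$. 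Setting $b := u$, this produces words $a,b,c$ such that the indicator $n \mapsto [\,a b^n c \in \mathcal{L}\,]$ is eventually periodic with period $q \ge 2$ and, crucially, \emph{not} eventually constant.

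For the analytic step I would run $S$ on the inputs $a b^n c$ and track the last-position activation $\rho(z^{(L)})$ as $n \to \infty$. Writing $b = b_1 \cdots b_\ell$, the composite state update across one block is affine, $h \mapsto \tilde A \hadamard h + \tilde B$, where $\tilde A$ is the elementwise product of the per-symbol gates along $b$ and is therefore still \emph{nonnegative}. Thus at block boundaries the state obeys exactly the eventually-constant-coefficient recurrence analyzed in Theorem~\ref{thm:parity_app} (with $\alpha = \tilde A \ge 0$), so each coordinate converges, diverges linearly, or diverges exponentially. Propagating this through the layers by induction---maintaining as the inductive invariant that for each within-block offset $j \in \{0,\dots,\ell-1\}$ the subsequence $z^{(k)}_{n\ell + j}$ converges---reuses the block bookkeeping of Theorem~\ref{thm:parity-complex} together with the $\operatorname{Mix}/\operatorname{Norm}$ analysis of Theorem~\ref{thm:parity_app}. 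Since the state entering the fixed suffix $c$ converges and $c$ induces a fixed continuous map, the read-out $\rho(z^{(L)})$ converges in $n$ and hence is ultimately constant at finite precision, contradicting the nonconstant period-$q$ membership pattern.

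I expect the main obstacle to be this analytic step: upgrading the single-repeated-symbol convergence argument of Theorem~\ref{thm:parity_app} to a repeated multi-symbol block flanked by fixed affixes, and in particular preserving the ``each offset converges'' invariant across layers when $\operatorname{Mix}_1$ is (Swi)GLU rather than linear. This is exactly where nonnegativity of $\tilde A$ is essential---it keeps the block multiplier a nonnegative real, so no rational-angle hypothesis as in Theorem~\ref{thm:parity-complex} is required---and where Theorems~\ref{thm:parity_app} and \ref{thm:parity-complex} must be spliced together. A secondary, more routine point is verifying that the separating property of the syntactic monoid indeed yields a genuinely nonconstant, not merely periodic, membership pattern.
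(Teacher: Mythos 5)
Your proposal is correct and follows essentially the same route as the paper's proof: obtain words $u,v,w$ (your $a,b,c$) for which membership of $uv^nw$ is eventually periodic in $n$ with period $\geq 2$ and nonconstant, then show by layer-wise induction that each within-block offset subsequence of activations satisfies an eventually-constant-coefficient affine recurrence with nonnegative gates and hence converges---so at finite precision it is ultimately constant and the readout cannot track $n$ modulo $q$. The only cosmetic differences are that you re-derive the pumping triple from non-aperiodicity of the syntactic monoid where the paper directly cites \citet{mcnaughton1971counter} for exactly this fact, and that the paper carries out the block-product bookkeeping itself (in both the $v^t$ and $v^tw$ phases) rather than splicing in the machinery of Theorem~\ref{thm:parity-complex}.
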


\begin{proof}
For any non-star-free regular language $\mathcal{L}$, there are words $u,v,w$ such that the membership $uv^nw \in \mathcal{L}$ is determined by the value of $n$ modulo some finite integer $k$ (depending on $\mathcal{L}$) \citep{mcnaughton1971counter}. 
Fix any such $u, v, w \in \Sigma^*$.

Now assume an SSM satisfying \textsc{nonnegative} can recognize  $\mathcal{L}$ with finite precision.
We can subsume the action of $u$ into the state $h_0$ by taking $h_0$, in each layer, to be the state of the SSM after reading $u$.
We now have an SSM that can determine the parity of $t$ when fed a word of the form $v^tw$.

For this SSM, we want to show

\textit{($\dagger$) When fed words of the form $v, v^2, v^3, \dots$, for each $i=0, \dots, |v|-1$, and each layer $k=1,\dots,L$, the sequence $z_{t|v|+i}^{(k)}$ converges as $t \rightarrow \infty$.}

As in the preceding two proofs in this section, convergence entails becoming ultimately constant in the finite-precision setting. 

The claim ($\dagger$) is immediate at $k=0$.

Now at $k>0$, we write:
\begin{align*}
    h_{t|v|+i}^{(k)} =& A(z_{t|v|+i}^{(k-1)}) \dots A(z_{(t-1)|v|+i+1}^{(k-1)}) h_{(t-1)|v|+i}^{(k)} \\
& + A(z_{t|v|+i}^{(k-1)}) \dots A(z_{(t-1)|v|+i+2}^{(k-1)}) B(z_{(t-1)|v|+i+1}^{(k-1)}) \\
& + A(z_{t|v|+i}^{(k-1)}) \dots A(z_{(t-1)|v|+i+3}^{(k-1)}) B(z_{(t-1)|v|+i+2}^{(k-1)}) \\
& + \dots \\
& + B(z_{(t-1)|v|+i}^{(k-1)}) \\    
\end{align*}
On the RHS, as $t\rightarrow 0$, all terms except for $h_{(t-1)|v|+i}^{(k)}$ become constant by the inductive hypothesis.
Hence, there are some $\alpha, \beta$ such that, for sufficiently large $t$,
\begin{equation}
        h_{t|v|+i}^{(k)} = \alpha \hadamard h_{(t-1)|v|+i}^{(k)} + \beta
\end{equation}
We are now, for each $i$, in the same situation as in the proof of Theorem~\ref{thm:parity}]: each dimension of this recurrence can converge exponentially, diverge exponentially, or diverge linearly; as in that proof, it follows that  $z_{t|v|+i}^{(k)}$ converges as $t\rightarrow \infty$.

We have shown ($\dagger$).

We now follow up by showing that

\textit{($*$) When fed words of the form $v^tw$, for each $i=1, \dots, |w|$, and each layer $k=1,\dots,L$, the sequence $z_{t|v|+i}^{(k)}$ converges as $t \rightarrow \infty$.}

Again, at finite precision, convergence entails that the sequences are ultimately constant.
Again, ($*$) is true at $k=0$ trivially.
When feeding the SSM words of the form $v^tw$, in each layer, the final state is in each layer $k$, at each $i=1, \dots, |w|$:
\begin{align*}
    h_{t|v|+i}^{(k)} =& A(z_{t|v|+|w|}^{(k-1)}) \dots A(z_{t|v|+1}^{(k-1)}) h_{t|v|}^{(k)} \\
    & + A(z_{t|v|+i}^{(k-1)}) \dots A(z_{t|v|+2}^{(k-1)}) B(z_{t|v|+1}^{(k-1)}) \\
    & + \dots \\
    & + B(z_{t|v|+i}^{(k-1)}) \\
\end{align*}
By inductive hypothesis, for large $t$, there are $\psi_i, \gamma_i$ such that
\begin{align*}
    h_{t|v|+i}^{(k)} =& \psi_i \hadamard h_{t|v|}^{(k)} + \gamma_i
\end{align*}
and, as shown before, each entry of $h_{t|v|}^{(k)}$ converges exponentially, diverges exponentially, or diverges linearly.
Now, by assumption, one can read out, at finite precisiion, the parity of $t$ from
\begin{align*}
    z_{t|v|+|w|}^{(L)} = \operatorname{Mix}_1(\operatorname{Norm}(\operatorname{Mix}_2(\psi_{|w|} \hadamard h_{t|v|+i}^{(k)} + \gamma_{|w|})))
\end{align*}
We now simply absorb the operation $X\mapsto \psi_{|w|} \hadamard X + \gamma_{|w|}$ into $\operatorname{Mix}_2$, and obtain by the same arguments as in the proof of Theorem~\ref{thm:parity} that $z_{t|v|+|w|}^{(L)}$ converges as $r\rightarrow\infty$. This is a contradiction to the claim that the value  of $t$ can be read out, modulo $k$, from $z_{t|v|+|w|}^{(L)}$ at finite precision.
\end{proof}

\begin{remark} \label{rem:activation_functions}
    As outlined in our analysis, the assumptions in Theorem \ref{thm:parity} are based on layer-wise operations that are either linear or based on the GLU or SwiGLU activation functions. This assumption is critical to the proof: one could design activation functions that make PARITY expressible.

Given a sequence $\mathbf{x} = x_1, \dots, x_T$, consider the function $f(\mathbf{x}) = \frac{e^{i \pi \sum_{i=1}^n x_i} + 1}{2}$. This continuous function is designed to satisfy the condition that, for bit-strings $\mathbf{x}$, $f(\mathbf{x}) = 1$ if $\sum_{i=1}^n x_i$ is even, and $f(\mathbf{x}) = 0$ otherwise.
At first glance, it seems like this function can be approximated by a cumulative sum layer in combination with a two-layer SSM to compute $f(x) = \frac{e^{i \pi x} + 1}{2}$.

However, this construction cannot be implemented under the condition for which we prove Theorem \ref{thm:parity}. This is because computing this function $f(x)$ inherently requires a layer-wise nonlinear operation (such as a MLP) capable of representing sine and cosine functions over arbitrarily large input values. Importantly, achieving a construction that works for any input length requires the ability to handle arbitrarily large inputs within a single operation.

A single GLU or SwiGLU activation function, or even a more classical MLP with ReLU or sigmoid activations, is not expected to represent sine and cosine functions over unbounded inputs. The reason for this limitation lies in the universal approximation results for feedforward networks. These results generally guarantee approximation within compact convergence on bounded sets, such as in the compactification of $\mathbb{R}$ or in $L^p$ spaces, as described in \citet{cybenko1989approximation}, \citet{ito1992approximation}, and \citet{arora2016understanding}. None of these results extend to uniform approximation of sine or cosine over the entire real line.

Recent work by \citet{van2024noncompact} addresses the universal approximation capabilities in the space $C_b(\mathbb{R})$, which is the class of bounded continuous functions over $\mathbb{R}$. This result is particularly relevant since approximating sine and cosine functions uniformly over $\mathbb{R}$ would fall under this category. According to their Proposition 5.5, sine and cosine functions cannot be uniformly approximated using certain activation functions, limiting the feasibility of approximating $f(x) = \frac{e^{i \pi x} + 1}{2}$ in a typical MLP architecture.

Thus, it is unrealistic to expect a typical MLP, with ReLU or sigmoid activations, to implement the function $f(x) = \frac{e^{i \pi x} + 1}{2}$ uniformly for arbitrarily large inputs. Consequently, a construction based on such a function would either necessitate custom activation functions, such as periodic activations specifically designed to handle sine and cosine, or require the size of the model to scale with the input length. Either solution removes apparent contradiction with Theorem \ref{thm:parity}, as these adjustments fall outside the scope of the assumptions made in our proof.

\citet{wang2024state} and \citet{orvieto2024universality} provide universal approximation guarantees for SSMs, but these guarantees depend on the size of the approximating network growing with input length. This dependency is clearly stated in Proposition 3.6 and Proposition 3.9 of \citet{wang2024state}, and further emphasized by \citet{orvieto2024universality}. in their Remark 2. Our results, in contrast, pertain to the existence of a single SSM capable of recognizing a formal language for any input length, independent of network size. Thus, such universal approximation results do not undermine Theorem \ref{thm:parity}.
\end{remark}

\subsection{Proof of Theorem~\ref{thm:regular}}\label{sec:acc:star-free}

Our proof of Theorem~\ref{thm:regular} will rely on the algebraic theory of finite automata, specifically the cascade product and the Krohn-Rhodes Theorem \citep{krohn1965algebraic}. These techniques, originally developed in the 1960s, have recently been introduced to the theoretical study of transformers by \citet{liu2022transformers}; we provide self-contained definitions and somewhat different notation, tailored to our proofs about state-space models. In general, we will find that the properties of state-space models allow more natural and directly length-generalizing implementations of these algebraic notions than what is possible for transfomers.

Recall the definition of a finite-state-automaton (Definition~\ref{def:automaton}).
Our construction will build on an important operation on automata, the cascade product \citep{krohn1965algebraic, eilenberg1974automata, ginzburg2014algebraic}:
\begin{defin}
Given two automata $\mathcal{A}_1, \mathcal{A}_2$ with associated alphabets $\Sigma_1, \Sigma_2$ and state sets $Q_1, Q_2$ such that
\begin{equation}
\Sigma_2 = Q_1 \times \Sigma_1,
\end{equation}
the cascade product $A_2 \wreath A_1$ is the automaton given by
\begin{itemize}
    \item $\Sigma = \Sigma_1$
    \item $Q = Q_2 \times Q_1$
    \item $q_0$ is the tuple of the starting states of $\mathcal{A}_2, \mathcal{A}_1$
    \item $u(\langle q,p \rangle,\sigma) = \left\langle u_2\left(q, \langle p, \sigma\rangle\right), u_1(p, \sigma)\right\rangle$
\end{itemize}
\end{defin}

We note that the literature usually uses ``$\circ$'' for the cascade product \citep[e.g.][]{eilenberg1974automata}. To avoid collision with the elementwise product ``$\circ$'' (e.g., (\ref{eq:recurrence-ssm})), we here instead use ``$\wr$'', usually used for the wreath product -- a product on monoids with an effect analogous to the cascade product \citep{almeida1995finite}.

While the formal definition is cumbersome, the intuition behind it is simple:
The cascade product corresponds to first reading a word ${\bf w}$ with $\mathcal{A}_1$, recording the state sequence $q_0, q_1, \dots, q_{|{\bf w}|} \in Q_1$ and -- at each $t=1,\dots,|{\bf w}|$ -- pasting the state $q_{t-1}$ together with the input symbol $w_t \in \Sigma_1$ -- resulting in a word over a new alphabet $Q_1\times\Sigma_1$, and then running $A_2$ on the resulting word.
The overall state of $A_2 \wreath A_1$ after reading a word is the tuple of the states reached by $A_2$ and $A_1$.
Note that we write $A_2 \wreath A_1$, rather than, $A_1 \wreath A_2$, because the second argument of the cascade product ($A_1$) intuitively reads the input first, preprocessing it for the other automaton, $A_2$ -- the cascade product can thus be viewed as a kind of function composition.

The somewhat inscrutable update rule for $u(\cdot,\cdot)$ encodes the action of $\mathcal{A}_1$ in the second component, and the action of $\mathcal{A}_2$ on the extended alphabet in the first component.
There is a close analogy to the stacking of sequence models, and we will leverage this analogy to translate cascade products into multilayer SSMs. The fundamental background here is the following classical fact:
\begin{fact}
    [Consequence of Krohn-Rhodes Theorem \citep{krohn1965algebraic} and Sch{\"u}tzenberger's Theorem \citep{schutzenberger1965finite}]
    Each star-free regular language is recognized by an iterated cascade product of set-reset automata, $(\dots(\mathcal{A}_1 \wreath \dots)\wreath \mathcal{A}_{n-1}) \wreath \mathcal{A}_n$, where each $\mathcal{A}_i$ is a set-reset automaton.
\end{fact}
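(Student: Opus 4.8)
The plan is to combine two classical theorems. \emph{Schützenberger's theorem} \citep{schutzenberger1965finite} states that a regular language is star-free if and only if its syntactic monoid is \emph{aperiodic}, i.e. group-free (it contains no nontrivial subgroup). The \emph{Krohn--Rhodes decomposition theorem} \citep{krohn1965algebraic} states that the transition monoid of any finite automaton divides a wreath product whose factors are (i) simple groups dividing that transition monoid and (ii) copies of the two-state flip-flop (set-reset) monoid. Specializing Krohn--Rhodes to the group-free case gives exactly what I need: an aperiodic transformation monoid divides an iterated wreath product of flip-flops alone.

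First I would pass from $\mathcal{L}$ to its minimal deterministic automaton, whose transition monoid is the syntactic monoid of $\mathcal{L}$. Since $\mathcal{L}$ is star-free, Schützenberger's theorem guarantees this monoid is aperiodic, so by the group-free case of Krohn--Rhodes it divides an iterated wreath product of flip-flop monoids. The remaining, and least routine, step is to translate this purely algebraic divisibility statement into the automata-theoretic cascade product of Definition~\ref{def:set-reset}: I would show that divisibility of transition monoids lifts to a \emph{simulation} of the minimal automaton by a cascade product $(\dots(\mathcal{A}_1 \wreath \dots)\wreath \mathcal{A}_{n-1}) \wreath \mathcal{A}_n$ of set-reset automata, and that the recognizing set $R$ of the minimal automaton can then be pulled back to a recognizing set inside the product state space $Q_n \times \dots \times Q_1$. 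Because the wreath factors are flip-flops, and flip-flops are precisely two-state set-reset automata, each $\mathcal{A}_i$ is of the required form.

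The main obstacle is exactly this bridge between the algebra (division of transformation semigroups, phrased via wreath products of monoids) and the concrete automaton-level cascade product used elsewhere in the paper, which is the construction the SSM simulation in the proof of Theorem~\ref{thm:regular} builds on. Two points need care: matching alphabets, since the cascade product feeds each layer the expanded alphabet $Q_{i-1}\times\Sigma_{i-1}$ rather than the original $\Sigma$, so the symbol-to-reset assignments coming from the flip-flop factors must be rewritten accordingly; and verifying that a generalized set-reset automaton of Definition~\ref{def:set-reset} (which may carry more than two states and several reset symbols) is itself a legitimate building block, either by noting that its transition monoid is already aperiodic and absorbing it directly, or by further decomposing it into two-state flip-flops. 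I would treat the group-free specialization of Krohn--Rhodes and the division-to-simulation translation as the technical heart, citing the standard algebraic automata theory references already listed \citep{eilenberg1974automata, ginzburg2014algebraic} for the decomposition itself.
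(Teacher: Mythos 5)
Your proposal takes essentially the same route as the paper: Sch\"utzenberger's theorem to obtain aperiodicity of the syntactic monoid, then the group-free specialization of Krohn--Rhodes (where the group factors must divide the original monoid, a point you correctly flag) to obtain a cascade of flip-flop automata, which are two-state instances of the paper's set-reset automata. The paper's own justification is exactly this argument compressed into three sentences; the division-to-simulation bridge and recognizing-set pull-back that you identify as the technical heart are delegated there to the standard references \citep{eilenberg1974automata, ginzburg2014algebraic}, and your treatment of them is sound.
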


This result follows from the Krohn-Rhodes decomposition theorem \citep{krohn1965algebraic}, which states that any finite-state automaton can be expressed as an iterated cascade product of simple automata, specifically finite simple groups and reset automata. Moreover, Sch{\"u}tzenberger's Theorem \citep{schutzenberger1965finite} characterizes star-free regular languages as those whose syntactic monoids are aperiodic, meaning they contain no nontrivial groups. Therefore, the decomposition for star-free languages involves only set-reset automata, leading to the stated cascade product structure. We now formally show that cascade products can be translated to SSM stacking.
We need an auxiliary lemma, which provides a single-layer SSM that encodes the input $w_{t-1}$ in state $h_t$ -- we will use it to forward information about the state of $\mathcal{A}_1$ at $t-1$ to $\mathcal{A}_2$ at $t$:
\begin{lemma}\label{thm:readout-last}
Let $\Sigma$ be an alphabet, and consider words $w\in\Sigma^*$.
There is a one-layer SSM with $d=4|\Sigma|$ such that, for $t=2,\dots,|w|$, the character $w_{t-1}$ can be read out from $z_t$ at finite precision.
\end{lemma}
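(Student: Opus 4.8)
The plan is to exploit the one form of memory that the strictly elementwise recurrence (\ref{eq:recurrence-ssm}) still permits within a single channel—an exponentially decaying sum—and then to use the mixing stage $\phi$, which has access to the current token $x_t$, to subtract off the current symbol and thereby expose the previous one. Concretely, I would dedicate $|\Sigma|$ state dimensions to a one-hot memory, fix a decay rate $\lambda$ with $0<\lambda<\tfrac12$ (e.g.\ $\lambda=1/4$, exactly representable at finite precision), set the gate to the constant $A(x_t)\equiv\lambda$, and set the increment $B(x_t)$ to the one-hot vector $\mathbf{1}_{x_t}\in\mathbb{R}^{|\Sigma|}$. Then $h_t=\sum_{s\le t}\lambda^{t-s}\mathbf{1}_{x_s}$, so the coordinate of $h_t$ indexed by $\sigma$ equals $\sum_{s\le t,\,x_s=\sigma}\lambda^{t-s}$, with weight exactly $1$ on the current symbol and $\lambda$ on the previous one. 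It is worth noting \emph{why} this indirect route is necessary: since the recurrence is elementwise, distinct channels never exchange values, so an exact one-step delay by routing a ``current'' channel into a ``previous'' channel is impossible within a single layer; the decaying sum is what lets a single channel hold the current and previous symbols at distinguishable weights.

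For the read-out I would use $\operatorname{Mix}_2$ (the inner map in (\ref{eq:mix}), which receives $x_t$) to form $g_t := h_t-\mathbf{1}_{x_t}$. Because $h_t=\lambda h_{t-1}+\mathbf{1}_{x_t}$, this gives exactly $g_t=\lambda h_{t-1}$. In $h_{t-1}$ the coordinate for $x_{t-1}$ is at least $1$ (from the $s=t-1$ term), whereas every other coordinate is at most $\lambda+\lambda^2+\dots=\lambda/(1-\lambda)<1$; hence $\arg\max_\sigma (g_t)_\sigma = x_{t-1}$ with a fixed positive margin. Crucially, the subtraction removes precisely the current symbol's fresh contribution of $1$, so the argument goes through even when $x_{t-1}=x_t$, where a naive ``second-largest coordinate'' heuristic on $h_t$ would fail. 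Applying $\operatorname{Norm}$ (RMSNorm) only rescales by a positive scalar and preserves the argmax, and taking $\operatorname{Mix}_1$ essentially to be the identity keeps $z_t$ pointing in the direction of $g_t$; since the downstream $\rho$ is allowed to be arbitrary, it recovers $w_{t-1}$ by an argmax. As in Lemma~\ref{thm:set-reset}, I would append an always-one dummy dimension so that the RMS is bounded away from zero and normalization is well defined (for $t\ge 2$ the coordinate for $x_{t-1}$ is already nonzero, so $g_t\neq 0$).

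The main obstacle, and the only real work, is verifying that everything survives \emph{finite precision} uniformly in $t$. The key point is that the decisive quantities are separated by a constant independent of length: the winning coordinate of $g_t$ is $\ge\lambda$, the runner-up is $\le \lambda^2/(1-\lambda)<\lambda$, and because $\lambda<1$ the rounding errors incurred at each recurrence step are geometrically damped rather than accumulated. Consequently a fixed number $p$ of fractional bits—large enough to represent $\lambda$ and to resolve this constant margin—suffices for all input lengths, and the argmax (hence the decoded $w_{t-1}$) is exact even though the internal representation is a truncated decaying sum. Finally, the stated width $d=4|\Sigma|$ is accounted for by the $|\Sigma|$ one-hot memory channels together with the auxiliary channels used to realize the subtraction and the normalization dummy, plus the dimension doubling incurred when the read-out is expressed through a GLU-type $\operatorname{Mix}$; confirming that this bookkeeping and the argmax are preserved under both the finite-precision truncation and the normalization is the crux, the construction itself being immediate.
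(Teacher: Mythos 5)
Your construction is correct, and its core mechanism is the same as the paper's: an exponentially decaying one-hot memory with decay strictly below $1/2$ (you pick $\lambda=1/4$, exactly the value the paper uses to avoid the finite-precision failure of the naive $A=1/2$ moving average, which the paper flags in a footnote), plus an always-one dummy channel so that RMSNorm is well defined and the margin survives normalization. Where you genuinely diverge is the read-out. The paper stores the current token in separate zero-decay channels and reads $w_{t-1}$ by a threshold test \emph{conditioned on} $w_t$ (its interval computation $[0,1/8]$ versus $[1/4,1/2]$), running $|\Sigma|$ binarized four-channel copies in parallel to account for $d=4|\Sigma|$. You instead exploit the fact that $\operatorname{Mix}_2$ in (\ref{eq:mix}) receives $x_t$ to form $g_t=h_t-\mathbf{1}_{x_t}=\lambda h_{t-1}$ and decode by argmax, with length-independent margin $\lambda-\lambda^2/(1-\lambda)=1/6$ at $\lambda=1/4$; this handles the $x_{t-1}=x_t$ case without any conditioning and needs only $|\Sigma|+1$ channels, so your somewhat hand-wavy width bookkeeping in the last paragraph is unnecessary --- padding up to $4|\Sigma|$ is harmless since the lemma only asserts existence at that width (the paper itself notes $4\log|\Sigma|$ is achievable). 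Two points worth writing out explicitly: (i) at finite precision $g_t=\lambda h_{t-1}$ only up to an accumulated truncation error bounded by $2^{-p}/(1-\lambda)$, constant in $t$, which your margin absorbs --- you say this correctly; and (ii) ``Norm preserves the argmax'' gives an \emph{exact} finite-precision readout only because every pre-norm entry is also bounded above by $1/(1-\lambda)$ plus the dummy, so the RMS scaling factor is bounded on both sides and the post-normalization gap remains a length-independent constant, mirroring the $1\leq\|h_t\|_2\leq 2\sqrt{d}$ step in the paper's argument. With those two inequalities spelled out, your variant is a complete and slightly more economical proof.
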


To prove Lemma~\ref{thm:readout-last}, a first idea is to use an exponential moving average with $A=1/2$ to encode the recent input characters in $h_t$; this effectively encodes the full history into the binary expansion of $h_t$, and in particular  allows reading out the second-last input in principle.
However, such a construction does not work at finite precision, because rounding may make it impossible to  extract even the second-most-significant bit.\footnote{Informally, in binary, 0.0111111...111 and 0.1 are arbitrarily close.} 
We avoid this problem simply by taking $A=1/4$, effectively utilizing only every two digits in the binary expansion of $h_t$, ensuring that the second-last input can be read out at a constant margin.
We now provide the formal proof:
\begin{proof}
We begin by showing the claim in the special case $\Sigma = \{1,0\}$.
Here, we take $d=4$, and
\begin{align*}
h_0 &= [0,0,0,0]^T \\
A(e_0) &= [1/4, 1/4, 0, 0]^T \\
A(e_1) &= [1/4, 1/4, 0, 0]^T \\
B(e_0) &= [1, 0, 1, 0]^T \\
B(e_1) &= [0, 1, 0, 1]^T 
\end{align*}
Now we separately consider the state $h_t$ depending on the form of the prefix $w_{1\dots t}$ (here $w_{1\dots t}$ refers to first $t$ characters in the word). If $w_{1\dots t} = \dots 00$ (the last 2 characters of the prefix are $00$), then
\begin{equation}\label{eq:bounds-for-ht-recursion}
    h_t = \left(\begin{matrix} \in [1,2] \\ \in [0,1/8] \\ 1 \\ 0 \end{matrix}\right)
\end{equation}
because
\begin{align*}
    h_t = & A(e_0) \circ h_{t-1} + B(e_0)  \\
    = & A(e_0) \circ  A(e_0) \circ h_{t-2} + A(e_0) \circ  B(e_0) + B(e_0)  \\
    = & [1/16, 1/16, 0, 0]^T \circ h_{t-2} + [1/16, 1/16, 0, 0]^T \circ [1, 0, 1, 0]^T + [1, 0, 1, 0]^T  \\
    = & \left(\begin{matrix} 
    \frac{1}{16} (h_{t-2})_1 + \frac{1}{16} + 1 \\ \frac{1}{16} (h_{t-2})_2\\ 1 \\ 0  \end{matrix}\right)
    \\
\end{align*}
By definition of $A$ and $B$, each entry in $h_{t-2}$ is in $[0,2]$; the claim (\ref{eq:bounds-for-ht-recursion}) then follows.
If $w_{1\dots t} = \dots 10$, then, by a similar calculation
\begin{equation}
    h_t = \left(\begin{matrix} \in [1,1.25] \\ \in [1/4,1/2] \\ 1 \\ 0 \end{matrix}\right)
\end{equation}
In particular, assuming $w_t=0$, one can read off $w_{t-1}$  from $(h_t)_2$ with a margin of size 1/8.
As $w_t$ is encoded in $h_t$ and due to symmetry, analogous statements hold when $w_t=1$.

Now, for each $\sigma\in\Sigma$, we run such a one-layer SSM where $0$ represents $\sigma$ and $1$ represents all other characters.\footnote{In fact, using a binary encoding of $\Sigma$, one can achieve $d = 4\log |\Sigma|$.}
% What does running in parallel mean ? 
By running these in parallel (i.e. executing these operations with the same SSM layer simultaneously, utilising the width of the SSM layer) we obtain an SSM with $d=4|\Sigma|$ from whose states one can read out $w_{t-1}$ at finite precision. As the entries in $h_t$ are all bounded by $2$, we find $\|h_t\|_2 \leq 2\sqrt{d}$ independent of $t$, and the margin is still bounded away from zero after normalization, and thus in $z_t$, where we can assume $\operatorname{Mix}_1$, $\operatorname{Mix}_2$ to be the identity.
\end{proof}

\begin{remark}\label{app:local-co v}
Some SSMs include local convolutions \citep[e.g.][]{fu2023hungry, Gu2023Mamba} or local attention \citep{De2024Griffin}, which aggregate information from a local window of some width $\Delta>0$. These do not increase the expressive capacity beyond SSMs as we have defined in (\ref{eq:recurrence-ssm}-\ref{eq:mix}), as aggregation of local information can be simulated with a single SSM layer:
Using the layer constructed in the proof of Lemma~\ref{thm:readout-last}, given the state $h_t$, once one has read out $w_{t-1}$ as described in the proof, one can recover $h_{t-1}$ from $h_t$ and $x_t$; then inductively read out $w_{t-2}$ using $h_{t-1}$ and $x_{t-1}$, etc.
Thus, up to any given width $\Delta>0$, one can read out $w_{t-\Delta}, \dots, w_{t-1}$ from the state $h_t$ of this layer at finite precision.
\end{remark}

We are now ready to translate cascade products into SSM stacking:
\begin{lemma}\label{thm:cascade-ssm}
    Let $\mathcal{A}_1$, $\mathcal{A}_2$ be two finite-state-automata, and assume that there are two SSMs with top-level states $z^{(L_1,1)}$ and $z^{(L_2,2)}$ that map each ${\bf w}$ to the state sequences under $\mathcal{A}_1$, $\mathcal{A}_2$, at finite precision.
    
    Formally, on a word ${\bf w}$, $\rho_1(z^{(L_1,1)}_t)$ and $\rho_2(z^{(L_2,2)}_t)$ provide the state sequences of $\mathcal{A}_1$, $\mathcal{A}_2$.

    Then there is an SSM with $L_1+L_2+1$ layers that maps each ${\bf w}$ to the state sequence under $\mathcal{A}_2 \wreath \mathcal{A}_2$, again at finite precision.

\end{lemma}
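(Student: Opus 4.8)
The plan is to stack the three ingredients in sequence: the given $L_1$-layer SSM for $\mathcal{A}_1$, a single ``memory'' layer built from Lemma~\ref{thm:readout-last}, and the given $L_2$-layer SSM for $\mathcal{A}_2$, for a total of $L_1 + L_2 + 1$ layers. The guiding observation is the semantics of the cascade product: after reading $w_{1\dots t}$, the combined automaton is in state $\langle q_t', p_t\rangle$, where $p_t = u_1(q_0, w_{1\dots t})$ is the state of $\mathcal{A}_1$, and $q_t'$ is the state reached by $\mathcal{A}_2$ when fed the word $\langle p_0, w_1\rangle, \langle p_1, w_2\rangle, \dots, \langle p_{t-1}, w_t\rangle$ over $\Sigma_2 = Q_1 \times \Sigma_1$. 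The only nontrivial feature of this word is the ``off-by-one'': at position $t$, $\mathcal{A}_2$ must read the \emph{previous} $\mathcal{A}_1$-state $p_{t-1}$ paired with the \emph{current} symbol $w_t$.

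First I would run the $\mathcal{A}_1$-SSM in the bottom $L_1$ layers, augmenting it with a few extra dimensions that forward the current token $e(w_t)$ unchanged (using gate $0$ and increment $e(w_t)$), so that the top-level output $z_t^{(L_1)}$ lets us decode both $p_t$ and $w_t$ at finite precision. The crux is handled by layer $L_1 + 1$: applying Lemma~\ref{thm:readout-last} to the derived ``symbol'' sequence $p_1, p_2, \dots$ over the finite alphabet $Q_1$, this single layer makes $p_{t-1}$ readable from its output at position $t$ (with a fixed margin surviving normalization), while it simultaneously forwards $w_t$ and $p_t$. Since the $A, B, \phi$ of an SSM layer are arbitrary maps of their inputs, I absorb the finite-precision decoding of $p_t$ from $z_t^{(L_1)}$ into these maps, so that the layer effectively operates on $p_t \in Q_1$ as required by the lemma; the initial condition is set so that the ``previous state'' at $t=1$ is $q_0$.

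The output of layer $L_1+1$ thus encodes exactly the pair $\langle p_{t-1}, w_t\rangle \in \Sigma_2$ (together with $p_t$, which I carry along). Feeding this into the $\mathcal{A}_2$-SSM in layers $L_1+2, \dots, L_1 + L_2 + 1$ --- again absorbing the decoding of $\langle p_{t-1}, w_t\rangle$ and the token embedding $e_2(\cdot)$ into the first of these layers' $A, B, \phi$ --- produces, at the top, a representation from which $q_t'$ can be read at finite precision, by the assumed correctness of that SSM. A final readout $\rho$ combines $q_t'$ with the forwarded $p_t$ into the tuple $\langle q_t', p_t\rangle \in Q_2 \times Q_1$, which is precisely the state of $\mathcal{A}_2 \wreath \mathcal{A}_1$.

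I expect the main obstacle to be the finite-precision bookkeeping across the composition: each stage (the $\mathcal{A}_1$-SSM, the readout layer, the $\mathcal{A}_2$-SSM) is individually finite-precision, but I must ensure that the decoded values fed between stages stay bounded away from their rounding thresholds after the normalization in (\ref{eq:mix}). This is where the constant-margin guarantees of Lemmas~\ref{thm:set-reset} and~\ref{thm:readout-last} --- and the fact that all intermediate quantities range over finite sets --- do the work, letting the arbitrary inter-stage maps recover exact symbols rather than merely approximate ones.
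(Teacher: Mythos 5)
Your proposal is correct and follows essentially the same route as the paper's proof: stack the $L_1$ layers for $\mathcal{A}_1$ (augmented with dimensions forwarding $w_t$ and the $\mathcal{A}_1$-state), insert one layer built from Lemma~\ref{thm:readout-last} to make the delayed state $p_{t-1}$ available at position $t$, then run the $L_2$ layers for $\mathcal{A}_2$, absorbing all finite-precision decoding between stages into the arbitrary maps $A$, $B$, $\rho$. The remaining differences are cosmetic bookkeeping (you carry $p_t$ along explicitly where the paper duplicates channels, and you make the $t=1$ initialization explicit, a detail the paper leaves implicit), so nothing further is needed.
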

We note that a conceptually related result holds for transformers \citep[Lemma 12 in][]{liu2022transformers}. However, SSMs allow a simpler and length-independent construction, as they do not require positional encodings to implement such a construction. 

\begin{proof}
The lower layers are based on the SSM modeling $\mathcal{A}_1$.
We duplicate each channel, so we now have $2d$ dimensions.
We further add $d$ further dimensions that directly pass on the input embeddings, i.e., $A\equiv 0$, $B \equiv 1$, $\operatorname{Mix}_j \equiv Id$ on these dimensions.

In the resulting SSM, $z^{L_1}_t$ indicates both $w_t$ itself, and the state reached by $\mathcal{A}_1$ after reading $w_{1\dots t}$. The state is redundantly indicated by two separate sets of $d$ dimensions; the character $w_t$ is indicated by $d$ further state.

Note, however, that the second automaton in the cascade product requires access to the state $q_{t-1}$ rather than $q_t$.

For this, we add a layer provided by Lemma \ref{thm:readout-last}, of width $4|Q|$.
Additional $2d$ dimensions pass on (1) $w_t$, and (2) the state that $\mathcal{A}_1$ reaches after reading the prefix $w_{1\dots t}$.

We now have $L_1+1$ layers where $z^{L_1+1}_t$ has $2d+4|Q|$ dimensions and indicates (1) $w_t$, (2) the state that $\mathcal{A}_1$ reaches after reading the prefix $w_{1\dots t}$, (3) the state that $\mathcal{A}_1$ reaches after reading the prefix $w_{1\dots t-1}$.

The first and third piece of information are now fed into the second SSM; the second piece is passed on in $d$ additional dimensions.
As we allowed $A$ and $B$ to be arbitrary functions, we redefine these in the lowest layer of that second SSM to read out from the $4|Q|$-dimensional component indicating (3), providing the desired second-to-last state. 

We have constructed an SSM with $L_1+L_2+1$ layers, where $z^{L_1+L_2+1}_t$ indicates (1) $w_t$, (2) the state that $\mathcal{A}_1$ reaches after reading the prefix $w_{1\dots t}$, (3) the state  that $\mathcal{A}_2$ reaches after reading the prefix $w_{1\dots t}$ pasted with the state sequence of $\mathcal{A}_1$.
This information is sufficient for reading out the state sequence of $\mathcal{A}_2 \wreath \mathcal{A}_1$.

Note that the number of channels may not be consistent, as it is $3d$ in the top and bottom parts, but $2d+4|Q|$ in the middle; we simply pad to the larger dimensionality.
\end{proof}

We are now ready to show the existence of length-generalizing SSMs for any star-free state tracking problem, and conclude with the theorem:
\begin{thm}[Restated from Theorem~\ref{thm:regular}]
Let $\mathcal{L}$ be a regular language.
The following are equivalent:
\begin{enumerate}
    \item There is an SSM satisfying \textsc{nonnegative} that predictively models $\mathcal{L}$ at all input lengths, at finite precision
    \item $\mathcal{L}$ is star-free.
\end{enumerate}
\end{thm}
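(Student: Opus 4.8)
The plan is to establish the equivalence by proving both directions separately, leveraging the machinery already developed in the excerpt. For the direction $(2) \Rightarrow (1)$, I would start from the cited Fact that every star-free regular language is recognized by an iterated cascade product of set-reset automata, $(\dots(\mathcal{A}_1 \wreath \dots)\wreath \mathcal{A}_{n-1}) \wreath \mathcal{A}_n$. The idea is to induct on the number of factors in the cascade. The base case is a single set-reset automaton, which is handled directly by Lemma~\ref{thm:set-reset}, giving a single-layer SSM that outputs the full state sequence at finite precision (and crucially, with bounded state values and the \textsc{nonnegative} gate property, since the gates there are $0$ or $1$). For the inductive step, I would apply Lemma~\ref{thm:cascade-ssm}: given an SSM realizing the state sequence of the inner cascade $\mathcal{A}_1$ and an SSM realizing $\mathcal{A}_2$, the lemma produces an SSM with $L_1+L_2+1$ layers realizing the state sequence of $\mathcal{A}_2 \wreath \mathcal{A}_1$, again at finite precision. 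Assembling these inductively yields an SSM that tracks the state of the full cascade product, hence of the recognizing automaton for $\mathcal{L}$.

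Having obtained an SSM that outputs, at each position $t$, the state reached by the recognizing automaton after reading $w_{1\dots t}$, the final move for $(2)\Rightarrow(1)$ is to convert state-tracking into predictive modeling. Since the predictive set (\ref{eq:predictive-set}) at a valid prefix is a function solely of the current automaton state---it is determined by which symbols $\sigma$ keep the automaton on a path to an accepting state---I would let the readout $\rho$ map the finitely many states to their corresponding next-symbol sets. Because $\rho$ is permitted to be an arbitrary finite-precision function and the number of states is finite, this is immediate. I must also verify that every component used satisfies \textsc{nonnegative}: the set-reset construction uses gates in $\{0,1\}$, the forwarding and second-to-last-symbol readout layer of Lemma~\ref{thm:readout-last} uses gates in $\{1/4, 0, 1\}$ or similar, and the pass-through dimensions use gate $0$ or $1$; all are nonnegative, so the assembled SSM satisfies \textsc{nonnegative}.

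For the direction $(1)\Rightarrow(2)$, equivalently its contrapositive, I would invoke Corollary~\ref{eq:nonnegative-lower-bound-periodic}: any SSM satisfying \textsc{nonnegative} with finite precision cannot recognize a non-star-free regular language. Since predictive modeling can be converted into recognition by adding one SSM layer (as noted in the framework discussion, one checks whether each symbol lies in the predictive set at the preceding position, and this added layer can again be made \textsc{nonnegative}), an SSM predictively modeling a non-star-free language would yield an SSM recognizing it, contradicting the Corollary. Hence $(1)$ forces $\mathcal{L}$ to be star-free.

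The main obstacle is the finite-precision bookkeeping in the $(2)\Rightarrow(1)$ direction: I must confirm that iterating Lemma~\ref{thm:cascade-ssm} does not cause state magnitudes or required precision to blow up with depth or input length. The key is that each set-reset layer and each forwarding/readout layer keeps the state bounded by an absolute constant independent of input length (as established in the proofs of Lemmas~\ref{thm:set-reset} and~\ref{thm:readout-last}, where margins survive normalization), so the composition of a fixed finite number of such layers---the cascade has a fixed depth $n$ determined by $\mathcal{L}$, not by the input---remains at finite precision with bounded values. I would emphasize that the depth and width depend only on the Krohn--Rhodes decomposition of $\mathcal{L}$'s syntactic monoid, so they are constants, and thus the single assembled SSM works uniformly at all input lengths, which is exactly the length-generalization claim that distinguishes this result from the transformer constructions.
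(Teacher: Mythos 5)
Your proposal is correct and follows essentially the same route as the paper's own proof: the $(2)\Rightarrow(1)$ direction via the Krohn--Rhodes decomposition into cascades of set-reset automata, assembled through Lemmas~\ref{thm:set-reset} and~\ref{thm:cascade-ssm}, followed by a readout $\rho$ mapping automaton states to next-symbol sets, and the $(1)\Rightarrow(2)$ direction via Corollary~\ref{eq:nonnegative-lower-bound-periodic} together with the predictive-modeling-to-recognition reduction. Your explicit checks of gate nonnegativity and of bounded state values under iterated composition only make explicit what the paper's lemmas already guarantee, so no substantive difference remains.
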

\begin{proof}
We need to show:
\begin{enumerate}
\item SSMs at finite precision can predictively model all star-free languages. For each language, a single SSMs is applicable at arbitrary lengths.

\item Assuming \textsc{Nonnegative}, finite-precision SSMs cannot recognize any non-star-free regular language.
\end{enumerate}
The second statement is Corollary~\ref{eq:nonnegative-lower-bound-periodic}; it suffices to prove the first statement.

Assume $\mathcal{L}$ is star-free.
By the Krohn-Rhodes theorem, there is an automaton $\mathcal{A}$ that is a cascade product of some set-reset automata that recognizes $\mathcal{L}$.
By Lemmas~\ref{thm:set-reset} and \ref{thm:cascade-ssm}, there is an SSM that computes the state sequence of that automaton.

Now we note that, since $\mathcal{A}$ recognizes $\mathcal{L}$, the state $q$ after reading ${\bf w}$ is sufficient for determining the set of characters that can follow this prefix in any element of $\mathcal{L}$.
For, assume otherwise, then there are words ${\bf w}$, ${\bf w'}$ such that $u(q_0, {\bf w}) = u(q_0, {\bf w'})$ and $\sigma \in \Sigma$ such that ${\bf w}\sigma\Sigma^* \cap \mathcal{L} \neq \emptyset$ but ${\bf w'}\sigma\Sigma^* \cap \mathcal{L} = \emptyset$; then $u(q_0, {\bf w}\sigma) = u(q_0, {\bf w'}\sigma)$ but the set $R$ (\ref{eq:recog-set}) is reachable from $u(q_0, {\bf w}\sigma)$ but not $u(q_0, {\bf w'}\sigma)$, contradiction.

Hence, the SSM's outputs can be transformed, by composing $\rho$ with a map from states to next-character sets, to predictively model $\mathcal{L}$.
\end{proof}

\begin{thm}\label{thm:solvable}
SSMs with complex-valued coefficients evading both \textsc{nonnegative} and \textsc{time-invariant} can represent all regular languages known to be in $\operatorname{TC}^0$.
\end{thm}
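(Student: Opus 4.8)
The plan is to identify the regular languages known to lie in $\operatorname{TC}^0$ with the \emph{solvable} regular languages, and then to simulate the corresponding automata by SSMs that are allowed complex, possibly non-nonnegative and time-varying, gates. Recall (the Barrington--Th\'erien characterization) that a regular language is known to be in $\operatorname{TC}^0$ exactly when its syntactic monoid is solvable: every regular language lies in $\operatorname{NC}^1$, and placing a non-solvable regular language (e.g.\ the word problem of $S_5$) in $\operatorname{TC}^0$ would collapse $\operatorname{NC}^1 = \operatorname{TC}^0$. So it suffices to show that, for every language $\mathcal{L}$ whose minimal automaton $\mathcal{A}$ has a solvable transformation monoid, there is an SSM of the unrestricted (complex, evading both \textsc{nonnegative} and \textsc{time-invariant}) type that computes the state sequence of $\mathcal{A}$ at finite precision; the conversion from state tracking to predictive modeling and recognition is then identical to the last step in the proof of Theorem~\ref{thm:regular}.

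First I would invoke the Krohn--Rhodes decomposition \citep{krohn1965algebraic}: $\mathcal{A}$ is recognized by an iterated cascade product of simple automata, where each factor is either a set-reset automaton or a simple-group automaton, and the simple groups occurring are exactly the simple groups dividing the transformation monoid of $\mathcal{A}$. Since that monoid is solvable, the only simple groups that can appear are the cyclic groups $\mathbb{Z}/p$ of prime order. Hence $\mathcal{A}$ decomposes as a cascade $(\dots(\mathcal{A}_1 \wreath \dots)\wreath\mathcal{A}_{n-1})\wreath\mathcal{A}_n$ in which each $\mathcal{A}_i$ is either a set-reset automaton or a $\mathbb{Z}/p$ group automaton. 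By Lemma~\ref{thm:cascade-ssm} it is enough to give, for each single factor, an SSM that computes its state sequence; set-reset factors are handled by Lemma~\ref{thm:set-reset}, so only the cyclic factors remain.

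The key new step is a one-layer, complex-valued modular counter. For a $\mathbb{Z}/p$ automaton with increment $k_{\sigma}\in\{0,\dots,p-1\}$ on symbol $\sigma$ and start state $q_0$, set $B\equiv 0$, $h_0=\exp(2\pi i q_0/p)$, and $A(e_\sigma)=\exp(2\pi i k_\sigma/p)$, a unit-modulus complex number. Then
\[ h_t = h_0\prod_{s=1}^{t} A(x_s) = \exp\!\left(\tfrac{2\pi i}{p}\bigl(q_0+\textstyle\sum_{s=1}^{t}k_{x_s}\bigr)\right) = \exp\!\left(\tfrac{2\pi i}{p}\,q_t\right), \]
so $h_t$ is exactly the $p$-th root of unity encoding the current state $q_t$. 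This gate is complex and depends on the input, hence evades both \textsc{nonnegative} and \textsc{time-invariant}. Crucially, $|h_t|=1$ for all $t$, so the state never converges or blows up: it cycles through the $p$ fixed, pairwise well-separated points $\{\exp(2\pi i j/p)\}_{j=0}^{p-1}$, which are distinguishable at finite precision and, having fixed modulus, survive $\operatorname{Norm}$. This is precisely the behavior that the convergence argument of Theorem~\ref{thm:parity} rules out under \textsc{nonnegative} or \textsc{time-invariant}, so the present construction is exactly the escape route left open by that impossibility result.

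The remaining work is assembly: feed the factor-wise SSMs into Lemma~\ref{thm:cascade-ssm} to obtain a single SSM computing the state sequence of the full cascade, and then compose the read-out $\rho$ with the map from states of $\mathcal{A}$ to admissible next-symbol sets, exactly as in the proof of Theorem~\ref{thm:regular}. The main obstacle I anticipate is not the individual simulations but making the algebraic reduction airtight: one must correctly invoke that solvability of the transformation monoid forces all Krohn--Rhodes group factors to be prime cyclic, and check that the cascade lemma still composes cleanly when some factors are group automata with complex, unit-modulus states---in particular that the intermediate $\operatorname{Norm}$ steps and the forwarding of the lower automaton's state into the gate of the next factor (as in Lemma~\ref{thm:cascade-ssm}) preserve finite-precision separation of the root-of-unity codes. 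Since Lemma~\ref{thm:cascade-ssm} is already stated for arbitrary finite-state automata and all our factor simulations keep states in fixed, finite, bounded, well-separated sets, I expect this composition to go through without new ideas once the modular counter above is in hand.
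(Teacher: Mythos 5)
Your proposal is correct and follows essentially the same route as the paper's proof: both rest on the characterization of regular languages known to be in $\operatorname{TC}^0$ as those with solvable syntactic monoids \citep{barrington1992regular}, decompose the recognizing automaton into a cascade product of set-reset automata and modular counters \citep{straubing2012finite, krohn1965algebraic}, implement the cyclic factors with unit-modulus complex gates (the paper uses $A(e_1)=e^{2\pi i/k}$, $B\equiv 0$, $h_0=1$; your per-symbol-increment version $A(e_\sigma)=e^{2\pi i k_\sigma/p}$ is a mild generalization of the same idea), and assemble everything via Lemma~\ref{thm:set-reset} and Lemma~\ref{thm:cascade-ssm}. Your write-up merely makes explicit two steps the paper leaves terse---that solvability forces the Krohn--Rhodes group factors to be prime cyclic, and that the root-of-unity states remain well-separated under normalization at finite precision---so no gap remains.
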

    We we do not use this theorem in the main paper, due to the nonexistence (as far as we know) of implemented SSMs with this property.
\begin{proof}
SSMs evading both \textsc{nonnegative} and \textsc{time-invariant} can count modulo any integer $k$, using $d=1$ and $A(e_1) = e^{2\pi i/k}$, $A(e_0) = 1$, $B \equiv 0$, $h_0=1$.
This is a generalization of the construction for PARITY described in Section~\ref{sec:app:parity}, since $e^{2\pi i/2}=-1$.

    The set of regular languages known to be in $\operatorname{TC}^0$ is the set of regular languages whose syntactic monoid contains no non-solvable groups \citep{barrington1992regular}.
    These languages are recognized by cascade products of set-reset automata and automata perfoming modular counting \citep{straubing2012finite}.
    By the remark above, together with Lemma~\ref{thm:set-reset} and Lemma~\ref{thm:cascade-ssm}, such cascade products can be simulated by SSMs.
\end{proof}

\subsection{Maintaining Counters}\label{sec:app:counting}

As the first step in showing Theorem~\ref{thm:counter-languages}, we show that SSMs can maintain unbounded counters, and that one can read out the values of such counters, up to finite bounds, even at finite precision:
\begin{lemma}\label{thm:counter_app}
Let $C > 0$ be an integer.
Let any function $u : \Sigma \rightarrow \mathbb{Z}^C$ be given.
  Let $L \in \mathbb{N}$.
  Then a one-layer SSM with finite precision can compute, at each position $i=1,\dots,T$:
  \begin{equation}
  \operatorname{max}\left(\operatorname{min}\left(    \sum_{j=1}^i u(w_i), L\right), -L\right)
  \end{equation}
  in the sense that $\rho$ can read this out from $z_i^{(1)}$ with finite precision.
\end{lemma}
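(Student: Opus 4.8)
The plan is to realize the $C$ counters directly in the linear recurrence and then extract the \emph{clamped} values through the nonlinearity in $\operatorname{Mix}_2$, being careful that the clamping happens \emph{before} the normalization step. Concretely, I would work with a state of dimension $C$ (plus a few auxiliary dimensions introduced below), set the gate to $A \equiv \mathbf{1}$, and set the increment to $B(e_\sigma) = u(\sigma) \in \mathbb{Z}^C$. Then the recurrence (\ref{eq:recurrence-ssm}) unrolls to $h_t = \sum_{j=1}^t u(w_j)$, i.e.\ $h_t$ holds the exact vector of running counter values. Since $u$ is integer-valued, each $h_t$ is an integer vector, and under the finite-precision convention (unbounded integer bits, $p$ fractional bits) it is represented exactly, no matter how large $t$ grows.

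The subtlety is the readout, and in particular the normalization inside $\phi$ (\ref{eq:mix}). One cannot simply take $\operatorname{Mix}_1, \operatorname{Mix}_2$ to be the identity and let $\rho$ perform the clamp: $\operatorname{Norm}$ (RMSNorm) rescales $h_t$ by its magnitude, so that two states with very different magnitudes but the same direction (e.g.\ $(1000,2000)$ and $(1,2)$) become indistinguishable, even though they clamp to different values. The clamping must therefore be carried out before normalization, where the unbounded magnitudes still carry full information. I would do this in $\operatorname{Mix}_2$, exploiting the identity, valid for integer $c$,
\begin{equation}
\max(\min(c,L),-L) = -L + \sum_{j=-L+1}^{L} \mathbf{1}[c \ge j].
\end{equation}
Each indicator $\mathbf{1}[c\ge j]$ can be written as $\sigma(M(c-j+\tfrac12))$ for a sufficiently large constant $M$: for integer $c$ the argument has absolute value at least $\tfrac{M}{2}$, so at precision $p$ the sigmoid rounds to exactly $1$ (if $c\ge j$) or exactly $0$ (if $c<j$), with a fixed margin, uniformly in the possibly huge value of $c$. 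A GLU (or SwiGLU) unit computes exactly such gated quantities: setting its value branch to the constant $\mathbf{1}$, the gate branch produces the $2L$ numbers $\sigma(M(c_k-j+\tfrac12))$ for each counter $k=1,\dots,C$ and threshold $j$, a total of $2LC$ bounded entries in $[0,1]$. I would also append one dummy dimension held constant at $1$.

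After $\operatorname{Mix}_2$ the resulting vector $v$ has all entries in $\{0,1\}$ together with the dummy, so $\|v\|$ is bounded above and, thanks to the dummy, bounded away from zero. Hence $\operatorname{Norm}$ merely rescales $v$ by a factor taking finitely many values, all bounded away from $0$ and $\infty$; the on/off pattern of the thresholds is preserved with a finite-precision margin, and $\operatorname{Mix}_1$ can be taken to be the identity. There are only $(2L+1)^C$ possible clamped-value vectors, and distinct clamped values induce distinct threshold patterns, hence distinct and well-separated outputs $z_t$. An arbitrary $\rho$ can therefore recover, for each counter $k$, its threshold pattern, sum the ``on'' indicators, and subtract $L$ to output $\max(\min(\sum_{j\le t} u(w_j)_k, L), -L)$ exactly at finite precision.

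The main obstacle to watch is precisely the interaction with $\operatorname{Norm}$: the whole construction hinges on collapsing the unbounded counter to a bounded, finitely-valued quantity \emph{before} normalization (so magnitude is not yet destroyed), while ensuring the pre-norm vector never degenerates to zero (handled by the dummy dimension) and that the saturating GLU/SwiGLU gates resolve cleanly at integer inputs of arbitrary magnitude (handled by the margin of the sharp sigmoid). Everything else --- the recurrence, the exactness of the integer counters, and the final inversion by $\rho$ --- is routine.
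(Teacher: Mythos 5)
Your proof is correct, and it isolates exactly the right obstacle---the clamp must happen before $\operatorname{Norm}$, since RMSNorm erases magnitude---but your readout mechanism differs from the paper's. The paper replicates the running counter into $d = 2L+1$ state dimensions (setting every coordinate of $B(e_x)$ to $u(x)$, with $A \equiv \mathbf{1}$), then inside $\phi$ adds the fixed offset vector $[0, 1, -1, 2, -2, \dots, -L, L]$ \emph{before} normalizing, and lets $\rho$ decode the clamped value from the resulting \emph{sign pattern}: which entries of $h_t + \text{offsets}$ are negative, zero, or positive determines $\max(\min(c,L),-L)$, and signs are invariant under RMSNorm's positive rescaling, so the interaction with normalization is trivial and no dummy dimension is needed (the offsets guarantee the pre-norm vector is never zero). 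Your route instead keeps the recurrent state at dimension $C$ and performs the collapse via saturating GLU gates computing the threshold indicators $\mathbf{1}[c \ge j]$ with a large-slope sigmoid, decoded through the counting identity $\max(\min(c,L),-L) = -L + \sum_{j=-L+1}^{L} \mathbf{1}[c \ge j]$. Both constructions are sound; the trade-offs are that your version uses a smaller recurrent state and produces an explicit bounded encoding of the clamped value, at the cost of depending on the availability of (Swi)GLU components in $\operatorname{Mix}_2$, on a precision-dependent slope constant $M$, and on a saturation-plus-rounding argument, whereas the paper's version needs only a linear shift before $\operatorname{Norm}$ and sign tests in $\rho$, making its robustness to normalization immediate by scale-invariance rather than by a margin computation with a dummy coordinate.
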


\begin{proof}
Define $d = 2L+1$.
Define $h_0 = {\bf 0} \in \mathbb{R}^d$.
For each $x \in \Sigma$, define $A(e_x) = {\bf 1} \in \mathbb{R}^d$ and $B(e_x)_i \in \mathbb{R}^d$ by $B(e_x)_i = u(x)$.
In order to read out the state $h_t$ up to a limit $L$, we define
    \begin{equation}
        \phi(h_t, x_t) = \operatorname{Norm}(h_t + [0, 1, -1, 2, -2, \dots, -L, L])
    \end{equation}
    By testing which entries of the result are negative or positive, one can read out the state up to $L$ even after rounding $\phi(h_t, x_t)$ to finite precision.
    The proof straightforwardly extends to multiple counters.
\end{proof}

We are ready to prove the theorem:
\begin{thm}[Restated from Theorem~\ref{thm:counter-languages}]\label{cor:counter-languages_app}
The languages Dyck-1, Shuffle-Dyck, n-ary Boolean Expressions, \(a^nb^n\), \(a^nb^nc^n\), and \(a^nb^nc^nd^n\), (defined in Appendix~\ref{sec:language-definitions}) can each be predictively modeled by an SSM.
\end{thm}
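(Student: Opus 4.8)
The plan is to reduce each of the listed languages to one or more applications of Lemma~\ref{thm:counter_app}, which already guarantees that a single SSM layer can maintain, at finite precision, any finite collection of integer counters, each clamped to a window $[-L,L]$ and read out by $\rho$. The key observation is that, for \emph{predictive} modeling, one never needs the exact value of an unbounded counter: to decide which symbols may legally follow a valid prefix it suffices to know, for each counter, whether its value is $0$ or strictly positive, together with a bounded amount of finite-state ``phase'' information recording which symbols have already appeared. Both kinds of information are themselves clamped counters, so everything is obtained from a single invocation of Lemma~\ref{thm:counter_app} with a suitable increment map $u:\Sigma\to\mathbb{Z}^C$ and a small bound (indeed $L=1$ throughout). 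The read-out $\rho$ is then a fixed finite lookup from the bounded clamped values (and the current symbol) to the predictive set~(\ref{eq:predictive-set}).

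I would organize the argument by the structure of the counters. For Dyck-1 I set $C=1$, with the opening bracket incrementing and the closing bracket decrementing a single counter: a closing bracket is legal iff the counter is positive, an opening bracket is always legal, and EOS is legal iff the counter equals $0$; clamping at $L=1$ distinguishes exactly these cases. Shuffle-Dyck-$k$ is identical with one such counter per bracket type ($C=k$), EOS being legal iff all counters are $0$. The n-ary Boolean expressions reduce in the same way to a single nesting/completeness counter, the legal continuations being determined by whether the current expression still requires an operand. In all these cases no phase information is needed, because the set of legal \emph{opening} moves does not depend on the history.

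The languages $a^nb^n$, $a^nb^nc^n$, $a^nb^nc^nd^n$ require one extra ingredient. Here I use the difference counters $\#a-\#b$, then $\#b-\#c$, then $\#c-\#d$, but these alone are insufficient: the prefixes $a^{i}$ and $a^{i+j}b^{j}$ yield the same counter value yet admit different continuations. I therefore additionally track, via the same lemma, the clamped occurrence counts $\min(\#b,1),\min(\#c,1),\dots$, which act as ``symbol-seen'' phase bits (equivalently, a set-reset component as in Lemma~\ref{thm:set-reset}). Given the difference counters (tested for $0$ versus positive) together with these bits, a routine case analysis over the phases shows the predictive set is determined: for instance, for $a^nb^nc^n$, in the phase ``$b$ seen, $c$ not'' a further $b$ is legal iff $\#a-\#b>0$ and a $c$ is legal iff $\#a-\#b=0$, while in the ``$c$ seen'' phase EOS is legal iff $\#b-\#c=0$. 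The analogous enumeration for the four-symbol language uses three difference counters and three seen-bits.

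The main obstacle — really the only non-mechanical part — is precisely this disambiguation: recognizing that the raw counters do not suffice for the $a^nb^n$-type languages, and that the missing information is exactly the finite phase, which must be folded into the same counter layer so that a single finite read-out $\rho$ closes the argument. Finite precision causes no difficulty, since the raw counts live in the integer part of $h_t$, which the precision model represents exactly, while the clamped outputs $z_t$ take finitely many values; hence $\rho$ operates correctly at every length. I would finally note that the construction uses the nonnegative gate $A\equiv 1$, so all these languages are in fact modeled by an SSM satisfying \textsc{nonnegative}.
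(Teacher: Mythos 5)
Your proposal is correct and takes essentially the same route as the paper: both instantiate Lemma~\ref{thm:counter_app} (at $L=1$, with the \textsc{nonnegative} gate $A\equiv 1$) using a per-language increment map $u:\Sigma\rightarrow\mathbb{Z}^C$, and then finish with a finite case analysis over clamped counter signs encoded in the read-out $\rho$. The only difference is cosmetic: where you add clamped ``symbol-seen'' bits as phase information for the $a^nb^n$-family, the paper instead forwards the current input symbol in extra dimensions, which for valid prefixes of these block-structured languages carries the same phase information.
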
 
\begin{proof}
    For each of these languages, we first define an assignment $u : \Sigma\rightarrow\mathbb{Z}^C$:
    \begin{align*}
    \text{For } \text{$a^nb^n$:} & \text{ (here, $C$=1)}\\
        u(a) &= 1 \\
        u(b) &= -1 \\
    \text{For } \text{Dyck-1:} & \text{ (here, $C$=1)}\\
        u(``(") &= 1 \\
        u(``)") &= -1 \\
        \text{For Shuffle-Dyck-$k$} & \text{ (here, $C=k$)} \\
        u(``(_i") &= (0, \dots, 0, 1, 0\dots 0)\ \ \ \  \text{ where $1$ is in the $i$-th slot} \\
        u(``)_i") &= (0, \dots, 0, -1, 0 \dots 0)\ \ \ \  \text{ where $-1$ is in the $i$-th slot} \\        
     \text{For }   a^nb^nc^n:  & \text{ (here, $C$=2)}\\
        u(a) &= (1,0) \\
        u(b) &= (-1,1) \\
        u(c) &= (0,-1) \\
      \text{For }  a^nb^nc^nd^n:  & \text{ (here, $C$=3)}\\
        u(a) &= (1,0,0) \\
        u(b) &= (-1,1,0) \\
        u(c) &= (0,-1,1) \\
        u(d) &= (0,-1,-1) \\
      \text{For }   \text{Boolean Expressions:} & \text{ (here, $C$=1)}\\
        u(\langle VALUE\rangle) &= -1 \\
        u(\langle n-ARY\rangle) &= +n
    \end{align*}
For each of these mappings, we use Lemma~\ref{thm:counter_app} at $L=1$ to construct a one-layer SSMs that can, for each of the $C$ counters, distinguish the values $\leq -1, 0, \geq 1$.

In parallel, we pass on the input symbol itself in $\log |\Sigma|$ further dimensions.

Overall, the output $z_t$ of single SSM layer provides, at every position, both the original symbol in $\Sigma$ and an element of $\{\leq -1, 0 \geq 1\}^C$.

We can thus view the output of this layer as a string over an enriched string of symbols $\sigma_1 \times \sigma_2 \in \Sigma \times \{\leq -1, 0 \geq 1\}^C$. Based on this, one can predictively model these languages as follows.

For Dyck-1, the next token is EOS or ``('' if $\sigma_2=0$, and ``('' or ``)'' after any other prefix (note that predictive modeling assumes valid prefixes).

Shuffle-$k$-Dyck is similar: EOS is allowed if and only if all counters are zero. An opening bracket is always allowed. A closing bracket is only allowed if the respective counter is $>0$.

For $a^nb^n$, the next token is $a$ or $b$ if $\sigma_1=a$; $b$ if $\sigma=(a,\geq 1)$ or $(b,\geq 1)$; EOS if $\sigma=(b,0)$.

Constructions for $a^nb^nc^n$, $a^nb^nc^nd^n$ are similar.

For Boolean expressions, the next token is $\langle n-ARY\rangle$ or EOS if $\sigma_2=0$, and any other token otherwise.

All of these constructions can be encoded using an appropriate function $\rho$ applying to $z_t$.    
\end{proof}

\subsection{Bounded-Depth Dyck}\label{app:sec:bounded-dyck}

\begin{defin}
The language $Dyck_{K,h}$ \citep{hewitt2020rnns, yao-etal-2021-self} is given by the CFG with the nonterminals $\{S_0, S_1, \dots, S_{h-1}, S_h\}$ and the following production rules:
\begin{align*}
    S_h \rightarrow& (_1 S_{h-1} )_1 | \dots | (_K S_{h-1} )_K | \epsilon\\
    S_{h-1} \rightarrow& (_1 S_{h-2} )_1 | \dots | (_K S_{h-2} )_K | \epsilon\\
    \dots & \dots \\
    S_2 \rightarrow& (_1 S_{1} )_1 | \dots | (_K S_{1} )_K | \epsilon \\
    S_1 \rightarrow& (_1 S_{0} )_1 | \dots | (_K S_{0} )_K | \epsilon \\
    S_0 \rightarrow& \epsilon
\end{align*}
and the start symbol $S_h$.
\end{defin}

\begin{thm}[Restated from Theorem~\ref{thm:bounded-dyck}]\label{thm:bounded-dyck_app}
    There is a two-layer SSM with $d = \mathcal{O}(h \log K)$ that predictively models $Dyck_{K,h}$ at all input lengths, at finite precision.
\end{thm}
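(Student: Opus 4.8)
The plan is to build the two-layer SSM exactly along the lines sketched after the theorem statement: the first layer computes the current nesting depth as a bounded counter, and the second layer maintains $h$ independent set-reset registers --- one per depth level --- each recording the type of the bracket that was opened to reach that level. The crucial enabling observation is that, because the gate and increment of the second layer are functions of its input $z_t^{(1)}$, and $z_t^{(1)}$ exposes the depth computed in the first layer, the second layer's write operations can be \emph{conditioned on the current depth} even though a single gate $A(x_t)$ cannot by itself depend on the running state. This is what lets a single set-reset mechanism (Lemma~\ref{thm:set-reset}, as used in the Flip-Flop construction of Theorem~\ref{thm:flip_flop_app}) be replicated across $h$ depth slots.

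\emph{First layer.} I would instantiate Lemma~\ref{thm:counter_app} with a single counter ($C=1$) and cap $L=h$, setting $u((_j)=+1$ and $u()_j)=-1$ for every bracket type, so that the layer's cumulative sum is exactly the running depth and its value up to $h$ can be read out at finite precision. In parallel I forward the current input symbol in $\mathcal{O}(\log K)$ additional channels, as in the proof of Theorem~\ref{thm:counter-languages}. Since predictive modeling only quantifies over valid prefixes, the depth always lies in $[0,h]$, so the counter state stays bounded and the finite-precision readout of Lemma~\ref{thm:counter_app} applies without modification. The output $z_t^{(1)}$ therefore encodes the pair (current symbol, current depth $d\in\{0,\dots,h\}$).

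\emph{Second layer.} I partition the state into $h$ blocks of $\log K$ dimensions, block $r$ serving as the register for depth $r$. Using that $A^{(2)},B^{(2)}$ are arbitrary functions of $z_t^{(1)}$, I define, on block $r$: gate $=0$ and increment $=$ the binary code of $j$ whenever the input symbol is an opening bracket $(_j$ \emph{and} the depth reported by the first layer equals $r$; otherwise gate $=1$ and increment $=0$. Thus an opening bracket overwrites precisely the register of the level it creates, a closing bracket writes nothing, and all other registers persist. A short induction on prefix length --- closings never overwrite a register, and a re-opening at depth $r$ correctly overwrites block $r$ --- shows that after reading any valid prefix of depth $d$, block $d$ holds exactly the type of the last still-unclosed opening bracket. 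Because every register is a reset cell with values drawn from a fixed finite code, the state stays bounded and each nonzero coordinate stays bounded away from zero through RMSNorm, exactly as in Lemma~\ref{thm:set-reset}; a dummy always-one coordinate again prevents division by zero. The register blocks dominate the width, giving $d=\mathcal{O}(h\log K)$.

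\emph{Readout.} Finally $\rho$, reading $z_t^{(2)}$, recovers the depth $d$ and the contents of block $d$ (an indexing that $\rho$, an arbitrary finite-precision map, can perform), and emits: EOS together with all opening brackets when $d=0$; all opening brackets together with the single matching closing bracket $)_j$, where $j$ is the code in block $d$, when $0<d<h$; and only $)_j$ when $d=h$. I expect the main obstacle to be bookkeeping rather than conceptual: verifying that both the depth (layer 1) and the register codes (layer 2) survive the normalization in $\phi$ (Eq.~\ref{eq:mix}) with margins independent of input length. This is handled by reusing the boundedness and sign-readout arguments already established for Lemmas~\ref{thm:counter_app} and \ref{thm:set-reset}, since the depth is bounded by $h$ and each register takes finitely many bounded values.
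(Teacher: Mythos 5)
Your construction is correct and takes essentially the same approach as the paper's proof: a first layer instantiating Lemma~\ref{thm:counter_app} to expose the bounded depth alongside the current symbol, and a second layer running $h$ parallel set-reset registers of width $\log K$ in the spirit of Lemma~\ref{thm:set-reset}, yielding $d=\mathcal{O}(h\log K)$. The only immaterial difference is that you write only opening brackets into the depth-indexed registers and select the block via the forwarded current depth, whereas the paper's set-reset automata record the last bracket of either type at each depth and recover the relevant level as the maximum depth whose last recorded bracket is an opening one.
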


\begin{proof}
In the first layer, we calculate each token depth up to $h$ using Lemma~\ref{thm:counter_app}.
After the first layer, at each position, the activations will indicate both the depth up to $h$, and the identity of the symbol.
The space of activations is thus $\{0, \dots, h\} \times \{ (_1, )_1, \dots, (_K, )_K\}$.
We then, for each depth $l=1, \dots, h$, define a set-reset automaton (Definition~\ref{def:set-reset}) given by the set $Q_l := \{l\} \times \{ (_1, )_1, \dots, (_K, )_K\}$.
Running all of these set-reset automata will tell us, for each depth, the identity of the last bracket at that depth.
We can deduce the maximum depth $h'$ at which the last bracket is an opening one, and thus infer the set of valid next symbols.
The activity of these set-reset automata can, in parallel, be simulated by a second SSM layer using Lemma~\ref{thm:set-reset}.
We need $h$ such automata, and each SSM has width $\log K$.
\end{proof}

\section{Definitions of Languages}
\label{sec:language-definitions}

Here, we provide formal definitions of languages from the test suite based on \citet{bhattamishra2020ability}. Descriptions follow \citet{bhattamishra2020ability}, and are included here for self-containment.
In all cases, our data generation setup is directly taken from \citep{bhattamishra2020ability}.

\subsection{Regular Languages}
\textbf{Tomita Grammars.} Used primarily as a benchmark language family for assessing sequence to sequence models \citep{tomita1982dynamic}, some of the languages in this family are star-free (with dot-depth of 1) and some non-star-free. All the regular languages of the family are defined on the alphabet \(\Sigma=\{0, 1\}\). Individual language definitions are available in Table \ref{tab:tomitas}.

\(\boldsymbol{D_n}\). We follow the definition of \cite{bhattamishra2020ability} to define the \(D_n\) family of star-free languages. In our experiments, we only generate \(D_2\), \(D_3\), \(D_4\), and \(D_{12}\) languages; \(D_1\) is equivalent to Tomita-2. %, given the proven equivalence of Tomita-2 and \(D_2\). 
All the languages of the family are defined on the alphabet of \(\Sigma=\{a,b\}\).
\(D_n = (aD_{n-1}b)^*\) has level $n$ in the dot-depth hierarchy.

\textbf{PARITY.} PARITY is the set of all strings on the alphabet \(\Sigma=\{0,1\}\) such that the number of 1's is even. This language can be easily recognized by a DFA with just two states.  

\textbf{Others.}  We further have the non-star-free languages \((aa)^*\), \((aaaa)^*\) and \((abab)^*\), and the star-free languages \(aa^*bb^*cc^*dd^*ee^*\), \(\{ab\}^*d\{b,c\}^*\), and \(\{0,1,2\}^*02^*\).

\subsection{Counter Languages}\label{sec:language-defin-counter}
\textbf{Dyck and Shuffle-Dyck.}  Dyck-1 is defined on the alphabet \(\Sigma=\{[,]\}\) and derived using the following CFG production rule:
\begin{math}
S \rightarrow ( S ) | SS | \epsilon
\end{math}.

We further use the family of Shuffle-k languages \citep{shuffle-dyck-def}.
Shuffle-Dyck-k is defined in terms of $\Sigma = \{(_1, )_1, \dots, (_k, )_k\}$.
It is defined as the shuffle of $k$ Dyck-1 languages, each defined in terms of the alphabet $\Sigma_i=\{(_i, )_i\}$ where $i=1,\dots,k$.

\textbf{\textit{n}-ary Boolean Expressions.} 
This is the set of valid expressions over various operators.
We focus on up-to-3-ary expressions, defined using the following grammar:

\begin{math}
S \rightarrow \langle \mathrm{VALUE} \rangle \\
S \rightarrow \langle \mathrm{UNARY\ OPERATOR} \rangle\ S \\
S \rightarrow \langle \mathrm{BINARY\ OPERATOR} \rangle\ S\ S \\
S \rightarrow \langle \mathrm{TERNARY\ OPERATOR} \rangle\ S\ S\ S
\end{math}

This language is recognized by a counter automaton \citep{Fischer1968}.

\paragraph{Others}
We further include the languages of the forms \(a^nb^n\), \(a^nb^nc^n\), and \(a^nb^nc^nd^n\).

\begin{table}[ht]
\small
    \centering
    \begin{tabularx}{\columnwidth}{p{1.5cm}|p{1.5cm}|p{8cm}}
    \hline
         Grammar & Star-Free & Definition\\
         \hline
         1 & Yes & 1* \\
         2 & Yes & (10)* \\
         3 & No & strings without \(1^{2n+1}0^{2m+1}\) substrings\\
         4 & Yes & strings without any 000's substrings\\
         5 & No & strings of even length with an even number of 1's \\
         6 & No & strings where number of 0's - number of 1's is divisible by 3\\
         7 & Yes & 0*1*0*1\\
    \end{tabularx}
    \caption{Tomita Grammars}
    \label{tab:tomitas}
\end{table}

% \clearpage

\begin{table*}[ht]
\centering
\begin{tabular}{|m{2cm}|m{2cm}|m{2cm}|m{2.5cm}|m{2.5cm}|}
\hline
\textbf{Language} & \textbf{Model} & \textbf{Bin-1[1, 50]} & \textbf{Bin-2[51, 100]} & \textbf{Bin-3[101, 150]} \\ \hline
\multicolumn{1}{|c|}{\multirow{4}{*}{Dyck-1}}
 & Transformer & 100.0 & 100.0 & 100.0 \\ \cline{2-5} 
 & Mamba1 & 100.0 & 62.6 & 13.91 \\ \cline{2-5} 
 & Mamba2 & 100.0 & 49.1 & 9.5 \\ \cline{2-5} 
 & Mamba3 & 100.0 & 53.95 & 10.0 \\ \hline
\multicolumn{1}{|c|}{\multirow{4}{*}{Shuffle-2}}
 & Transformer & 100.0 & 100.0 & 93.0 \\ \cline{2-5} 
 & Mamba1 & 100.0 & 49.5 & 2.3 \\ \cline{2-5} 
 & Mamba2 & 100.0 & 61.5 & 8.2 \\ \cline{2-5} 
 & Mamba3 & 100.0 & 65.5 & 9.7 \\ \hline 
\multicolumn{1}{|c|}{\multirow{4}{*}{Shuffle-4}}
 & Transformer & 100.0 & 100.0 & 98.8 \\ \cline{2-5} 
 & Mamba1 & 100.0 & 44.4 & 4.3 \\ \cline{2-5} 
 & Mamba2 & 100.0 & 63.8 & 7.2 \\ \cline{2-5} 
 & Mamba3 & 100.0 & 56.2 & 7.8 \\ \hline 
\multicolumn{1}{|c|}{\multirow{4}{*}{Shuffle-6}}
 & Transformer & 100.0 & 99.9 & 94.0 \\ \cline{2-5} 
 & Mamba1 & 100.0 & 39.4 & 3.4 \\ \cline{2-5} 
 & Mamba2 & 100.0 & 61.2 & 6.75 \\ \cline{2-5} 
 & Mamba3 & 100.0 & 59.6 & 9.85 \\ \hline 
\multicolumn{1}{|c|}{\multirow{4}{*}{Boolean-3}}
 & Transformer & 100.0 & 100.0 & 99.8 \\ \cline{2-5} 
 & Mamba1 & 99.75 & 65.7 & 7.05 \\ \cline{2-5} 
 & Mamba2 & 99.95 & 47.25 & 2.3 \\ \cline{2-5} 
 & Mamba3 & 100.0 & 73.45 & 8.6 \\ \hline 
 \multicolumn{1}{|c|}{\multirow{4}{*}{Boolean-5}}
 & Transformer & 100.0 & 99.8 & 99.0 \\ \cline{2-5} 
 & Mamba1 & 99.9 & 30.05 & 7.6 \\ \cline{2-5} 
 & Mamba2 & 100.0 & 80.2 & 14.9 \\ \cline{2-5} 
 & Mamba3 & 99.25 & 60.7 & 6.25 \\ \hline 
\multicolumn{1}{|c|}{\multirow{4}{*}{\(a^nb^n\)}}
 & Transformer & 100.0 & 100.0 & 100.0 \\ \cline{2-5} 
 & Mamba1 & 100.0 & 4.1 & 0 \\ \cline{2-5} 
 & Mamba2 & 100.0 & 9.4 & 0 \\ \cline{2-5} 
 & Mamba3 & 100.0 & 21.3 & 0 \\ \hline
\multicolumn{1}{|c|}{\multirow{4}{*}{\(a^nb^nc^n\)}}
 & Transformer & 100.0 & 100.0 & 100.0 \\ \cline{2-5} 
 & Mamba1 & 100.0 & 0 & 0 \\ \cline{2-5} 
 & Mamba2 & 100.0 & 7.6 & 0 \\ \cline{2-5} 
 & Mamba3 & 100.0 & 5.1 & 0 \\ \cline{2-5} \hline
\multicolumn{1}{|c|}{\multirow{4}{*}{\(a^nb^nc^nd^n\)}}
 & Transformer & 100.0 & 100.0 & 99.4 \\ \cline{2-5} 
 & Mamba1 & 100.0 & 4.76 & 0 \\ \cline{2-5} 
 & Mamba2 & 100.0 & 0 & 0 \\ \cline{2-5} 
 & Mamba3 & 100.0 & 0 & 0 \\ \cline{2-5} \hline
\end{tabular}
\caption{Accuracies on the counter Languages from the \citet{bhattamishra2020ability} test suite. Transformer results reported based on \citet{bhattamishra2020ability}. For Mamba, we report best settings (chosen based on inputs of length [1,50]) at 1 (Mamba1), 2 (Mamba2), 3 (Mamba3) layers. Results for the best-performing layer count, from the first two bins, are shown in Figure~\ref{fig:results-bhattamishra}. On these languages, there is also a third bin.}
\end{table*}

\begin{table*}[ht]
\centering
\begin{tabular}{|m{2cm}|m{3.5cm}|m{2cm}|m{2.5cm}|}
\hline
\textbf{Language} & \textbf{Model} & \textbf{Bin-1[1, 50]} & \textbf{Bin-2[51, 100]} \\ \hline
\multicolumn{1}{|c|}{\multirow{4}{*}{Tomita 1}}
 & Transformer & 100.0 & 100.0  \\ \cline{2-4} 
 & Mamba1 & 100.0 & 100.0  \\ \cline{2-4} 
 & Mamba2 & 100.0 & 100.0 \\ \cline{2-4} 
 & Mamba3 & 100.0 & 100.0 \\ \hline
\multicolumn{1}{|c|}{\multirow{4}{*}{Tomita 4}}
 & Transformer & 100.0 & 92.4 \\ \cline{2-4} 
 & Mamba1 & 100.0 & 100.0 \\ \cline{2-4} 
 & Mamba2 & 100.0 & 100.0 \\ \cline{2-4} 
 & Mamba3 & 100.0 & 100.0 \\ \hline 
\multicolumn{1}{|c|}{\multirow{4}{*}{Tomita 7}}
 & Transformer & 100.0 & 100.0 \\ \cline{2-4} 
 & Mamba1 & 100.0 & 100.0 \\ \cline{2-4} 
 & Mamba2 & 100.0 & 100.0 \\ \cline{2-4} 
 & Mamba3 & 100.0 & 100.0 \\ \hline 
\multicolumn{1}{|c|}{\multirow{4}{*}{Tomita 2}}
 & Transformer & 100.0 & 100.0 \\ \cline{2-4} 
 & Mamba1 & 100.0 & 100.0 \\ \cline{2-4} 
 & Mamba2 & 100.0 & 100.0 \\ \cline{2-4} 
 & Mamba3 & 100.0 & 100.0 \\ \hline 
\multicolumn{1}{|c|}{\multirow{4}{*}{\(aa^*bb^*cc^*dd^*ee^*\)}}
 & Transformer & 100.0 & 100.0 \\ \cline{2-4} 
 & Mamba1 & 100.0 & 100.0 \\ \cline{2-4} 
 & Mamba2 & 100.0 & 100.0 \\ \cline{2-4} 
 & Mamba3 & 100.0 & 100.0 \\ \hline 
 \multicolumn{1}{|c|}{\multirow{4}{*}{\(\{a, b\}^*d\{b,c\}^*\)}}
 & Transformer & 100.0 & 100.0 \\ \cline{2-4} 
 & Mamba1 & 100.0 & 100.0 \\ \cline{2-4} 
 & Mamba2 & 100.0 & 100.0 \\ \cline{2-4} 
 & Mamba3 & 100.0 & 100.0 \\ \hline 
\multicolumn{1}{|c|}{\multirow{4}{*}{\(\{0, 1, 2\}^*02^*\)}}
 & Transformer & 100.0 & 68.7 \\ \cline{2-4} 
 & Mamba1 & 100.0 & 100.0 \\ \cline{2-4} 
 & Mamba2 & 100.0 & 100.0 \\ \cline{2-4} 
 & Mamba3 & 100.0 & 100.0 \\ \hline
\multicolumn{1}{|c|}{\multirow{4}{*}{\(D_2\)}}
 & Transformer & 74.6 & 3.1 \\ \cline{2-4} 
 & Mamba1 & 100.0 & 100.0 \\ \cline{2-4} 
 & Mamba2 & 100.0 & 100.0 \\ \cline{2-4} 
 & Mamba3 & 100.0 & 100.0 \\ \cline{2-4} \hline
\multicolumn{1}{|c|}{\multirow{4}{*}{\(D_3\)}}
 & Transformer & 80.9 & 8.5 \\ \cline{2-4} 
 & Mamba1 & 100.0 & 100.0 \\ \cline{2-4} 
 & Mamba2 & 100.0 & 100.0 \\ \cline{2-4} 
 & Mamba3 & 100.0 & 100.0 \\ \cline{2-4} \hline
\multicolumn{1}{|c|}{\multirow{4}{*}{\(D_4\)}}
 & Transformer & 90.2 & 3.3 \\ \cline{2-4} 
 & Mamba1 & 100.0 & 100.0 \\ \cline{2-4} 
 & Mamba2 & 100.0 & 100.0 \\ \cline{2-4} 
 & Mamba3 & 100.0 & 100.0 \\ \cline{2-4} \hline
\multicolumn{1}{|c|}{\multirow{4}{*}{\(D_{12}\)}}
 & Transformer & 95.18 & 1.5 \\ \cline{2-4} 
 & Mamba1 & 93.65 & 93.35 \\ \cline{2-4} 
 & Mamba2 & 99.9 & 95.55 \\ \cline{2-4} 
 & Mamba3 & 99.99 & 99.85 \\ \cline{2-4} \hline 
\end{tabular}
\caption{Accuracies on the regular Languages from the \citet{bhattamishra2020ability} test suite - 1st half. Transformer results reported based on \citet{bhattamishra2020ability}. For Mamba, we report best settings (chosen based on inputs of length [1,50]) at 1 (Mamba1), 2 (Mamba2), 3 (Mamba3) layers. Results for the best-performing layer count are also shown in Figure~\ref{fig:results-bhattamishra}.}
\end{table*}

\begin{table*}[ht]
\centering
\begin{tabular}{|m{2cm}|m{3.5cm}|m{2cm}|m{2.5cm}|}
\hline
\textbf{Language} & \textbf{Model} & \textbf{Bin-1[1, 50]} & \textbf{Bin-2[51, 100]} \\ \hline 
\multicolumn{1}{|c|}{\multirow{4}{*}{Parity}}
 & Transformer & 68.7 & 0 \\ \cline{2-4} 
 & Mamba1 & 26.95 & 0 \\ \cline{2-4} 
 & Mamba2 & 80.05 & 4.15 \\ \cline{2-4} 
 & Mamba3 & 91.15 & 16.7 \\ \cline{2-4} \hline 
\multicolumn{1}{|c|}{\multirow{4}{*}{\((aa)^*\)}}
 & Transformer & 100.0 & 0 \\ \cline{2-4} 
 & Mamba1 & 2.1 & 0 \\ \cline{2-4} 
 & Mamba2 & 2.1 & 0 \\ \cline{2-4} 
 & Mamba3 & 4.2 & 0 \\ \cline{2-4} \hline 
\multicolumn{1}{|c|}{\multirow{4}{*}{\((aaaa)^*\)}}
 & Transformer & 100.0 & 0 \\ \cline{2-4} 
 & Mamba1 & 0 & 0 \\ \cline{2-4} 
 & Mamba2 & 0 & 0 \\ \cline{2-4} 
 & Mamba3 & 4.0 & 0 \\ \cline{2-4} \hline  
\multicolumn{1}{|c|}{\multirow{4}{*}{\((abab)^*\)}}
 & Transformer & 100.0 & 2.5 \\ \cline{2-4} 
 & Mamba1 & 0 & 0 \\ \cline{2-4} 
 & Mamba2 & 0 & 0 \\ \cline{2-4} 
 & Mamba3 & 0 & 0 \\ \cline{2-4} \hline  
\multicolumn{1}{|c|}{\multirow{4}{*}{Tomita 3}}
 & Transformer & 75.4 & 10.8 \\ \cline{2-4} 
 & Mamba1 & 25.99 & 12.49 \\ \cline{2-4} 
 & Mamba2 & 36.88 & 17.05 \\ \cline{2-4} 
 & Mamba3 & 60.85 & 29.37 \\ \cline{2-4} \hline  
\multicolumn{1}{|c|}{\multirow{4}{*}{Tomita 5}}
 & Transformer & 29.3 & 0.0 \\ \cline{2-4} 
 & Mamba1 & 15.94 & 0 \\ \cline{2-4} 
 & Mamba2 & 34.5 & 0 \\ \cline{2-4} 
 & Mamba3 & 38.4 & 0 \\ \cline{2-4} \hline 
\multicolumn{1}{|c|}{\multirow{4}{*}{Tomita 6}}
 & Transformer & 88.8 & 0 \\ \cline{2-4} 
 & Mamba1 & 7.2 & 0 \\ \cline{2-4} 
 & Mamba2 & 37.8 & 0 \\ \cline{2-4} 
 & Mamba3 & 54.56 & 0.04 \\ \cline{2-4} \hline    
\end{tabular}
\caption{Accuracies on the regular Languages from the \citet{bhattamishra2020ability} test suite - continued. Transformer results reported based on \citet{bhattamishra2020ability}. For Mamba, we report best settings (chosen based on inputs of length [1,50]) at 1 (Mamba1), 2 (Mamba2), 3 (Mamba3) layers.  Results for the best-performing layer count are also shown in Figure~\ref{fig:results-bhattamishra}.}
\end{table*}

\section{Experimental Details}
All experiments used the Mamba reference implementation\footnote{\url{https://github.com/state-spaces/mamba/blob/main/README.md}}.
xUnless stated otherwise, we followed the defaults given there ( $d_{state}=16$, $d_{conv}=4$, 
    $expand=2$), as we found the default combination to work better than other options. We tuned $d_{model}$ for each language.

\subsection{Test Suite from \texorpdfstring{\citet{bhattamishra2020ability}}{Bhattamishra et al 2020}} \label{sec:app:setup-bhattamishra}

\paragraph{Data Preparation}
For all the languages, we use either the data prepared by \citet{bhattamishra2020ability} or---where not available---their data-generation scripts, allowing full comparability with results they reported for transformers. 
We used their official code and data release at \url{https://github.com/satwik77/Transformer-Formal-Languages} (last commit 48eea2e; MIT license).
Training sets typically consist of 10K samples, with lengths varying between 1 to 50.
There are two heldout bins: one with in-distribution lengths ([1,50]), and one testing length generalization (lengths [51,100]).
The first one was used for hyperparameter optimization.
Each bin typically contains around 2K samples.
However for languages such as $a^nb^n$, where the number of positive examples in each bin was limited, all possible examples for that bin are included.

\paragraph{Hyperparameters}
For each language, we conducted extensive hyperparameter search. We varied the $d_{model}$ parameter in Mamba across the set \{16, 32, 64, 128, 256\}. Additionally, we experimented with the number of layers in our model, ranging from 1 to 3, training each configuration for 100 epochs. For languages where Mamba performed well, this number of layers was sufficient. However, for languages where Mamba struggled, we increased the number of layers up to 12, with little to no success.

We used the AdamW optimizer.
To identify optimal learning rates, we started with a coarse hyperparameter search using values from the set \{0.001, 0.0001, 0.00001\}. If one of these learning rates showed high performance, we conducted a more fine-grained search to find the optimal learning rate. Finally, we varied the batch size from \{16, 32, 64\} for datasets with 10K training examples. For languages like $a^nb^n$ with limited training size, we searched for an optimal batch size within the set \{5, 10\}.

\subsection{FlipFlop}\label{sec:app:setup-flipFlop}

We obtained the dataset of \citet{Liu2023FlipFlop} from their release, \url{ https://huggingface.co/datasets/synthseq/flipflop} (MIT license). Our setup corresponds to the deterministic (``clean'') mode in \citet{Liu2023FlipFlop}. Matching Figure 2 in \citet{Liu2023FlipFlop}, we evaluated both with in-distribution data (matching the distribution of the training dataset) with $p_i=0.8, p_w=0.1, p_r=0.1$, and using an out of distribution sparse tail with $p_i=0.98, p_w=0.01, p_r=0.01$, where $p_i, p_w, p_r$ refer to the probabilities of that instruction appearing in input sequences.

We trained a one-layer Mamba with the default parameters\footnote{From \url{https://github.com/state-spaces/mamba/blob/main/README.md}}, setting $d_{model}$ to 16 with the AdamW optimizer using a learning rate of $3x10^{-4}$ and a batch size of 16.

Following the evaluation criteria for LSTMs in \citet{Liu2023FlipFlop}, we compute the test every 100 training steps on our validation sets of choice, by randomly sampling around $10^3$ samples from each set in every evaluation cycle.

\subsection{Bounded Hierarchical Structure}\label{sec:experimental-dyck}
We built on the official code and data release of \citet{yao-etal-2021-self} at \url{https://github.com/princeton-nlp/dyck-transformer} (last commit: 5d21fcf).
We train a 2-layer Mamba and a 1-layer Mamba on $Dyck_{K,h}$ with $K=8$ and $h=10$. The training set and the validation set contains samples of lengths $\le 700$, while the test set contains samples of lengths $700 \le n \le 1400$. We train Mamba with a varying number of layers $l \in \{1, 2\}$ and $d_{model} \in \{20, 30, 40, 50, 60, 70, 80, 90, 100\}$.
We use the Adam optimizer with an initial learning rate of 0.01 or 0.001, using cross-entropy loss.
After training for 100 epochs (with early stopping allowed in case of convergence), we select the learning rate with the better training performance.

\begin{figure}
    \centering
    \begin{minipage}[b]{0.32\textwidth}
        \includegraphics[width=\textwidth]{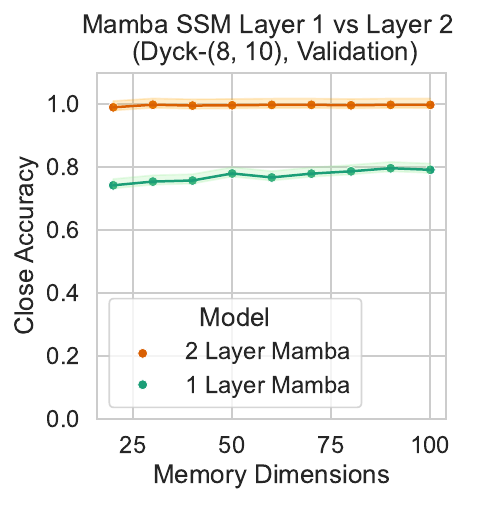}
        \label{fig:bdyck_val}
    \end{minipage}
    \begin{minipage}[b]{0.32\textwidth}
        \includegraphics[width=\textwidth]{images/test_close_acc_comparison.pdf}
        \label{fig:bdyck_val3}
    \end{minipage}    
    \caption{Mamba Accuracy on  $Dyck_{8,10}$, on the development set (length $\leq 700$, same length range as training set) and test set (length $700 \leq n \leq 1400$). The latter is also plotted in Figure~\ref{fig:dyck}.}
    \label{fig:three_figures}
\end{figure}

\section{Finite Precision Assumption}\label{app:precision}
As described in Section~\ref{sec:background}, we adopt the \emph{finite precision} notion used by \citet{weiss2018practical}:
We allow an unbounded number of integer bits, but only $p$ fractional bits, where $p$ is a sufficiently large constant (e.g., $p=8$), independent of the length of the input.

There are a variety of related precision notions in the theoretical literature on neural sequence models -- here, we discuss the effect of other notions on our results:
\begin{enumerate}
\item \textbf{Infinite precision} Infinite precision allows any parameter and intermediate value to be an arbitrary number. Such a setting is unrealistic, as it would allow encoding arbitrary detail about the input into infinite precision  \citep[e.g.][]{siegelmann2012neural} and read these out with sufficiently powerful functions ($A$, $B$, $\phi$) in (\ref{eq:recog-set}) -- this would lead to the unrealistic conclusion that any function and language could be represented. For this reason, theoretical work has often adopted restricted precision notions.

\item \textbf{Finite inventory of values}, where integer and fractional bits are both restricted. 
Such a setup may be justified based on the fact that any real computer has bounded memory, though such a setup precludes \emph{any} positive results on non-finite-state problems for \emph{any} computational architecture.\footnote{For instance, a Turing machine with bounded memory and thus a bounded tape is equivalent to a finite-state automaton.}

Such a restrictive setup would not affect our positive results on Flip-Flop, Star-Free, and bounded-depth Dyck languages (Theorems~\ref{thm:flip_flop}, \ref{thm:regular}, \ref{thm:bounded-dyck}), as these all use \emph{bounded} finite-precision activation values. As this is a \emph{more} restricted setup than the one we are assuming, this also would not affect our negative results about PARITY and non-star-free languages (Theorems~\ref{thm:parity}, \ref{thm:regular}). These results are thus highly robust to variations of the finite precision assumption.

Such a more restrictive definition would, however, mean that, for unbounded counting (Theorem~\ref{thm:counter-languages}), modeling is only possible up to a bound determined by the number of possible values---this is the one place where our results would be impacted. Indeed, we do observe that Mamba learns these counter languages on training lengths but struggles with length generalization. Transformers, on the other hand, can represent these languages with bounded activations (due to the constructions in \cite{bhattamishra2020ability}), and show strong length generalization.

An intermediary between infinite and finite precision is notions of precision where the number of allowed bits slowly increases with the input length, e.g., logarithmically. Such a setup has particularly been adopted for transformers \citep{merrill2023logic}, because a finite-precision assumption leads to very low expressivity in transformers. For SSMs, on the other hand, we find that finite precision assumptions are sufficient for showing a broad range of positive results.

\end{enumerate}

\newpage
\section*{NeurIPS Paper Checklist}

%%% BEGIN INSTRUCTIONS %%%

%%% END INSTRUCTIONS %%%

\begin{enumerate}

\item {\bf Claims}
    \item[] Question: Do the main claims made in the abstract and introduction accurately reflect the paper's contributions and scope?
    \item[] Answer: \answerYes{} % Replace by \answerYes{}, \answerNo{}, or \answerNA{}.
    \item[] Justification: The abstract and introduction summarise the theoretical and empirical results. We took to only include well-supported claims.
    \item[] Guidelines:
    \begin{itemize}
        \item The answer NA means that the abstract and introduction do not include the claims made in the paper.
        \item The abstract and/or introduction should clearly state the claims made, including the contributions made in the paper and important assumptions and limitations. A No or NA answer to this question will not be perceived well by the reviewers. 
        \item The claims made should match theoretical and experimental results, and reflect how much the results can be expected to generalize to other settings. 
        \item It is fine to include aspirational goals as motivation as long as it is clear that these goals are not attained by the paper. 
    \end{itemize}

\item {\bf Limitations}
    \item[] Question: Does the paper discuss the limitations of the work performed by the authors?
    \item[] Answer: \answerYes{} % Replace by \answerYes{}, \answerNo{}, or \answerNA{}.
    \item[] Justification: The Discussion section includes a paragraph on Limitations.
    \item[] Guidelines:
    \begin{itemize}
        \item The answer NA means that the paper has no limitation while the answer No means that the paper has limitations, but those are not discussed in the paper. 
        \item The authors are encouraged to create a separate "Limitations" section in their paper.
        \item The paper should point out any strong assumptions and how robust the results are to violations of these assumptions (e.g., independence assumptions, noiseless settings, model well-specification, asymptotic approximations only holding locally). The authors should reflect on how these assumptions might be violated in practice and what the implications would be.
        \item The authors should reflect on the scope of the claims made, e.g., if the approach was only tested on a few datasets or with a few runs. In general, empirical results often depend on implicit assumptions, which should be articulated.
        \item The authors should reflect on the factors that influence the performance of the approach. For example, a facial recognition algorithm may perform poorly when image resolution is low or images are taken in low lighting. Or a speech-to-text system might not be used reliably to provide closed captions for online lectures because it fails to handle technical jargon.
        \item The authors should discuss the computational efficiency of the proposed algorithms and how they scale with dataset size.
        \item If applicable, the authors should discuss possible limitations of their approach to address problems of privacy and fairness.
        \item While the authors might fear that complete honesty about limitations might be used by reviewers as grounds for rejection, a worse outcome might be that reviewers discover limitations that aren't acknowledged in the paper. The authors should use their best judgment and recognize that individual actions in favor of transparency play an important role in developing norms that preserve the integrity of the community. Reviewers will be specifically instructed to not penalize honesty concerning limitations.
    \end{itemize}

\item {\bf Theory Assumptions and Proofs}
    \item[] Question: For each theoretical result, does the paper provide the full set of assumptions and a complete (and correct) proof?
    \item[] Answer: \answerYes{} % Replace by \answerYes{}, \answerNo{}, or \answerNA{}.
    \item[] Justification: Every theorem is proven formally in the appendix. Assumptions about the SSM architecture are stated  formally.
    \item[] Guidelines:
    \begin{itemize}
        \item The answer NA means that the paper does not include theoretical results. 
        \item All the theorems, formulas, and proofs in the paper should be numbered and cross-referenced.
        \item All assumptions should be clearly stated or referenced in the statement of any theorems.
        \item The proofs can either appear in the main paper or the supplemental material, but if they appear in the supplemental material, the authors are encouraged to provide a short proof sketch to provide intuition. 
        \item Inversely, any informal proof provided in the core of the paper should be complemented by formal proofs provided in appendix or supplemental material.
        \item Theorems and Lemmas that the proof relies upon should be properly referenced. 
    \end{itemize}

    \item {\bf Experimental Result Reproducibility}
    \item[] Question: Does the paper fully disclose all the information needed to reproduce the main experimental results of the paper to the extent that it affects the main claims and/or conclusions of the paper (regardless of whether the code and data are provided or not)?
    \item[] Answer: \answerYes{} % Replace by \answerYes{}, \answerNo{}, or \answerNA{}.
    \item[] Justification: Besides describing experimental details, we have publicly released our codebase on GitHub and added a link to it in the main paper with instructions on how to run our experiments, ensuring that our findings can be reproduced. 
    \item[] Guidelines:
    \begin{itemize}
        \item The answer NA means that the paper does not include experiments.
        \item If the paper includes experiments, a No answer to this question will not be perceived well by the reviewers: Making the paper reproducible is important, regardless of whether the code and data are provided or not.
        \item If the contribution is a dataset and/or model, the authors should describe the steps taken to make their results reproducible or verifiable. 
        \item Depending on the contribution, reproducibility can be accomplished in various ways. For example, if the contribution is a novel architecture, describing the architecture fully might suffice, or if the contribution is a specific model and empirical evaluation, it may be necessary to either make it possible for others to replicate the model with the same dataset, or provide access to the model. In general. releasing code and data is often one good way to accomplish this, but reproducibility can also be provided via detailed instructions for how to replicate the results, access to a hosted model (e.g., in the case of a large language model), releasing of a model checkpoint, or other means that are appropriate to the research performed.
        \item While NeurIPS does not require releasing code, the conference does require all submissions to provide some reasonable avenue for reproducibility, which may depend on the nature of the contribution. For example
        \begin{enumerate}
            \item If the contribution is primarily a new algorithm, the paper should make it clear how to reproduce that algorithm.
            \item If the contribution is primarily a new model architecture, the paper should describe the architecture clearly and fully.
            \item If the contribution is a new model (e.g., a large language model), then there should either be a way to access this model for reproducing the results or a way to reproduce the model (e.g., with an open-source dataset or instructions for how to construct the dataset).
            \item We recognize that reproducibility may be tricky in some cases, in which case authors are welcome to describe the particular way they provide for reproducibility. In the case of closed-source models, it may be that access to the model is limited in some way (e.g., to registered users), but it should be possible for other researchers to have some path to reproducing or verifying the results.
        \end{enumerate}
    \end{itemize}

\item {\bf Open access to data and code}
    \item[] Question: Does the paper provide open access to the data and code, with sufficient instructions to faithfully reproduce the main experimental results, as described in supplemental material?
    \item[] Answer: \answerYes{} % Replace by \answerYes{}, \answerNo{}, or \answerNA{}.
    \item[] Yes, we have publicly released our codebase on GitHub and added a link to it in the main paper with instructions on how to run our experiments. 
    \item[] Guidelines:
    \begin{itemize}
        \item The answer NA means that paper does not include experiments requiring code.
        \item Please see the NeurIPS code and data submission guidelines (\url{https://nips.cc/public/guides/CodeSubmissionPolicy}) for more details.
        \item While we encourage the release of code and data, we understand that this might not be possible, so “No” is an acceptable answer. Papers cannot be rejected simply for not including code, unless this is central to the contribution (e.g., for a new open-source benchmark).
        \item The instructions should contain the exact command and environment needed to run to reproduce the results. See the NeurIPS code and data submission guidelines (\url{https://nips.cc/public/guides/CodeSubmissionPolicy}) for more details.
        \item The authors should provide instructions on data access and preparation, including how to access the raw data, preprocessed data, intermediate data, and generated data, etc.
        \item The authors should provide scripts to reproduce all experimental results for the new proposed method and baselines. If only a subset of experiments are reproducible, they should state which ones are omitted from the script and why.
        \item At submission time, to preserve anonymity, the authors should release anonymized versions (if applicable).
        \item Providing as much information as possible in supplemental material (appended to the paper) is recommended, but including URLs to data and code is permitted.
    \end{itemize}

\item {\bf Experimental Setting/Details}
    \item[] Question: Does the paper specify all the training and test details (e.g., data splits, hyperparameters, how they were chosen, type of optimizer, etc.) necessary to understand the results?
    \item[] Answer: \answerYes{} % Replace by \answerYes{}, \answerNo{}, or \answerNA{}.
    \item[] Justification: We have specified the training and test details in a dedicated appendix section. Our experiments use freely available dataset or dataset generating scripts from previous studies, which we cite. Our uploaded code base ensures that all settings are reproducible.
    \item[] Guidelines:
    \begin{itemize}
        \item The answer NA means that the paper does not include experiments.
        \item The experimental setting should be presented in the core of the paper to a level of detail that is necessary to appreciate the results and make sense of them.
        \item The full details can be provided either with the code, in appendix, or as supplemental material.
    \end{itemize}

\item {\bf Experiment Statistical Significance}
    \item[] Question: Does the paper report error bars suitably and correctly defined or other appropriate information about the statistical significance of the experiments?
    \item[] Answer: \answerNo{} % Replace by \answerYes{}, \answerNo{}, or \answerNA{}.
    \item[] Justification: In the empirical results relevant to supporting our positive theoretical results, the accuracy attained by Mamba is either very close to 100\%  or far from 100\%. As the test sets include hundreds or thousands of data points, error bars would not add substantial further information.
    \item[] Guidelines:
    \begin{itemize}
        \item The answer NA means that the paper does not include experiments.
        \item The authors should answer "Yes" if the results are accompanied by error bars, confidence intervals, or statistical significance tests, at least for the experiments that support the main claims of the paper.
        \item The factors of variability that the error bars are capturing should be clearly stated (for example, train/test split, initialization, random drawing of some parameter, or overall run with given experimental conditions).
        \item The method for calculating the error bars should be explained (closed form formula, call to a library function, bootstrap, etc.)
        \item The assumptions made should be given (e.g., Normally distributed errors).
        \item It should be clear whether the error bar is the standard deviation or the standard error of the mean.
        \item It is OK to report 1-sigma error bars, but one should state it. The authors should preferably report a 2-sigma error bar than state that they have a 96\% CI, if the hypothesis of Normality of errors is not verified.
        \item For asymmetric distributions, the authors should be careful not to show in tables or figures symmetric error bars that would yield results that are out of range (e.g. negative error rates).
        \item If error bars are reported in tables or plots, The authors should explain in the text how they were calculated and reference the corresponding figures or tables in the text.
    \end{itemize}

\item {\bf Experiments Compute Resources}
    \item[] Question: For each experiment, does the paper provide sufficient information on the computer resources (type of compute workers, memory, time of execution) needed to reproduce the experiments?
    \item[] Answer: \answerYes{} % Replace by \answerYes{}, \answerNo{}, or \answerNA{}.
    \item[] Justification: We have provided information in the appendix section on experimental setup.
    \item[] Guidelines:
    \begin{itemize}
        \item The answer NA means that the paper does not include experiments.
        \item The paper should indicate the type of compute workers CPU or GPU, internal cluster, or cloud provider, including relevant memory and storage.
        \item The paper should provide the amount of compute required for each of the individual experimental runs as well as estimate the total compute. 
        \item The paper should disclose whether the full research project required more compute than the experiments reported in the paper (e.g., preliminary or failed experiments that didn't make it into the paper). 
    \end{itemize}
    
\item {\bf Code Of Ethics}
    \item[] Question: Does the research conducted in the paper conform, in every respect, with the NeurIPS Code of Ethics \url{https://neurips.cc/public/EthicsGuidelines}?
    \item[] Answer: \answerYes{} % Replace by \answerYes{}, \answerNo{}, or \answerNA{}.
    \item[] Justification: We have verified compliance with the NeurIPS Code of Ethics,
    \item[] Guidelines:
    \begin{itemize}
        \item The answer NA means that the authors have not reviewed the NeurIPS Code of Ethics.
        \item If the authors answer No, they should explain the special circumstances that require a deviation from the Code of Ethics.
        \item The authors should make sure to preserve anonymity (e.g., if there is a special consideration due to laws or regulations in their jurisdiction).
    \end{itemize}

\item {\bf Broader Impacts}
    \item[] Question: Does the paper discuss both potential positive societal impacts and negative societal impacts of the work performed?
    \item[] Answer: \answerNA{} % Replace by \answerYes{}, \answerNo{}, or \answerNA{}.
    \item[] Justification: The paper studies foundational properties of neural network architectures, and we foresee no positive or negative societal impact of the results.
    \item[] Guidelines:
    \begin{itemize}
        \item The answer NA means that there is no societal impact of the work performed.
        \item If the authors answer NA or No, they should explain why their work has no societal impact or why the paper does not address societal impact.
        \item Examples of negative societal impacts include potential malicious or unintended uses (e.g., disinformation, generating fake profiles, surveillance), fairness considerations (e.g., deployment of technologies that could make decisions that unfairly impact specific groups), privacy considerations, and security considerations.
        \item The conference expects that many papers will be foundational research and not tied to particular applications, let alone deployments. However, if there is a direct path to any negative applications, the authors should point it out. For example, it is legitimate to point out that an improvement in the quality of generative models could be used to generate deepfakes for disinformation. On the other hand, it is not needed to point out that a generic algorithm for optimizing neural networks could enable people to train models that generate Deepfakes faster.
        \item The authors should consider possible harms that could arise when the technology is being used as intended and functioning correctly, harms that could arise when the technology is being used as intended but gives incorrect results, and harms following from (intentional or unintentional) misuse of the technology.
        \item If there are negative societal impacts, the authors could also discuss possible mitigation strategies (e.g., gated release of models, providing defenses in addition to attacks, mechanisms for monitoring misuse, mechanisms to monitor how a system learns from feedback over time, improving the efficiency and accessibility of ML).
    \end{itemize}
    
\item {\bf Safeguards}
    \item[] Question: Does the paper describe safeguards that have been put in place for responsible release of data or models that have a high risk for misuse (e.g., pretrained language models, image generators, or scraped datasets)?
    \item[] Answer: \answerNA{} % Replace by \answerYes{}, \answerNo{}, or \answerNA{}.
    \item[] Justification: We are not releasing data or models, and foresee no risk of misues of our theoretical results.
    \item[] Guidelines:
    \begin{itemize}
        \item The answer NA means that the paper poses no such risks.
        \item Released models that have a high risk for misuse or dual-use should be released with necessary safeguards to allow for controlled use of the model, for example by requiring that users adhere to usage guidelines or restrictions to access the model or implementing safety filters. 
        \item Datasets that have been scraped from the Internet could pose safety risks. The authors should describe how they avoided releasing unsafe images.
        \item We recognize that providing effective safeguards is challenging, and many papers do not require this, but we encourage authors to take this into account and make a best faith effort.
    \end{itemize}

\item {\bf Licenses for existing assets}
    \item[] Question: Are the creators or original owners of assets (e.g., code, data, models), used in the paper, properly credited and are the license and terms of use explicitly mentioned and properly respected?
    \item[] Answer: \answerYes{} % Replace by \answerYes{}, \answerNo{}, or \answerNA{}.
    \item[] Justification: The paper cites the original papers, and provides details in the Appendix section on experimental details.
    \item[] Guidelines:
    \begin{itemize}
        \item The answer NA means that the paper does not use existing assets.
        \item The authors should cite the original paper that produced the code package or dataset.
        \item The authors should state which version of the asset is used and, if possible, include a URL.
        \item The name of the license (e.g., CC-BY 4.0) should be included for each asset.
        \item For scraped data from a particular source (e.g., website), the copyright and terms of service of that source should be provided.
        \item If assets are released, the license, copyright information, and terms of use in the package should be provided. For popular datasets, \url{paperswithcode.com/datasets} has curated licenses for some datasets. Their licensing guide can help determine the license of a dataset.
        \item For existing datasets that are re-packaged, both the original license and the license of the derived asset (if it has changed) should be provided.
        \item If this information is not available online, the authors are encouraged to reach out to the asset's creators.
    \end{itemize}

\item {\bf New Assets}
    \item[] Question: Are new assets introduced in the paper well documented and is the documentation provided alongside the assets?
    \item[] Answer: \answerNA{} % Replace by \answerYes{}, \answerNo{}, or \answerNA{}.
    \item[] Justification: The paper releases no new assets.
    \item[] Guidelines:
    \begin{itemize}
        \item The answer NA means that the paper does not release new assets.
        \item Researchers should communicate the details of the dataset/code/model as part of their submissions via structured templates. This includes details about training, license, limitations, etc. 
        \item The paper should discuss whether and how consent was obtained from people whose asset is used.
        \item At submission time, remember to anonymize your assets (if applicable). You can either create an anonymized URL or include an anonymized zip file.
    \end{itemize}

\item {\bf Crowdsourcing and Research with Human Subjects}
    \item[] Question: For crowdsourcing experiments and research with human subjects, does the paper include the full text of instructions given to participants and screenshots, if applicable, as well as details about compensation (if any)? 
    \item[] Answer: \answerNA{} % Replace by \answerYes{}, \answerNo{}, or \answerNA{}.
    \item[] Justification: The paper involves no crowdsourcing nor human subjects research.
    \item[] Guidelines:
    \begin{itemize}
        \item The answer NA means that the paper does not involve crowdsourcing nor research with human subjects.
        \item Including this information in the supplemental material is fine, but if the main contribution of the paper involves human subjects, then as much detail as possible should be included in the main paper. 
        \item According to the NeurIPS Code of Ethics, workers involved in data collection, curation, or other labor should be paid at least the minimum wage in the country of the data collector. 
    \end{itemize}

\item {\bf Institutional Review Board (IRB) Approvals or Equivalent for Research with Human Subjects}
    \item[] Question: Does the paper describe potential risks incurred by study participants, whether such risks were disclosed to the subjects, and whether Institutional Review Board (IRB) approvals (or an equivalent approval/review based on the requirements of your country or institution) were obtained?
    \item[] Answer: \answerNA{} % Replace by \answerYes{}, \answerNo{}, or \answerNA{}.
    \item[] Justification: The paper involves no crowdsourcing nor human subjects research.
    \item[] Guidelines:
    \begin{itemize}
        \item The answer NA means that the paper does not involve crowdsourcing nor research with human subjects.
        \item Depending on the country in which research is conducted, IRB approval (or equivalent) may be required for any human subjects research. If you obtained IRB approval, you should clearly state this in the paper. 
        \item We recognize that the procedures for this may vary significantly between institutions and locations, and we expect authors to adhere to the NeurIPS Code of Ethics and the guidelines for their institution. 
        \item For initial submissions, do not include any information that would break anonymity (if applicable), such as the institution conducting the review.
    \end{itemize}

\end{enumerate}

\end{document}